\newtheorem{lemma}{Lemma}
\newtheorem{theorem}{Theorem}
\newtheorem{corollary}{Corollary}
\newtheorem{definition}{Definition}
\newtheorem{assumption}{Assumption}
\newtheorem{proposition}{Proposition}
\newtheorem{remark}{Remark}
\theoremstyle{definition}
\newcommand{\X}{\mathcal{X}}
\newcommand{\Xspl}{X_{[n]}}
\newcommand{\real}{\mathbb{R}}
\newcommand{\norm}[1]{\left\|#1\right\|}
\newcommand{\braces}[1]{\left\{#1\right\}}
\DeclareMathOperator*{\argmax}{argmax}
\DeclareMathOperator*{\argmin}{argmin}
\title{Modal-set estimation with an application to clustering}
\author{
  Heinrich Jiang \thanks{Much of this work was done when this author was at Princeton University Mathematics Department.}\\
  \texttt{heinrich.jiang@gmail.com} \\
  \And
  Samory Kpotufe\\
  ORFE, Princeton University\\
  \texttt{samory@princeton.edu}\\
  %% examples of more authors
  %% \And
  %% Coauthor \\
  %% Affiliation \\
  %% Address \\
  %% \texttt{email} \\
  %% \AND
  %% Coauthor \\
  %% Affiliation \\
  %% Address \\
  %% \texttt{email} \\
  %% \And
  %% Coauthor \\
  %% Affiliation \\
  %% Address \\
  %% \texttt{email} \\
  %% \And
  %% Coauthor \\
  %% Affiliation \\
  %% Address \\
  %% \texttt{email} \\
}
\begin{document}
% \nipsfinalcopy is no longer used

\maketitle

\begin{abstract}
{\bf Abstract}. We present a first procedure that can estimate -- with statistical consistency guarantees -- any local-maxima of a density, under benign distributional conditions. The procedure estimates all such local maxima, or \emph{modal-sets}, of any bounded shape or dimension, including usual point-modes. In practice, modal-sets can arise as dense low-dimensional structures in noisy data, and more generally serve to better model the rich variety of locally-high-density structures in data. 

The procedure is then shown to be competitive on clustering applications, and moreover is quite stable to a wide range of settings of its tuning parameter. 
\end{abstract}

\section{Introduction}
Mode estimation is a basic problem in data analysis. 
 Modes, i.e. points of locally high density, serve as a measure of central tendency and are therefore important in unsupervised problems such as outlier detection, image or audio segmentation, and clustering in particular (as cluster cores). In the present work, we are interested in capturing a wider generality of \emph{modes}, i.e. general structures (other than single-points) of locally high density, that can arise in modern data. 

For example, application data in $\real^d$ (e.g. speech, vision) are often well modeled as arising from 
a lower-dimensional structure $M$ + noise. In other words, such data is densest on $M$, hence 
the ambient density $f$ is more closely modeled as locally maximal at (or near) $M$, a nontrivial subset of $\real^d$, 
rather than maximal only at single points in $\real^d$. Such a situation is illustrated in Figure \ref{fig:clustercores}. 

We therefore extend the notion of \emph{mode} to any connected 
subset of $\real^d$ where the unknown density $f$ is locally maximal; we refer to these as \emph{modal-sets} of $f$.  A modal-set can be of any bounded shape and dimension, from $0$-dimensional (point modes), to full dimensional surfaces, and aim to capture the possibly rich variety of dense structures in data. 

Our main contribution is a procedure, M(odal)-cores, that consistently estimates all such modal-sets from data, of general shape and dimension, with minimal assumption on the unknown $f$. The procedure builds on recent developments in topological data analysis 
\cite{SN10, CD10, KV11, RSNW12, balakrishnan2013cluster, CDKvL14, eldridge2015beyond}, and works by traversing certain $k$-NN graphs which encode level sets of a $k$-NN density estimate. We show that, if $f$ is continuous on compact support, the Hausdorff distance between any modal-set and its estimate vanishes as $n\to \infty$ (Theorem \ref{theo:main}); the estimation rate for point-modes matches (up to $\log n$) the known minimax rates. Furthermore, under mild additional smoothness condition on $f$ (H\"older continuity), \emph{false} structures (due to empirical variability) are correctly identified and pruned. We know of no such general statistical guarantees in mode estimation. 

While there is often a gap between theoretical procedures and practical ones, the present procedure is easy to implement and yields competitive scores on clustering applications; here, as in \emph{mode-based clustering}, clusters are simply defined as regions of high-density of the data, and the estimated modal-sets serve as the centers of these regions, i.e. as  \emph{cluster-cores}. A welcome aspect of the resulting clustering procedure is its stability to tuning 
settings of the parameter $k$ (from $k$-NN): it maintains high clustering scores (computed with knowledge of the ground-truth) over a wide range of settings of $k$, for various datasets. 
 Such stability to tuning is of practical importance, since typically the ground-truth is unknown, so clustering procedures  come with tuning parameters that are hard to set in practice. Practitioners therefore use various rule-of-thumbs and can thus benefit from procedures that are less-sensitive to their hyperparameters. 
 
 %Such stability is likely due to the fact that the procedure relies on mild distributional assumptions about $f$ and its local maxima, and is in fact consistent for a wide range of admissible values of $k$. 

In the next section we put our result in context with respect to previous work on mode estimation and density-based clustering in general.

\begin{figure} 
\centering 
\includegraphics[width=0.25\textwidth]{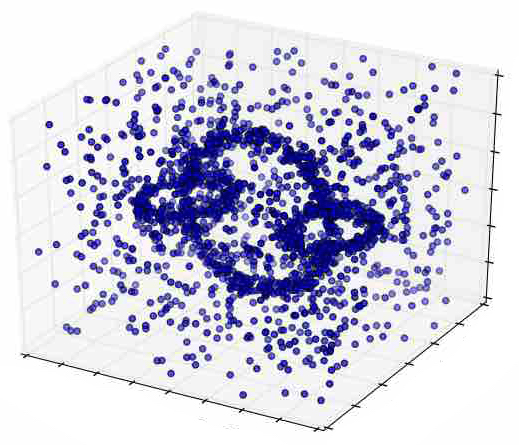}
\includegraphics[width=0.25\textwidth]{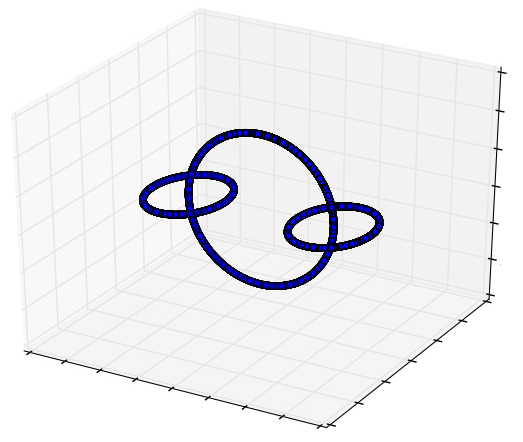}
\includegraphics[width=0.25\textwidth]{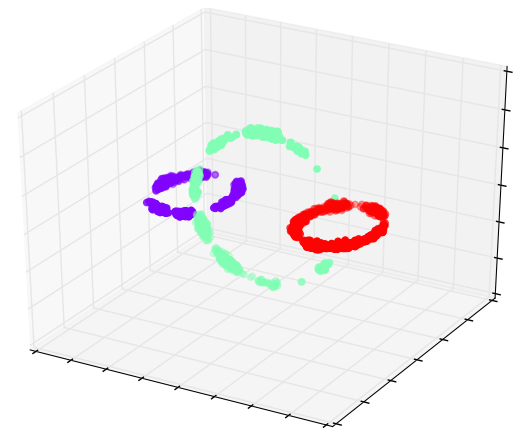}

\label{fig:clustercores}
\caption{Main phase of M-cores. (Left) Points-cloud generated as three 1-dimensional rings + noise. (Middle) The 3 rings, and (Right) their estimate (as modal-sets) by M-cores.}
\end{figure}

\subsection*{Related Work} 
$\bullet$ Much theoretical work on mode-estimation is concerned with understanding the statistical difficulty of the problem, and as such, often only considers the case of densities with single point-modes \cite{parzen1962estimation, chernoff1964estimation, eddy1980optimum, devroye1979recursive, tsybakov1990recursive, abraham2004asymptotic}. 
The more practical case of densities with multiple point-modes has received less attention in the theoretical literature. However there exist practical estimators, e.g., the popular \emph{Mean-Shift} procedure (which doubles as a clustering procedure), which are however harder to analyze. Recently, \cite{arias2013estimation} shows the consistency of a variant of Mean-Shift. Other recent work of \cite{genovese2013nonparametric} derives a method for pruning false-modes obtained by mode-seeking procedures. Also recent, the work of \cite{dasgupta2014optimal} shows that point-modes of a $k$-NN density estimate $f_k$ approximate the true modes of the unknown density $f$, assuming $f$ only has point-modes and bounded Hessian at the modes; their procedure, therefore operates on level-sets of $f_k$ (similar to ours), but fails in the presence of more general high-density structures such as modal-sets. To handle such general structures, we have to identify more appropriate level-sets to operate on, the main technical difficulty being that local-maxima of $f_k$ can be relatively far (in Hausdorff) from those of $f$, for instance single-point modes rather than more general modal-sets, due to data-variability. 
The present procedure handles general structures, and is consistent under the much weaker conditions of continuity (of $f$) on a compact domain. %The main algorithmic difficulty is that $f_k$ itself might only have point-modes (due to data-variability) even when $f$ has general modal-sets. 

A related line of work, which seeks more general structures than point-modes, is that of \emph{ridge} estimation (see e.g. \citep{ozertem2011locally, genovese2014nonparametric}). A ridge is typically defined as a lower-dimensional structure away from which the density curves (in some but not all directions), and can serve to capture various lower-dimensional patterns apparent in point clouds. In contrast, the modal-sets defined here can be full-dimensional and are always local maxima of the density. Also, unlike in ridge estimation, we do not require local differentiability of the unknown $f$, nor knowledge of the dimension of the structure, thus allowing a different but rich set of practical structures. 

$\bullet$ A main application of the present work, and of mode-estimation in general, is \emph{density-based clustering}.  Such clustering was formalized in early work of \cite{carmichael1968finding, hartigan1975clustering, H81}, and can take various forms, each with their advantage. 

In its hierarchical version, one is interested in estimating the connected components (CCs) of \emph{all} level sets 
$\braces{f \ge \lambda}_{\lambda>0}$ of the unknown density $f$. Many recent works analyze approaches that consistently estimate such a hierarchy under quite general conditions, e.g. \cite{SN10, CD10, KV11, RSNW12, balakrishnan2013cluster, CDKvL14, eldridge2015beyond}. 

In the \emph{flat} clustering version, one is interested in estimating the CCs of $\braces{f \ge \lambda}$ for a single $\lambda$, somehow appropriately chosen 
\citep{RV09, SSN09, MHL09, RW10, S11, sriperumbudur2012consistency}. The popular DBSCAN procedure  \citep{ester1996density} can be viewed as estimating such single level set. The main disadvantage here is in the ambiguity in the choice of $\lambda$, especially when the levels $\lambda$ of $f$ have different numbers of clusters (CCs). 

% but can also be viewed as clustering points around (estimated) modes of $f$. 

Another common flat clustering approach, most related to the present work, is \emph{mode-based} clustering. The approach clusters points to estimated modes of $f$, a fixed target, and therefore does away with the ambiguity in choosing an appropriate level $\lambda$ of $f$ \citep{fukunaga1975estimation, cheng1995mean, comaniciu2002mean, li2007nonparametric, chazal2013persistence}. As previously discussed, 
these approaches are however hard to analyze in that mode-estimation is itself not an easy problem. Popular examples are extensions of $k$-Means to categorical data \cite{chaturvedi2001k}, and the many variants of Mean-Shift which cluster points by gradient ascent to the closest mode. 
%We unfortunately can only highlight some of the recent results towards understanding such procedures. In particular, \cite{arias2013estimation} recently shows the consistency of a variant of Mean-Shift. 
Notably, the recent work \cite{wasserman2014feature} analyzes clustering error of Mean-Shift in a general high-dimensional setting with potentially irrelevant features. The main assumption is that $f$ only has point-modes.

\section{Overview of Results}
\label{sec:Overview}
%Overview 
\subsection{Basic Setup and Definitions}
We have samples $X_{[n]} = \{X_1,...,X_n\}$ drawn i.i.d. from 
a distribution $\mathcal{F}$ over $\mathbb{R}^d$ with density $f$. We let $\X$ denote the support of $f$. 
Our main aim is to estimate all local maxima of $f$, or \emph{modal-sets} of $f$, as we will soon define. 

We first require the following notions of distance between sets. 
\begin{definition}\label{ball} 
For $M\subset \X$, $x\in \X$, let $d(x, M) := \inf_{x'\in M} \norm{x - x'}$. 
The {\bf Hausdorff} distance between $A, B \subset \X$ is defined as 
$d(A,B) := \max \{ \sup_{x \in A} d(x, B), \sup_{y \in B} d(y, A) \}.$
\end{definition}

A modal set, defined below, extends the notion of a point-mode to general subsets of $\X$ 
where $f$ is locally maximal. These can arise for instance, as discussed earlier, in applications where high-dimensional 
data might be modeled as a (disconnected) manifold $\mathcal{M}$ + ambient noise, each connected component of which 
induces a modal set of $f$ in ambient space $\real^D$ (see e.g. Figure \ref{fig:clustercores}). 

\begin{definition}\label{modal-set} For any $M \subset \X$ and $r>0$, define the \emph{envelope} $B(M, r) := \{x : d(x, M) \leq r\}$. A connected set $M$ is a {\bf modal-set} of $f$ if  $\forall x \in M$, $f(x) = f_M$ for some fixed $f_M$, and there exist $r>0$ such that $f(x) < f_M$ for all $x \in B(M, r) \backslash M$. 
\end{definition}

\begin{remark} 
The above definition can be relaxed to \emph{$\epsilon_0$-modal sets}, i.e., to allow $f$ to vary by a small $\epsilon_0$ on $M$. Our results extend easily to this more relaxed definition, with minimal changes to some constants. This is because the procedure operates on $f_k$, and therefore already needs to account for variations in $f_k$  
on $M$. This is described in Appendix~\ref{appendix:eps-modal-set}. 
\end{remark}

\subsection{Estimating Modal-sets}
The algorithm relies on nearest-neighbor density estimate $f_k$, defined as follows. 

\begin{definition} \label{kNNdensity} Let $r_k(x) := \min \{ r : |B(x, r) \cap \Xspl | \ge k \}$. 
Define the {\bf $k$-NN density estimate} as
$$f_k(x) := \frac{k}{n\cdot v_d\cdot r_k(x)^d},
\text{where } v_d \text{ is the volume of a unit sphere in } \mathbb{R}^d.$$
\end{definition}

Furthermore, we need an estimate of the level-sets of $f$; various recent work on cluster-tree estimation (see e.g. \cite{CDKvL14}) have shown that such level sets are encoded by subgraphs of certain \emph{modified} $k$-NN graphs. 
Here however, we directly use $k$-NN graphs, simplifying implementation details, but requiring a bit of side analysis.

\begin{definition}
 Let $G(\lambda)$ denote the (mutual) {\bf $k$-NN graph} with vertices $\{ x \in X_{[n]} : f_k(x) \ge \lambda\}$ and an edge between $x$ and $x'$ iff $||x - x'|| \le \min \{r_k(x), r_k(x') \}$. 
 \end{definition}
 $G(\lambda)$ can be viewed as approximating the $\lambda$-level set of $f_k$, hence approximates the $\lambda$-level set of $f$ (implicit in the connectedness result in Appendix~\ref{appendix:integrality}).  
 
Algorithm \ref{alg:modalset} (M-cores) estimates the modal-sets of the unknown $f$.
It is based on various insights described below. 
%, while in a second phase, 
%Algorithm \ref{alg:modalset-clust} clusters the rest of points around estimated model-sets. 
%We next describe the main insights followed in Algorithm \ref{alg:modalset}, our main procedure.
A basic idea, used for instance in point-mode estimation \cite{dasgupta2014optimal}, is to proceed top-down on the level sets of $f_k$ (i.e. on $G(\lambda), \, \lambda \to 0$), and identify new modal-sets as they appear in separate CCs at a level $\lambda$. 

Here we have to however be careful: the CCs of $G(\lambda)$ (essentially modes of $f_k$) might be 
singleton points (since $f_k$ might take unique values over samples $x\in \Xspl$) while the modal-sets to be estimated might be of any dimension and shape. Fortunately, if a datapoint $x$, locally maximizes $f_k$, and belongs to some modal-set $M$ of $f$, then the rest of $M\cap \Xspl$ must be at a nearby level; Algorithm \ref{alg:modalset} therefore proceeds by checking a nearby level ($\lambda - 9\beta_k \lambda$) from which it picks a specific set of points as an estimate of $M$. The main parameter here is $\beta_k$ which is worked out explicitly in terms of $k$ and requires no a priori knowledge of distributional parameters. The confidence level $\delta$ can be viewed in practice as fixed (e.g. $\delta = 0.05$). The essential algorithmic parameter is therefore just $k$, which, as we will show, can be chosen over a wide range (w.r.t. $n$) while ensuring statistical consistency. 

\begin{definition}Let $0< \delta < 1$. Define $C_{\delta, n} := 16\log(2/\delta)\sqrt{d \log n}$, and define $\beta_k = 4\frac{C_{\delta, n}}{\sqrt{k}}$. \end{definition}

We note that the above definition of $\beta_k$ is somewhat conservative (needed towards theoretical guarantees), since the exact constants $C_{\delta, n}$ turn out to have little effect in implementation. 

A further algorithmic difficulty is that a level $G(\lambda)$ might have too many CCs w.r.t. the ground truth. For example, due to variability in the data, $f_k$ might have more modal-sets than $f$, inducing too many CCs at some level $G(\lambda)$. Fortunately, it can be shown that the nearby level $\lambda - 9\beta_k \lambda$ will likely have the right number of CCs. Such lookups down to lower-level act as a way of \emph{pruning false modal-sets}, and trace back to earlier work \citep{KV11} on pruning cluster-trees. Here, we need further care: 
we run the risk of over-estimating a given $M$ if we look too far down (aggressive pruning), since 
a CC at lower level might contain points \emph{far outside} of a modal-set $M$. 
Therefore, the main difficulty here is in figuring out how far down to look and yet not over-estimate \emph{any} $M$ (to ensure consistency). In particular our lookup \emph{distance} of $9\beta_k \lambda$ is adapted to the level $\lambda$ unlike in aggressive pruning. 

Finally, for clustering with M-cores, we can simply assign every data-point to the closest estimated modal-set (acting as cluster-cores). %This is done in Algorithm \ref{alg:modalset-clust}.   

\begin{algorithm}[tb]
   \caption{M-cores (estimating modal-sets).}
   \label{alg:modalset}
\begin{algorithmic}
   \STATE Initialize $\widehat{\mathcal{M}}:= \emptyset$. Define $\beta_k = 4\frac{C_{\delta, n}}{\sqrt{k}}$.   
   \STATE Sort the $X_i$'s in decreasing order of $f_k$ values (i.e. $f_k(X_i) \geq f_k(X_{i+1})$). 
   \FOR{$i=1$ {\bfseries to} $n$}
   \STATE Define $\lambda := f_k(X_i)$.
   \STATE Let $A$ be the CC of $G(\lambda - 9\beta_k \lambda)$ that contains $X_i$.  \hfill (\rm{i})
   \IF{$A$ is disjoint from all cluster-cores in $\widehat{\mathcal{M}}$}
   \STATE Add $\widehat{M} := \{ x \in A : f_k(x) > \lambda - \beta_k \lambda \}$ to
    $\widehat{\mathcal{M}}$. 
    \ENDIF
%   \FOR{$j=1$ {\bfseries to} $m$}
%        \STATE $\widehat{\mathcal{M}}:= \widehat{\mathcal{M}} \cup \{ \widehat{M}_j \}$
%   \ENDFOR
   \ENDFOR
   \STATE \textbf{return} $\widehat{\mathcal{M}}$. \textit{     // Each $\widehat M\in \widehat{\mathcal{M}}$ is a cluster-core estimating 
   a modal-set of the unknown $f$.}
%   \STATE \textit{Note that this is a set of sets of points. Each element of $\widehat{\mathcal{M}}$ will be a cluster-core. We take each cluster-core to be a predicted cluster. The remaining points (i.e. not in any cluster-core) can be discretionarily assigned to the clusters (i.e. closest cluster-core).}
\end{algorithmic}
\end{algorithm}

% \begin{algorithm}[tb]
%   \caption{Cluster the rest of $\Xspl$.}
%   \label{alg:modalset-clust}
%\begin{algorithmic}
%  \STATE Assign every $X_i\in \Xspl$ to $\argmin_{\widehat{M} \in \widehat{\mathcal{M}}} d(X_i, \widehat M)$. 
%\end{algorithmic}
%\end{algorithm}

\subsection{Consistency Results}
%We require some mild assumptions on the rate of decay of $f$ at the boundaries of modal-sets. 
%In particular the assumptions require modal-sets to be on the interior of $\X$. 
Our consistency results rely on the following mild assumptions. 
\begin{assumption} $f$ is continuous with compact support $\X$. Furthermore $f$ has a finite number of modal-sets 
all in the interior of its support $\X$. 
\label{assumption-main}
\end{assumption}
%When $M$ is clear from context, we drop the subscript on $u$ and $l$. 
%\end{assumption}

%As is common in work on mode or level-set estimation \cite{CD10, balakrishnan2013cluster, CDKvL14, dasgupta2014optimal}, our guarantees require that each modal-set be sufficiently \emph{salient} w.r.t. other modal-sets (i.e. there is a sufficiently wide \emph{valley} between them), and is also satisfied whenever $f$ is uniformly continuous \cite{CDKvL14}. 

We will express the convergence of the procedure explicitly in terms of quantities that characterize the behavior of $f$ at the boundary of every modal set. The first quantity has to do with how \emph{salient} a modal-set, i.e whether it is sufficiently \emph{separated} from other modal sets. We start with the following definition of \emph{separation}. 

\begin{definition}\label{rsalient} 
Two sets $A, A' \subset \mathcal{X}$ are {\bf $r$-separated}, if there exists a set $S$ such that every path from $A$
 to $A'$ crosses $S$ and $\sup_{x \in B(S, r)} f(x) < \inf_{x \in A\cup A'} f(x)$.
%A modal-set $M$ is {\bf $r$-salient} for $r > 0$ when $A_M$ is $r$-separated from $\mathcal{X}^{\lambda_M} \backslash A_M$ by $S_M$ for some set $S_M$.
\end{definition}
%Since we have assumed that $f$ has a finite-number of modal sets, we can simply consider a common saliency parameter $r_s$: 
%\begin{assumption} There exists $r_s>0$, such that \emph{all} modal-sets are ${r_s}$-{salient}. 
%\end{assumption} 
The next quantities characterize the \emph{change} in $f$ in a neighborhood of a modal set $M$. 
The existence of a proper such neighborhood $A_M$, and appropriate functions $u_M$ and $l_M$ capturing smoothness and curvature, follow from the above assumptions on $f$. This is captured in the proposition below. 

\begin{proposition}
\label{prop:main-assumptions}
Let $M$ be a modal-set of $f$. Then there exists a CC $A_M$ of some level-set $\mathcal{X}^{\lambda_M} := \{x : f(x) \ge \lambda_M\}$, containing $M$, such that the following holds. 
%there exists $\lambda_M, A_M, r_M, l_M, u_M, r_s, S_M$ such that the following holds. $A_M$ is a CC of  containing $M$ which satisifies the following
\begin{itemize} 
\item \emph{$A_M$ isolates $M$ by a valley}: $A_M$ does not intersect any other modal-set; and $A_M$ and $\mathcal{X}^{\lambda_M} \backslash A_M$ are $r_s$-separated (by some $S_M$) for some $r_s>0$ independent of $M$. 
\item \emph{$A_M$ is full-dimensional}: $A_M$ contains an envelope $B(M, r_M)$ of $M$, for some $r_M>0$.  
\item \emph{$f$ is both \emph{smooth} and has \emph{curvature} around $M$}: there exist functions $u_M$ and $l_M$, increasing and continuous on $[0, r_M]$, $u_M(0) = l_M(0) = 0$, such that $\forall x \in B(M, r_M)$, 
\begin{align*}
l_M(d(x, M)) \le f_M -  f(x) \le u_M(d(x, M)). 
\end{align*}

\end{itemize}
\end{proposition}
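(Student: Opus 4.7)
The strategy is to choose, for each modal-set $M$, a threshold $\lambda_M < f_M$ sufficiently close to $f_M$ so that the connected component $A_M$ of $\{f \ge \lambda_M\}$ containing $M$ is a small, isolated neighborhood of $M$. As a first step, I would use the modal-set definition together with compactness to pick a single radius $r^* > 0$ such that, for every modal-set $M$, the envelope $B(M, r^*)$ contains no other modal-set and satisfies $f < f_M$ on $B(M, r^*) \setminus M$. This uses two ingredients: (i) two distinct modal-sets are at positive distance (otherwise one would sit in an envelope of the other, contradicting the strict inequality in Definition~\ref{modal-set}), and (ii) there are only finitely many modal-sets by Assumption~\ref{assumption-main}.

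Next, by continuity of $f$ and compactness of $\partial B(M, r^*) \cap \X$, the quantity $\varepsilon^*_M := f_M - \max_{\partial B(M, r^*) \cap \X} f$ is strictly positive, and I would pick any $\lambda_M \in (f_M - \varepsilon^*_M, f_M)$. A path-crossing argument then gives $A_M \subset B(M, r^*)$: if $A_M$ contained some $y$ with $d(y, M) > r^*$, connectedness of $A_M$ would force a continuous path from $M$ to $y$ inside $\{f \ge \lambda_M\}$ meeting $\partial B(M, r^*)$ at some $z$, producing $f(z) \ge \lambda_M > f_M - \varepsilon^*_M$ and contradicting the definition of $\varepsilon^*_M$. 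Uniform continuity of $f$ further supplies $r_M > 0$ with $f > \lambda_M$ on $B(M, r_M)$, so $B(M, r_M) \subset A_M$. This establishes the first two bullets.

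For the $r_s$-separation, set $\eta_M := d(A_M, \X^{\lambda_M} \setminus A_M)$, which is strictly positive since both sets are disjoint closed subsets of the compact $\X$. Define $S_M := \{x \in \X : d(x, A_M) = \eta_M/2\}$. Every continuous path from $A_M$ to another component of $\X^{\lambda_M}$ must traverse a total Euclidean distance of at least $\eta_M$ and hence cross $S_M$, and $S_M$ is disjoint from $\X^{\lambda_M}$ so $\max_{S_M} f < \lambda_M$ by compactness. Uniform continuity of $f$ then yields some $r_s^{(M)} > 0$ with $\sup_{B(S_M, r_s^{(M)})} f < \lambda_M$; taking the minimum over the finitely many modal-sets produces the uniform $r_s$.

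Finally, I would set $u_M(t) := \sup_{y \in B(M, t)} (f_M - f(y))$ and $l_M(t) := \inf\{f_M - f(y) : y \in B(M, r_M),\, d(y, M) \ge t\}$. Both are non-decreasing, vanish at $t = 0$ (since $f = f_M$ on $M$), and satisfy the required two-sided bound by construction; continuity on $[0, r_M]$ follows from uniform continuity of $f$ and the usual upper/lower-semicontinuity arguments for sup/inf over monotone families of compact sets. The main obstacle is the shrinking-component step: controlling the global size of $A_M^\lambda$ as $\lambda \nearrow f_M$ requires the combination of the modal-set envelope with the path-crossing contradiction to rule out ``runaway'' components at levels just below $f_M$; once that is in hand, the remaining pieces reduce to standard compactness-and-continuity reasoning.
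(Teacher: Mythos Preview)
Your argument is essentially correct and takes a genuinely different route from the paper's. The paper proves a more general version (Proposition~\ref{prop:main-assumptions-general}) via a \emph{level-set shrinking} argument: it considers all connected components of $\X^{f_M}$, shows they are pairwise at positive distance $r_{\min}$, and then uses continuity of the one-sided Hausdorff distance $g(t) = d_{H'}(\X^{f_M - t}, \X^{f_M})$ to pick $\lambda_M$ so that each CC of $\X^{\lambda_M}$ stays within $r_{\min}/4$ of the corresponding CC of $\X^{f_M}$; separation and the choice of $S_M$ then fall out of this Hausdorff bound. Your approach instead works modal-set by modal-set from the envelope in Definition~\ref{modal-set}: you bound $f$ on the \emph{boundary} $\partial B(M, r^*)$, pick $\lambda_M$ above that bound, and use a path-crossing argument to trap $A_M$ inside $B(M, r^*)$. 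Your route is more elementary and direct for $\epsilon_0 = 0$; the paper's level-set formulation generalizes more cleanly to $\epsilon_0$-modal-sets (where $M$ is itself a CC of a level set rather than a set with a small envelope), and also yields the separation radius $r_s$ as an explicit fraction of $r_{\min}$ rather than via a second compactness step.

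One point you should tighten: the proposition requires $u_M$ and $l_M$ to be \emph{strictly} increasing (the paper later uses $u_M^{-1}$ and $l_M^{-1}$), but your constructions $u_M(t) = \sup_{B(M,t)}(f_M - f)$ and $l_M(t) = \inf_{\{d(\cdot,M)\ge t\}\cap B(M,r_M)}(f_M - f)$ are only non-decreasing in general. The paper handles this by noting that one can add a small strictly increasing continuous function (e.g.\ a multiple of $t$) to $u_M$, and similarly perturb $l_M$ downward, without disturbing the two-sided bound or the limit at $0$; you should make the same adjustment. A second minor point: when you assert that $A_M$ and $\X^{\lambda_M}\setminus A_M$ are disjoint \emph{closed} sets, the closedness of the latter uses that $\X^{\lambda_M}$ has finitely many CCs, which in turn relies on every CC of a super-level set containing a modal-set (hence finitely many by Assumption~\ref{assumption-main}); this is true but worth one sentence.
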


Finally, our consistency guarantees require the following admissibility condition on $k = k(n)$. 
This condition results, roughly, from needing the density estimate $f_k$ to properly approximate the behavior 
of $f$ in the neighborhood of a modal-set $M$. In particular, we intuitively need $f_k$ values to be smaller for 
points far from $M$ than for points close to $M$, and this should depend on the smoothness and curvature of $f$ 
around $M$ (as captured by $u_M$ and $l_M$). 

\begin{definition} $k$ is {\bf admissible} for a modal-set $M$ if (we let $u_M^{-1}$ denote the inverse of $u_M$):
$$\max \left\{ \left(\frac{24 \sup_{x \in \mathcal{X}} f(x)}{l_M(\min\{r_M, r_s\}/2)} \right)^2, 2^{7 + d}  \right\}\cdot C_{\delta, n}^2 \le k \le \frac{v_d \cdot f_M}{2^{2+2d}} \left(u_M^{-1} \left ( 
f_M\frac{C_{\delta, n}}{2\sqrt{k}}\right) \right)^d \cdot n.$$
\end{definition}

\begin{remark}\label{kadmissible} The admissibility condition on $k$, although seemingly opaque, allows for a wide range of settings of $k$. For example, suppose $u_M(t) = c t^{\alpha}$ for some $c, \alpha > 0$. These are polynomial tail conditions common in mode estimation, following e.g. from H\"older assumptions on $f$. 
Admissibility then (ignoring $\log (1/\delta)$), is immediately seen to correspond to the wide range
$$C_1\cdot \log n \leq k \leq C_2\cdot n^{2\alpha/(2\alpha + d)},$$ where $C_1, C_2$ are constants depending on $M$, but independent of $k$ and $n$. It's clear then that even the simple choice $k = \Theta(log^2 n)$ is always admissible \emph{for any} $M$ for $n$ sufficiently large.
\end{remark}

{\bf Main theorems.} We then have the following two main consistency results for Algorithm \ref{alg:modalset}. Theorem \ref{theo:main} states a rate (in terms of $l_M$ and $u_M$) at which 
any modal-set $M$ is approximated by some estimate in $\widehat{\mathcal{M}}$; Theorem \ref{pruning} establishes \emph{pruning} guarantees. 
\begin{theorem} 
\label{theo:main}
Let $0< \delta < 1$. The following holds with probability at least $1- 6\delta$, simultaneously for all modal-sets $M$ of $f$. Suppose $k$ is admissible for $M$. Then there exists $\widehat{M} \in \widehat{\mathcal{M}}$ such that the following holds. Let $l_M^{-1}$ denote the inverse of $l_M$. 
 \begin{align*}
 d(M, \widehat{M}) \le l_M^{-1}\left(\frac{8C_{\delta,n }}{\sqrt{k}}f_M\right), 
 \text{ which goes to } 0 \text{ as } C_{\delta, n}/\sqrt{k} \rightarrow 0.
 \end{align*}
\end{theorem}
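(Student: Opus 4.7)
The plan is to drive everything from uniform concentration of $f_k$. The first step is a standard sup-norm (actually multiplicative) deviation inequality: with probability at least $1-6\delta$, for every $x \in \mathcal{X}$, $|f_k(x) - f(x)|$ is of order $\beta_k \cdot \max\{f(x), f_k(x)\}$, and the empirical $k$-NN balls accurately approximate true mass balls. The precise form $C_{\delta,n} = 16 \log(2/\delta)\sqrt{d \log n}$ is exactly what makes a VC-type union bound over Euclidean balls go through. A corollary is that the $f_k$-ordering of samples approximately matches the $f$-ordering, so when Algorithm \ref{alg:modalset} processes points in decreasing $f_k$ it effectively visits the true modal-sets in decreasing order of $f_M$.

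Next I would fix a modal-set $M$. Admissibility (lower bound on $k$) combined with concentration guarantees that a sample $X_{i^*} \in B(M, r)$ exists for $r$ slightly larger than the target Hausdorff distance, and that when it is processed we have $\lambda := f_k(X_{i^*}) \in f_M \cdot [1 - \beta_k, 1 + \beta_k]$. The heart of the proof is to sandwich the component $A$ of $G(\lambda - 9\beta_k \lambda)$ containing $X_{i^*}$ between two concentric envelopes of $M$. For outer confinement, any $x \in A$ satisfies $f_k(x) \ge \lambda(1 - 9\beta_k)$, hence $f(x) \ge f_M(1 - O(\beta_k))$; combined with the lower curvature bound $l_M(d(x, M)) \le f_M - f(x)$ and with the $r_s$-separation by the valley $S_M$ from Proposition \ref{prop:main-assumptions}, this forces $A$ to stay inside the basin $A_M$ (this is where the admissibility factor involving $l_M(\min\{r_M, r_s\}/2)$ is used). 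For inner containment, the upper smoothness $u_M$ and admissibility's upper bound on $k$ keep $f_k$ above the inclusion threshold throughout a small envelope of $M$, and the $k$-NN graph connectedness result (Appendix~\ref{appendix:integrality}) makes those samples into a single component, namely $A$. Because modal-sets are processed in the correct order and lie in different basins, $A$ is disjoint from all previously added cores, so the test in step (i) of the algorithm fires exactly once per true modal-set.

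Once $A$ is pinned down, the Hausdorff bound splits into two directions. For $\sup_{\hat x \in \widehat M} d(\hat x, M)$: any $\hat x \in \widehat M$ satisfies $f_k(\hat x) > \lambda - \beta_k \lambda$, and concentration together with $\lambda \le f_M(1+\beta_k)$ yields $f_M - f(\hat x) \le 2\beta_k f_M = 8 (C_{\delta,n}/\sqrt{k})\, f_M$; the lower curvature $l_M$ then gives $d(\hat x, M) \le l_M^{-1}\bigl(8(C_{\delta,n}/\sqrt{k})\, f_M\bigr)$. For $\sup_{x \in M} d(x, \widehat M)$: the upper smoothness $u_M$ combined with the strict positivity of $f$ near $M$, and a standard covering argument (again using admissibility to control sample density on $B(M, r)$), produces for every $x \in M$ a sample point $\hat x \in \widehat M$ within the same distance.

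The main obstacle I expect is the middle step: the outer confinement of $A$. The algorithmic constant $9$ is tuned precisely so that the level drop $9 \beta_k \lambda$ is large enough to absorb the concentration slack while still keeping $A$ above the valley floor. Pinning down the right quantitative chain here is the only place where $u_M$, $l_M$, $r_s$, $r_M$, and the valley separation all interact nontrivially, and every piece of the admissibility condition is used to make the bound tight.
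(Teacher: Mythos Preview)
Your high-level decomposition (isolation of the basin, integrality of $M$ in the chosen component, then the two Hausdorff directions) matches the paper's structure exactly. The genuine gap is in your first step: the claimed uniform multiplicative bound $|f_k(x)-f(x)|=O(\beta_k)\max\{f(x),f_k(x)\}$ over all $x\in\mathcal X$ does \emph{not} hold under the paper's sole distributional assumption that $f$ is continuous on compact support. The bias of the $k$-NN density estimate is governed by the local modulus of continuity of $f$, which can be arbitrarily slow; there is no uniform rate. The actual concentration tool (Lemma~\ref{fk_bounds}) says only that $f_k(x)<(1+2C_{\delta,n}/\sqrt{k})(f(x)+\epsilon)$ and $f_k(x)\ge(1-C_{\delta,n}/\sqrt{k})(f(x)-\epsilon)$, and each inequality carries a side condition of the form $v_d\,\hat r(\epsilon,x)^d(f(x)+\epsilon)\gtrsim k/n$ (resp.\ with $\check r$), where $\hat r(\epsilon,x),\check r(\epsilon,x)$ are themselves local smoothness radii. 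Choosing $\epsilon$ and verifying these conditions is the real technical work, and it is precisely where $u_M$, $l_M$, and \emph{both} sides of the admissibility inequality are consumed.

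This bites hardest in your first Hausdorff direction. You write: from $f_k(\hat x)>\lambda(1-\beta_k)$ and concentration, conclude $f_M-f(\hat x)\le 2\beta_k f_M$, then invert $l_M$. But without a purely multiplicative bound you cannot pass from $f_k(\hat x)$ large to $f(\hat x)$ large at a point $\hat x$ about which you know nothing a priori; the additive $\epsilon$ in Lemma~\ref{fk_bounds} could absorb the entire gap. The paper argues contrapositively: for $x\in\mathcal X_M\setminus B(M,\tilde r)$, the curvature $l_M$ lets one take $\epsilon$ with $\hat r(\epsilon,x)\ge\tilde r/2$, and the admissibility \emph{upper} bound on $k$ (rewritten as $u_M(2r)\le \tfrac{C_{\delta,n}}{2\sqrt{k}}f_M$ for $r=(4k/(nv_df_M))^{1/d}$) is what makes the side condition hold, yielding $\sup_{\mathcal X_M\setminus B(M,\tilde r)}f_k\le(1+2C_{\delta,n}/\sqrt{k})(f_M-l_M(\tilde r))$. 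Simultaneously one lower-bounds $\inf_{B(M,r_n(M))}f_k$ via $u_M$ and the same lemma. The constant $8$ falls out of balancing these two estimates against $\beta_k\widehat f_M$, not from the $2\beta_k$ arithmetic you sketched. So $u_M$ is needed in \emph{this} direction too, not only for integrality.
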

If $k$ is admissible for all modal-sets $M$ of $f$, then $\widehat{\mathcal{M}}$ estimates all modal-sets of $f$ 
at the above rates. These rates can be instantiated under the settings in Remark~\ref{kadmissible}: 
suppose $l_M(t) = c_1 t^{\alpha_1}$, $u_M(t) = c t^\alpha$, $\beta_1 \geq \beta$; then 
the above bound becomes $d(M, \widehat{M}) \lesssim k^{-1/2{\alpha_1}}$ for admissible $k$. As in the remark, $k = \Theta (\log^2 n)$ is admissible, simultaneously for all $M$ (for $n$ sufficiently large), and therefore all modal-sets of $f$ are recovered at the above rate. 
In particular, taking large $k = O(n^{2\alpha/(2\alpha + d)})$ optimizes the rate to $O(n^{-\alpha/(2\alpha_1\alpha + \alpha_1 d)})$. Note that for $\alpha_1 = \alpha = 2$, the resulting rate ($n^{-1/(4+d)}$) is tight (see e.g. \cite{tsybakov1990recursive} for matching lower-bounds in the case of point-modes $M = \braces{x}$.).  

Finally, Theorem~\ref{pruning} (pruning guarantees) states that any estimated modal-set in $\widehat{\mathcal{M}}$, at a sufficiently high level (w.r.t. to $k$), corresponds to a \emph{true} modal-set of $f$ at a similar level. Its proof consists of showing that if two sets of points are wrongly disconnected at level $\lambda$, they remain connected at nearby level $\lambda - 9\beta_k \lambda$ (so are reconnected by the procedure). The main technicality is the dependence of the nearby level on the empirical $\lambda$; the proof is less involved and given in  Appendix~\ref{appendix:pruning}. 

\begin{theorem} \label{pruning}
Let $0< \delta < 1$. There exists $\lambda_0 = \lambda_0(n, k)$ such that the following holds with probability at least $1 - \delta$. All modal-set estimates in $\widehat{\mathcal{M}}$ chosen at level $\lambda \ge \lambda_0$ can be injectively mapped to modal-sets $\braces{M:  \lambda_M \geq \min_{\{x\in \mathcal{X}_{[n]} : f_k(x) \ge \lambda - \beta_k \lambda \}} f(x)}$, provided $k$ is admissible for all such $M$. 

In particular, if $f$ is H\"older-continuous, (i.e. $||f(x) - f(x')|| \le c||x - x'||^\alpha $ for some $0 < \alpha\le 1$, $c > 0$)
then $\lambda_0 \xrightarrow{n \to \infty} 0$, provided 
$C_1 \log n \le k \le C_2 n^{2\alpha / (2\alpha + d)}$, for some $C_1, C_2$ independent $n$. 

%(i.e. $||f(x) - f(x')|| \le c||x - x'||^\alpha $ for $0 < \beta\le 1$, $c > 0$), then there exists constants $C_1, C_2 > 0$ depending on $f$ such that when
%\begin{align*}
%
%\end{align*} 
%then $k$ is simultaneously admissible for all modal-sets and $\lambda_0 \rightarrow 0$ as $n, k\rightarrow \infty$. 

%{\color{red} In particular, if $f$ is H\"older-continuous (i.e. $||f(x) - f(x')|| \le c||x - x'||^\beta $ for $0 < \beta\le 1$, $c > 0$), then taking 
%$\tilde\epsilon = \Omega (\sqrt{\log n} \cdot (k/n)^{\beta /d})$ gives $\lambda_0  \rightarrow 0$ as $n, k \rightarrow \infty$} {\color{blue} If we set $\tilde \epsilon = \beta_k$ what does this condition become?, what is its source?} 
\end{theorem}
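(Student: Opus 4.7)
The plan is to build an injective map $\widehat{M} \mapsto M$ from the output cluster-cores to modal-sets of $f$, by showing that on a good event, whenever the algorithm opens a new estimate at a level $\lambda$ above some threshold $\lambda_0(n,k)$, it is because the sample point $X_i$ sits in the basin of a (previously unclaimed) true modal-set. The good event $E$ is the standard multiplicative uniform-deviation event already used in the proof of Theorem~\ref{theo:main}: on $E$, every $x \in \Xspl$ satisfies $|f_k(x) - f(x)| \le \beta_k \max\{f_k(x), f(x)\}$, and every ball of the form $B(x, r_k(x))$ has empirical mass close to its $f$-mass. This event holds with probability $\ge 1-\delta$ by the same VC/Bernstein arguments, after tuning constants.

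Fix an estimate $\widehat{M}$ added at iteration $i$ with empirical level $\lambda := f_k(X_i) \ge \lambda_0$, and let $\lambda' := \min_{\{x\in \Xspl : f_k(x) \ge \lambda - \beta_k \lambda\}} f(x)$. On $E$, every point of $\widehat{M}$ has $f$-value within a $(1\pm\beta_k)$ factor of $\lambda$, so in particular $f(X_i) \ge \lambda'$. Choose $\lambda_0$ large enough that, on $E$, any CC of $\{x\in\Xspl : f_k(x) \ge \lambda - \beta_k\lambda\}$ lies in a single CC of $\{f \ge \lambda'\}$, hence in a single basin $A_M$ furnished by Proposition~\ref{prop:main-assumptions} for a unique modal-set $M$ with $f_M \ge \lambda'$. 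Mapping $\widehat{M}\mapsto M$ then gives a well-defined assignment into the asserted set of modal-sets.

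The main work is injectivity, and this is where the $9\beta_k\lambda$ lookup distance (as opposed to the $\beta_k\lambda$ defining $\widehat M$ itself) is crucial. Suppose two estimates $\widehat M_1$ (added first, at empirical level $\lambda_1$) and $\widehat M_2$ (added later, at level $\lambda_2 \le \lambda_1$) map to the same $M$. Both $\widehat M_1, \widehat M_2 \subset A_M$, and by Proposition~\ref{prop:main-assumptions}, $A_M$ is path-connected inside the super-level set $\{f \ge \lambda_M\}$ with $\lambda_M \ge \lambda'$, and is $r_s$-separated from the rest of this super-level set. Invoking the integrality/connectedness lemma in Appendix~\ref{appendix:integrality}, one shows that on $E$, the mutual $k$-NN graph at any level $\mu$ satisfying $\mu \le \lambda_2 - 9\beta_k\lambda_2$ connects all samples in $A_M$ that the procedure can see at that stage. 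The factor $9$ absorbs: (i) the multiplicative slack between $f_k$ and $f$, (ii) the slack between $\lambda_2$ and the true level $f(X_{i_2})$, and (iii) the additional margin needed to cover the path between $\widehat M_1$ and $X_{i_2}$ inside $A_M$. Consequently $\widehat M_1$ lies in the CC of $G(\lambda_2 - 9\beta_k\lambda_2)$ containing $X_{i_2}$, contradicting the disjointness test at step (i) of the algorithm. Hence the map is injective.

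Finally, for the Hölder case, $\lambda_0$ is chosen so that both the uniform deviation inequalities and the ``one empirical CC $\subset$ one true CC'' step above hold at every level $\ge \lambda_0$. Hölder continuity $|f(x)-f(x')|\le c\|x-x'\|^\alpha$ lets us translate these requirements into a lower bound of the form $\lambda_0 \lesssim (C_{\delta,n}/\sqrt{k})^{\gamma} + (\log n / n)^{\alpha/(2\alpha+d)}$ for some $\gamma>0$, which vanishes as $n\to\infty$ whenever $C_1 \log n \le k \le C_2 n^{2\alpha/(2\alpha+d)}$. The hard part is calibrating $\lambda_0$: we must ensure simultaneously that (a) empirical CCs at level $\lambda - \beta_k\lambda$ stay inside a single basin $A_M$, and (b) the $9\beta_k\lambda$ lookup is large enough to reconnect samples within that basin, even though both quantities are driven by the data-dependent $\lambda$. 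Handling this coupling, via a union bound over a geometric grid of dyadic levels $\lambda$ down to $\lambda_0$, is the main technical step; the remaining computations are routine and deferred to Appendix~\ref{appendix:pruning}.
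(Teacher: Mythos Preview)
Your high-level intuition is right---the $9\beta_k\lambda$ lookup is there so that if two samples lie in the same connected component of a true level set of $f$, they get reconnected in the pruned graph and hence cannot spawn distinct estimates---but the implementation has two genuine gaps.

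First, the map $\widehat{M} \mapsto M$ is not well-defined as you describe it. You write that a CC of $\{x\in\Xspl : f_k(x) \ge \lambda - \beta_k\lambda\}$ ``lies in a single CC of $\{f \ge \lambda'\}$, hence in a single basin $A_M$ \ldots\ for a unique modal-set $M$.'' But the sets $A_M$ of Proposition~\ref{prop:main-assumptions} live at levels $\lambda_M$ specific to each $M$; a CC of $\{f \ge \lambda'\}$ at an arbitrary $\lambda'$ can contain \emph{several} modal-sets (whenever $\lambda'$ sits below the merge height of two modes) and need not coincide with any $A_M$. So uniqueness fails, and your injectivity argument---which opens with ``both $\widehat M_1, \widehat M_2 \subset A_M$''---does not get started. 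Forcing uniqueness by taking $\lambda_0$ above all merge heights would of course prevent $\lambda_0\to 0$.

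Second, even granting containment in some $A_M$, the lemma you invoke (the connectedness lemma of Appendix~\ref{appendix:integrality}) only guarantees that the thin envelope $B(M, r_n(M)) \cap \Xspl$ is connected in $G(\lambda)$; it says nothing about connecting arbitrary samples scattered throughout the much larger region $A_M$. What is actually needed is a different statement (Lemma~\ref{connectedness_pruning} in the paper): for \emph{any} CC $A$ of a true level set $\{f \ge \lambda_f\}$, the samples $A \cap \Xspl$ are connected in $G(\lambda')$ once $\lambda'$ drops enough below $\min_{x \in A\cap\Xspl} f_k(x)$. With this in hand, the paper builds the injection differently from you: rather than first assigning each estimate to a basin, it applies (the contrapositive of) this lemma to show that the distinct CCs of the pruned graph $\tilde G(\lambda)$ land in \emph{pairwise disjoint} CCs of $\{f \ge \lambda_f\}$, then picks one modal-set per such CC, iterating over levels. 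No basin structure and no dyadic union bound is used; $\lambda_0$ is defined implicitly through $r(\epsilon) := \inf_x \min\{\hat r(\epsilon,x), \check r(\epsilon,x)\}$, and in the H\"older case the bound $r(\epsilon) \ge (\epsilon/c)^{1/\alpha}$ gives $\lambda_0 \to 0$ directly.
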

\begin{remark}
Thus with little additional smoothness ($\alpha \approx 0$) over uniform continuity of $f$, any estimate above level $\lambda_0 \to 0$ corresponds to a true modal-set of $f$. We note that these pruning guarantees can be strengthened as needed by implementing a more aggressive pruning: simply replace $G(\lambda - 9\beta_k \lambda)$ in the procedure (on line (\rm{i})) with $G(\lambda - 9\beta_k \lambda - \tilde\epsilon)$ using a \emph{pruning parameter} $\tilde \epsilon \ge 0$. This allows $\lambda_0 \rightarrow 0$ faster. However the rates of Theorem \ref{theo:main} (while maintained) then require a larger initial sample size $n$. 
This is discussed in Appendix~\ref{appendix:pruning}.  
\end{remark}

\section{Analysis Overview}
\label{sec:analysis}

The bulk of the analysis is in establishing Theorem~\ref{theo:main}. The key technicalities are in  
bounding distances from estimated cores to an unknown number of modal-sets of general shape, dimension and location.

The analysis considers each modal-set $M$ of $f$ separately, and only combines results  
in the end into the uniform consistency statement of Theorem \ref{theo:main}. 
The following notion of \emph{distance} from the sample $\Xspl$ to a modal-set $M$ will be crucial.
%Next, we define the distance from $M$ to its closest sample point. 
\begin{definition}
For any $x\in \X$, let $r_n(x) := d(\{x \}, \Xspl)$, and $r_n(M) := \sup_{x \in M} r_n(x)$. 
%\inf \{r' : B(x, r') \cap X_{[n]} \neq \emptyset \}$ 
\end{definition}

For each $M$, define $\hat{x}_M := \argmax_{x \in \mathcal{X}_M \cap X_{[n]}} f_k(x)$, a local maximizer of $f_k$ on the modal-set $M$. The analysis (concerning each $M$) proceeds in the following steps: 
\begin{itemize}
\item \emph{Isolation of $M$}: when processing $\hat{x}_M$, the procedure picks an estimate $\widehat{M}$ that contains no point from (or close to) modal-sets other than $M$.  
\item \emph{Integrality of $M$}: the estimate $\widehat{M}$ picks all of the envelope $B(M, r_n(M)) \cap X_{[n]}$. 
\item \emph{Consistency of $\widehat M$}: it can then be shown that $\widehat{M} \rightarrow M$ in \emph{Hausdorff} distance. 
This involves two directions: the first direction (that points of $M$ are close to $\widehat {M}$) follows from integrality; the second direction is to show that points in $\widehat {M}$ are close to $M$. 
\end{itemize}

The following gives an upper-bound on the distance from a modal-set to the closest sample point. It follows from Berstein-type VC concentration on masses of balls. The proof is given in  Appendix~\ref{supportinglemmas}. 

\begin{lemma}[Upper bound on $r_n$] \label{r_n_upper_bound}  Let $M$ be a modal-set with density $f_M$ and suppose that $k$ is admissible. With probability at least $1 - \delta$,
\begin{align*}
r_n(M) \le  \left(\frac{2C_{\delta, n} \sqrt{d \log n}}{n\cdot v_d\cdot f_M}\right)^{1/d}.
\end{align*}
\end{lemma}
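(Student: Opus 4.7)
The plan is to define
$$r^\ast := \left(\frac{2\,C_{\delta,n}\sqrt{d\log n}}{n\, v_d\, f_M}\right)^{1/d}$$
and prove that, with probability at least $1-\delta$, every Euclidean ball $B(x, r^\ast)$ with $x \in M$ contains at least one sample point. This immediately yields $r_n(x) \le r^\ast$ uniformly in $x \in M$, hence $r_n(M) \le r^\ast$, which is the claim.

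The first step is a deterministic lower bound on the true $\mathcal{F}$-mass $\mathcal{F}(B(x, r^\ast))$, uniformly in $x\in M$. Admissibility of $k$ is used twice. Rearranging the upper admissibility inequality gives $u_M^{-1}\!\left(f_M C_{\delta,n}/(2\sqrt{k})\right) \ge \left(2^{2+2d} k / (v_d f_M n)\right)^{1/d}$; comparing this with the closed form of $r^\ast$ and using the lower admissibility bound $k \ge 2^{7+d} C_{\delta,n}^2$ yields $r^\ast \le u_M^{-1}\!\left(f_M C_{\delta,n}/(2\sqrt{k})\right) \le r_M$. Hence for any $x \in M$ and any $y \in B(x, r^\ast)$ we have $d(y,M) \le r^\ast \le r_M$, so Proposition~\ref{prop:main-assumptions} applies and gives $f(y) \ge f_M - u_M(r^\ast) \ge f_M\bigl(1 - C_{\delta,n}/(2\sqrt{k})\bigr) \ge f_M/2$, where the last step uses the lower admissibility bound once more to ensure $C_{\delta,n}/\sqrt{k} \le 1$. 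Integrating this pointwise bound over $B(x, r^\ast)$ produces
$$\mathcal{F}(B(x, r^\ast)) \;\ge\; \tfrac{1}{2}\, v_d\, (r^\ast)^d\, f_M \;=\; \frac{C_{\delta,n}\sqrt{d\log n}}{n}.$$

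The second step is to apply a uniform relative VC--Chernoff inequality over the class of Euclidean balls in $\mathbb{R}^d$, which has VC dimension $d+1$. The standard form of this inequality (used in similar cluster-tree arguments, e.g.~\cite{CDKvL14}) guarantees that with probability at least $1 - \delta$, simultaneously over every ball $B$, one has $\hat{\mathcal{F}}(B) > 0$ whenever $n\,\mathcal{F}(B) \ge c\bigl(d\log n + \log(1/\delta)\bigr)$ for an absolute constant $c$. Because $C_{\delta,n} = 16 \log(2/\delta)\sqrt{d\log n}$, the mass lower bound from the previous step equals $16\log(2/\delta)\cdot d\log n / n$, which clears this threshold for all $n$ sufficiently large. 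Thus, on this event, every ball $B(x, r^\ast)$ with $x \in M$ contains a sample point, which completes the proof.

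The main obstacle is the bookkeeping in the first step: distilling from the rather opaque admissibility inequality the two clean consequences $r^\ast \le r_M$ and $u_M(r^\ast) \le f_M/2$, which together allow the curvature bound of Proposition~\ref{prop:main-assumptions} to convert a volume estimate into a mass estimate. The VC--Chernoff step itself is standard and aligns with the concentration form already used throughout the paper.
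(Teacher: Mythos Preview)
Your approach is essentially the same as the paper's: lower-bound $\mathcal{F}(B(x,r^\ast))$ by $C_{\delta,n}\sqrt{d\log n}/n$ using admissibility together with the smoothness bound of Proposition~\ref{prop:main-assumptions}, then apply uniform concentration over Euclidean balls. The paper carries out the first step via the auxiliary radius $r=(4k/(n v_d f_M))^{1/d}$ (showing $r^\ast\le r$ and $u_M(r)\le f_M/2$), and for the second step invokes Lemma~\ref{ball_bounds} directly, whose threshold is already calibrated to $C_{\delta,n}\sqrt{d\log n}/n$---so the ``for all $n$ sufficiently large'' hedge in your VC step is unnecessary and can be dropped by citing that lemma instead.
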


We require a notion of a region $\X_M$ containing only points \emph{close} to $M$ but far from other modes. To this end, 
let $S_M$ denote the separating set from Definition~\ref{rsalient}. 

\begin{definition} 
 $\mathcal{X}_{M} := \{ x : \exists \text{ a path }\mathcal{P} \text{ from } x \text{ to } x'\in M \text{ such that } \mathcal{P} \cap S_M = \emptyset\}$. 
\end{definition}

\begin{lemma} [Isolation]\label{isolation}  Let $M$ be a modal-set and $k$ be admissible for $M$. 
Let $\hat{x}_M := \argmax_{x \in \mathcal{X}_M \cap X_{[n]}} f_k(x)$. Then the following holds
with probability at least $1-5\delta$. When processing sample point $\hat{x}_M$ in Algorithm~\ref{alg:modalset} we will add $\widehat{M}$ to $\widehat{\mathcal{M}}$ where 
%$\hat{x}_M \in \widehat{M}$ and 
$\widehat{M}$ does not contain points outside of $\mathcal{X}_M$.
\end{lemma}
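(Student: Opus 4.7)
The plan is to work on the intersection of two high-probability events: a uniform two-sided concentration bound $|f_k(x)-f(x)|\lesssim \beta_k f(x)$ on $\mathcal{X}$ imported from the supporting lemmas in Appendix~\ref{supportinglemmas} (absorbing $4\delta$ of failure probability), and the conclusion of Lemma~\ref{r_n_upper_bound} (the remaining $\delta$). On this $1-5\delta$ event I would establish three sub-claims in order: (i) $\lambda := f_k(\hat x_M) \ge f_M(1-O(\beta_k))$; (ii) the connected component $A$ of $G(\lambda-9\beta_k\lambda)$ containing $\hat x_M$ lies entirely in $\mathcal{X}_M$; (iii) $A$ is disjoint from every cluster-core already in $\widehat{\mathcal{M}}$, so that the IF-test fires and $\widehat M = \{x\in A : f_k(x) > \lambda-\beta_k\lambda\}\subseteq A \subseteq \mathcal{X}_M$ is added.

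For (i), Lemma~\ref{r_n_upper_bound} produces some $y\in X_{[n]}$ within $r_n(M)$ of $M$. The upper half of admissibility (through $u_M^{-1}$) forces $r_n(M)\le r_M$, so the modulus $u_M$ applies and $f(y)\ge f_M - u_M(r_n(M))\ge f_M(1-O(\beta_k))$; the upper concentration bound then transfers this to $f_k(y)\ge f_M(1-O(\beta_k))$, and since $y\in B(M,r_M)\subseteq \mathcal{X}_M$, $\lambda\ge f_k(y)$.

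The heart of the argument is (ii), which I would prove by ruling out any edge of $G(\lambda-9\beta_k\lambda)$ from crossing the separator $S_M$ of Proposition~\ref{prop:main-assumptions}. The lower half of admissibility via $l_M(\min\{r_M,r_s\}/2)$ is calibrated precisely so that one may take the level $\lambda_M$ in the Proposition to satisfy $f_M-\lambda_M \ge c\beta_k f_M$ for an appropriate constant; strict $r_s$-separation then gives $B(S_M,r_s)\cap A_M=\emptyset$ and $\sup_{B(S_M,r_s)}f < \lambda_M \le f_M - c\beta_k f_M$, so the upper concentration bound places every sample in $B(S_M,r_s)$ strictly below the graph threshold $\lambda-9\beta_k\lambda$. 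Hence no vertex of $G(\lambda-9\beta_k\lambda)$ lies in $B(S_M,r_s)$. For vertices that do qualify, $f_k\gtrsim f_M$ forces the $k$-NN radius to satisfy $r_k\lesssim (k/(nv_d f_M))^{1/d}$, which the upper half of admissibility drives strictly below $2r_s$. A one-line triangle-inequality argument then shows no edge segment can touch $S_M$: if both endpoints lie at distance $>r_s$ from $S_M$ yet the segment hit some $y\in S_M$, the segment length would be $\ge d(x,S_M)+d(x',S_M)>2r_s$, a contradiction. Thus every edge keeps its two endpoints in the same connected component of $\mathcal{X}\setminus S_M$, and starting from $\hat x_M\in\mathcal{X}_M$, the whole CC $A$ stays in $\mathcal{X}_M$.

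For (iii), suppose some previously added $\widehat M'\subseteq A'$, created when an earlier processed sample $X'$ with $\lambda' := f_k(X')\ge \lambda$ was reached, meets $A$. Since $\lambda'\ge \lambda$ implies $G(\lambda'-9\beta_k\lambda')$ is a subgraph of $G(\lambda-9\beta_k\lambda)$, the $A'$-connectivity between $X'$ and any intersection point $y\in \widehat M'\cap A$ lifts to $G(\lambda-9\beta_k\lambda)$, forcing $X'\in A$. By (ii), $A\subseteq \mathcal{X}_M$, so $X'\in \mathcal{X}_M\cap X_{[n]}$ with $f_k(X')=\lambda'>\lambda$, contradicting $\hat x_M=\argmax_{\mathcal{X}_M\cap X_{[n]}}f_k$. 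The main obstacle I anticipate is the quantitative gap calculation in (ii): the admissibility bound on $l_M(\min\{r_M,r_s\}/2)$ lives on $B(M,\min\{r_M,r_s\}/2)$, whereas what I really need is a drop of $f$ on the separator strip $B(S_M,r_s)$, which a priori sits far from $M$; the resolution I expect is to strategically choose $\lambda_M$ in Proposition~\ref{prop:main-assumptions} at height $f_M - l_M(\min\{r_M,r_s\}/2)$, so the curvature hypothesis translates directly into the required separator-gap, and then to carefully book-keep the $\beta_k$-losses at $y$, at $\hat x_M$, and on $B(S_M,r_s)$ so they all fit under the $9\beta_k$ slack hard-coded in the algorithm.
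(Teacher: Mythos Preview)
Your proposal is essentially correct and follows the paper's strategy: empty the separator strip of graph vertices, bound edge lengths below the separation width, and conclude the connected component of $\hat x_M$ cannot reach across $S_M$. A few points of comparison and two minor corrections are worth noting.

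First, the concentration input: the supporting Lemma~\ref{fk_bounds} is \emph{not} a uniform bound $|f_k-f|\lesssim\beta_k f$ but a pointwise one carrying a side-condition on $k$ through $\hat r(\epsilon,x)$ or $\check r(\epsilon,x)$. The paper handles this by choosing, for $x$ in the relevant strips, the specific slack $\epsilon=\bar F-f(x)$ with $\bar F:=f_M-l_M(\bar r/2)$ and $\bar r:=\min\{r_M,r_s\}/2$, and then verifying $\hat r(\epsilon,x)\ge\bar r/2$ so that admissibility activates the bound. Your anticipated obstacle and resolution (tying the separator drop to $l_M$ via the level $\lambda_M$) is exactly what is going on: from the construction behind Proposition~\ref{prop:main-assumptions} one has $\lambda_M\le f_M-l_M(\bar r/2)$, whence $\sup_{B(S_M,r_s)}f<\lambda_M\le\bar F$.

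Second, the paper takes a slightly different cut in your step (ii): rather than only emptying $B(S_M,r_s)$, it also shows there are no vertices of $G(\lambda-9\beta_k\lambda)$ anywhere in $\mathcal{X}_M\setminus B(M,\bar r)$, so the component is confined to $B(M,\bar r)$; it then bounds $r_k$ \emph{only} for $x\in B(M,\bar r)$ via a ball-mass argument through Lemma~\ref{ball_bounds}, rather than via $f_k$ directly as you do. Your route is valid, but note that the inequality $r_k\lesssim(k/(nv_df_M))^{1/d}<2r_s$ actually needs \emph{both} halves of admissibility (the lower half to force $u_M^{-1}(f_M C_{\delta,n}/(2\sqrt k))\le\bar r$, the upper half for the displayed bound), not just the upper half. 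Relatedly, to apply Lemma~\ref{fk_bounds} on the strip you will want a buffer (the paper uses $B(S_M,r_s/2)$, not $B(S_M,r_s)$) so that $\hat r(\epsilon,x)\ge\bar r/2$ holds.

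Finally, your step (iii), verifying that the IF-test actually fires when $\hat x_M$ is processed, is a welcome addition; the paper's proof simply declares ``it suffices to show (i) and (ii)'' and leaves this disjointness check implicit.
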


\begin{lemma}[Integrality] \label{integrality} 
Let $M$ be a modal-set with density $f_M$, and suppose $k$ is admissible for $M$. Let $\hat{x}_M := \argmax_{x \in \mathcal{X}_M \cap X_{[n]}} f_k(x)$. Then the following holds with probability at least $1 - 3\delta$. When processing sample point $\hat{x}_M$ in Algorithm~\ref{alg:modalset}, if we add $\widehat{M}$ to $\widehat{\mathcal{M}}$, then $B(M, r_n(M)) \cap X_{[n]} \subseteq \widehat{M}$.
\end{lemma}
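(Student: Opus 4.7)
The plan is to show that, for every sample point $x \in B(M, r_n(M)) \cap X_{[n]}$, both filters used to build $\widehat{M}$ accept $x$: (i) $f_k(x) > \lambda - \beta_k \lambda$, so $x$ survives the high-level test, and (ii) $x$ lies in the same connected component as $\hat{x}_M$ in $G(\lambda - 9\beta_k\lambda)$. Throughout I would work on the high-probability event (accumulated from the same concentration bounds already used in Lemma \ref{isolation} and Lemma \ref{r_n_upper_bound}) on which $f_k$ is uniformly close to $f$ on a neighborhood of $M$ at rate $O(f_M \beta_k)$, and on which Lemma \ref{r_n_upper_bound} gives $r_n(M) \le (2 C_{\delta,n}\sqrt{d\log n}/(n v_d f_M))^{1/d}$.

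First I would establish (i). On the concentration event, $\lambda = f_k(\hat{x}_M)$ is within $O(f_M \beta_k)$ of $f_M$. For any $x$ with $d(x,M) \le r_n(M)$, Proposition \ref{prop:main-assumptions} gives $f(x) \ge f_M - u_M(r_n(M))$, and the upper part of admissibility, namely $k \le (v_d f_M / 2^{2+2d})\bigl(u_M^{-1}(f_M C_{\delta,n}/2\sqrt{k})\bigr)^d n$, combines with Lemma \ref{r_n_upper_bound} to force $u_M(r_n(M)) = O(f_M \beta_k)$. Adding the pointwise concentration $|f_k(x)-f(x)| = O(f_M\beta_k)$ and tracking constants, $f_k(x) \ge f_M - O(f_M\beta_k) > \lambda - \beta_k \lambda$.

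Next I would prove (ii) by a chaining argument. The quantitative input is that any sample $y$ in the envelope with $f_k(y) \le f_M(1+O(\beta_k))$ satisfies $r_k(y) \ge (k/((1+O(\beta_k))n v_d f_M))^{1/d}$, which by the lower part of admissibility (giving $k \gtrsim C_{\delta,n}^2$) dominates $r_n(M)$ by a factor tending to infinity. Since $M$ is a connected compact set, for any $\epsilon > 0$ I can choose anchor points $m_1, \ldots, m_L$ on $M$ with $m_1$ close to $\hat x_M$, $m_L$ close to $x$, and consecutive anchors at distance at most $\epsilon$. By the definition of $r_n(M)$, each $m_j$ has a sample $y_j$ within $r_n(M)$, so $\|y_j - y_{j+1}\| \le \epsilon + 2 r_n(M)$; taking $\epsilon \le r_n(M)$ gives $\|y_j - y_{j+1}\| \le 3 r_n(M) \le \min\{r_k(y_j), r_k(y_{j+1})\}$, so the mutual $k$-NN edge is present. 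Each $y_j$ lies in $B(M, r_n(M))$, hence by the same calculation as in (i) satisfies $f_k(y_j) \ge \lambda - \beta_k \lambda \ge \lambda - 9\beta_k\lambda$, so it is a vertex of $G(\lambda - 9\beta_k\lambda)$ and the chain is a valid path.

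The main obstacle is the quantitative bookkeeping: verifying in (i) that the three error terms $|\lambda - f_M|$, $u_M(r_n(M))$, and $|f_k(x) - f(x)|$ can simultaneously be bounded by a constant times $f_M \beta_k$ with the right constant so that the strict inequality $f_k(x) > \lambda - \beta_k \lambda$ comes out, and verifying in (ii) that the gap between the covering scale $r_n(M) \lesssim (C_{\delta,n}\sqrt{\log n}/(n f_M))^{1/d}$ and the $k$-NN scale $r_k \asymp (k/(n f_M))^{1/d}$ is genuinely a factor of $(k/C_{\delta,n})^{1/d}$, which admissibility forces to be large. Modulo this constant-tracking the argument is straightforward, and the failure probability $3\delta$ is absorbed into the union of the concentration event, the $r_n(M)$ event of Lemma \ref{r_n_upper_bound}, and the events already built into Lemma \ref{isolation}.
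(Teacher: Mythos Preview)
Your approach is correct and essentially the same as the paper's. The paper packages your chaining step (ii) as a standalone Connectedness Lemma (Lemma~\ref{connectedness}): $B(M,r_n(M))\cap X_{[n]}$ is connected in $G(\lambda')$ whenever $\lambda'\le(1-C_{\delta,n}/\sqrt{k})^2 f_M$; the integrality proof is then the one-line verification that $\lambda-\beta_k\lambda$ lies below this threshold, which delivers your (i) and (ii) simultaneously (vertex membership in $G(\lambda-\beta_k\lambda)$ gives the $f_k$ lower bound, and connectedness there implies connectedness in the larger graph $G(\lambda-9\beta_k\lambda)$). One small point you leave implicit---as does the paper---is why $\hat{x}_M$ itself belongs to the same component as $B(M,r_n(M))\cap X_{[n]}$: your phrase ``$m_1$ close to $\hat x_M$'' presumes $d(\hat x_M,M)$ is small enough for the first $k$-NN edge of the chain to exist, which strictly speaking needs its own short argument.
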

The proofs for Lemma~\ref{isolation} and~\ref{integrality} can be found Appendices~\ref{appendix:isolation} and~\ref{appendix:integrality}, respectively.

Combining isolation and integrality, we obtain:
\begin{corollary}[Identification] \label{identification}  Suppose we have the assumptions of Lemmas~\ref{isolation} and~\ref{integrality} for modal-set $M$. Define $\widehat{f}_M := \max_{x \in \mathcal{X}_M \cap X_{[n]}} f_k(x)$. With probability at least $1 - 5\delta$, there exists $\widehat{M} \in \widehat{\mathcal{M}}$ such that $B(M, r_n(M)) \cap X_{[n]} \subseteq  \widehat{M} \subseteq \{ x \in \mathcal{X}_M \cap X_{[n]} : f_k(x) \ge \widehat{f}_M - \beta_k \widehat{f}_M\}$.
\end{corollary}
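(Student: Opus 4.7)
The plan is to simply assemble the two lemmas and then read off the upper containment from the algorithm's construction at the particular step when $\hat{x}_M$ is processed.

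First, I would condition on the events of Lemma~\ref{isolation} and Lemma~\ref{integrality} holding simultaneously. The probability statement in the corollary ($1-5\delta$) suggests that the event driving integrality is already subsumed by (or shares its underlying high-probability events with) the event driving isolation, so one does not pay an $8\delta$ cost by naive union bound. I would either assert this as evident from inspection of the two appendix proofs or, more conservatively, observe that both events can be realized on a common $(1-5\delta)$-probability event built from the Bernstein/VC-type concentration used throughout (see Lemma~\ref{r_n_upper_bound}). This is the only subtle point; the remainder is deterministic bookkeeping about Algorithm~\ref{alg:modalset}.

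Next, on that good event, Lemma~\ref{isolation} gives that when $\hat{x}_M$ is processed, some $\widehat{M}$ is added to $\widehat{\mathcal{M}}$ and $\widehat{M} \cap (\X \setminus \X_M) = \emptyset$, so $\widehat{M} \subseteq \X_M \cap X_{[n]}$. Lemma~\ref{integrality} then yields $B(M, r_n(M)) \cap X_{[n]} \subseteq \widehat{M}$, which is precisely the lower containment asserted by the corollary.

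For the upper containment, I would trace the algorithm's definition at the step processing $\hat{x}_M$. By construction, at this iteration the level is $\lambda = f_k(\hat{x}_M)$, and by definition of $\hat{x}_M$ as the $f_k$-maximizer on $\X_M \cap X_{[n]}$, we have $\lambda = \widehat{f}_M$. The point set added is
\[
\widehat{M} \;=\; \{\, x \in A \,:\, f_k(x) > \lambda - \beta_k \lambda \,\} \;=\; \{\, x \in A \,:\, f_k(x) > \widehat{f}_M - \beta_k \widehat{f}_M \,\}.
\]
Intersecting with the already-established inclusion $\widehat{M}\subseteq \X_M \cap X_{[n]}$ from isolation, and relaxing the strict inequality to a weak one, gives
\[
\widehat{M} \;\subseteq\; \{\, x \in \X_M \cap X_{[n]} \,:\, f_k(x) \ge \widehat{f}_M - \beta_k \widehat{f}_M \,\},
\]
completing the corollary.

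The only step that requires any actual thought is the probability accounting (reconciling $5\delta + 3\delta$ into $5\delta$); everything else is a direct unwinding of the two lemma conclusions together with one line of the pseudocode, using that the algorithm's sort-by-decreasing-$f_k$ ordering makes the particular $\lambda$ at the moment $\hat{x}_M$ is processed equal to $\widehat{f}_M$.
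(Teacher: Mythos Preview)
Your proposal is correct and follows essentially the same route as the paper's proof: use Lemma~\ref{isolation} to guarantee that $\widehat{M}$ is added when $\hat{x}_M$ is processed and is contained in $\mathcal{X}_M$, read off the $f_k$-threshold from the algorithm's definition at level $\lambda=\widehat{f}_M$, and then invoke Lemma~\ref{integrality} for the lower containment. The paper's own proof is even terser and does not explicitly comment on the $5\delta$ vs.\ $3\delta$ accounting you flagged; your observation that the underlying concentration events are shared is the right reading.
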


Here, we give a sketch of the proof for Theorem 1 which can be found in Appendix~\ref{appendix:theomain}. 
\begin{proof}[Proof idea of Theorem 1] Define $\tilde{r} = l_M^{-1}\left(\frac{8C_{\delta,n }}{\sqrt{k}}f_M\right)$. There are two directions to show: $\max_{x \in \widehat{M}} d(x, M) \le \tilde{r}$ and $\sup_{x \in M} d(x, \widehat{M}) \le \tilde{r}$.

For the first direction, by Corollary~\ref{identification} we have $\widehat{M} \subseteq \{ x \in \mathcal{X}_M : f_k(x) \ge \widehat{f}_M - \beta_k \widehat{f}_M\}$ where $\widehat{f}_M := \max_{x \in \mathcal{X}_M \cap X_{[n]}} f_k(x)$. Thus, it suffices to show 
\begin{align}
\inf_{x \in B(M, r_n(M))} f_k(x) \ge \sup_{\mathcal{X}_M \backslash B(M, \tilde{r})} f_k(x) + \beta_k \widehat{f}_M.
\end{align}
Using known upper and lower bounds on $f_k$ in terms of $f$, we can lower bound the LHS by approximately $f_M - u_M(r)$ (for some $r < \tilde{r}$) and upper bound the first term on the RHS by approximately $f_M - l_M(\tilde{r})$. The remaining difficulty is carefully choosing an appropriate $r$.

For the other direction, by Corollary~\ref{identification}, $\widehat{M}$ contains all sample points in $B(M, r_n(M))$. Lemma~\ref{r_n_upper_bound} and the admissibility of $k$ implies that $r_n(x) \le \tilde{r}$ which easily gives us the result.
\end{proof}

\section{Experiments}
\label{sec:experiments}
%Experiments

\subsection{Practical Setup} 
%While in theory $\beta_k$ is set as in Algorithm \ref{alg:modalset}, these settings 
%are conservative so as to ensure consistency in general theoretical settings. 
%The \emph{pruning} parameter $\tilde \epsilon$, as shown earlier, just needs to 
%be sufficiently small as larger settings simply returns fewer cluster-cores.

The analysis prescribes a setting of $\beta_k = O(1/\sqrt{k})$. Throughout the experiments we simply fix $\beta_k = 2 / \sqrt{k}$, and let our choice of $k$ be the essential parameter. 
As we will see, M-cores yields competitive and stable performance for a wide-range of settings of $k$. The implementation can be done efficiently and is described in Appendix~\ref{appendix:implementation}.

%Given the $k$-NN density and neighbor sets, Algorithm~\ref{alg:modalset} can be implemented by considering the CCs of the mutual $k$-NN graph as a set of disjoint forests in a disjoint-set forest data structure {\color{red} this is not a common data-structure, \cite{}}. The computation complexity can be shown to be near-linear in $k$ and $n$ \cite{cormen}.

We will release an optimized Python/C++  version of the code at \cite{url}. 

%For the experiments of Section \ref{} we vary $k$ to get a sense of the stability of the procedure 

\subsection{Qualitative Experiments on General Structures} 
\begin{figure}[h]
\centering 
\includegraphics[width=0.155\textwidth]{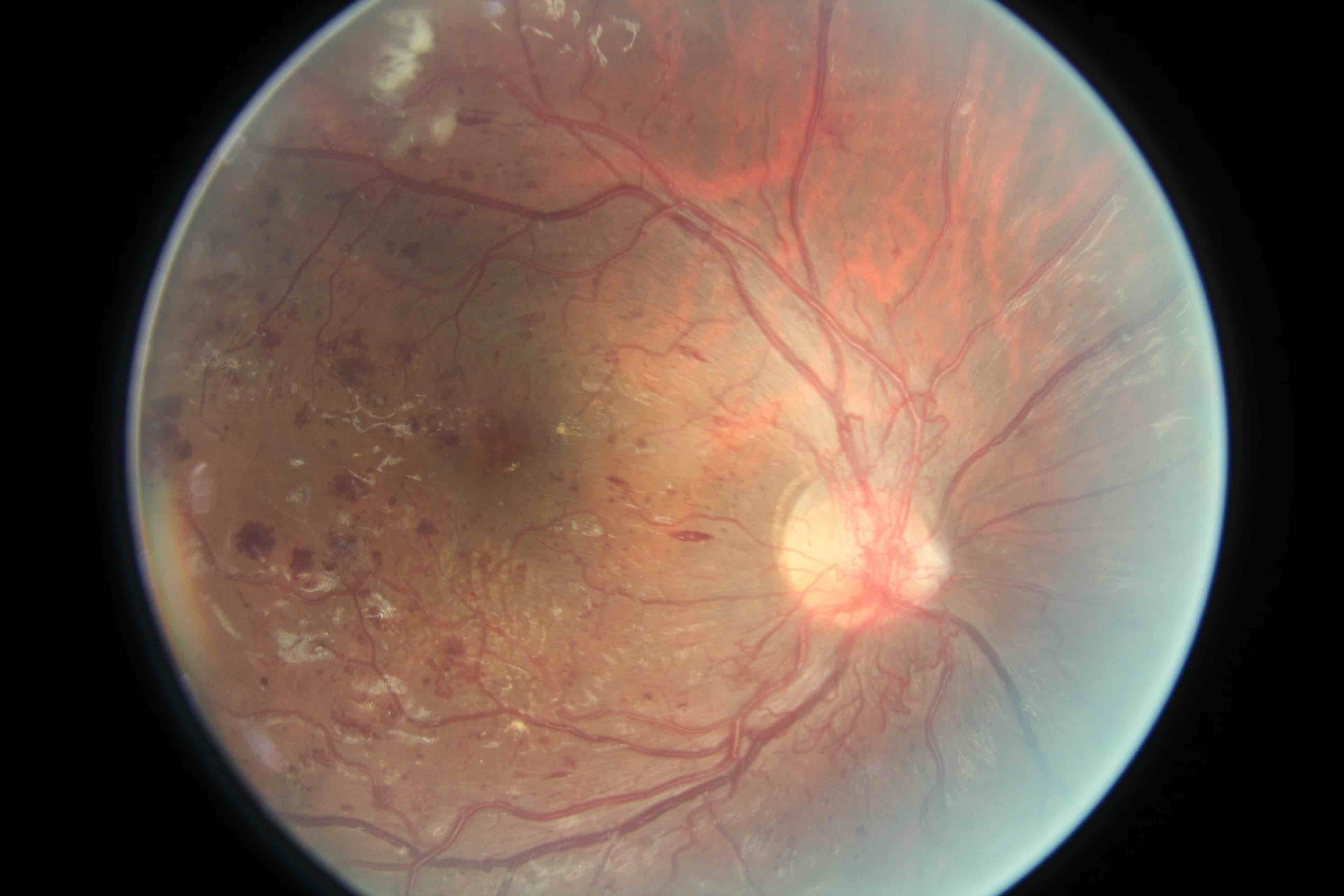}
\includegraphics[width=0.155\textwidth]{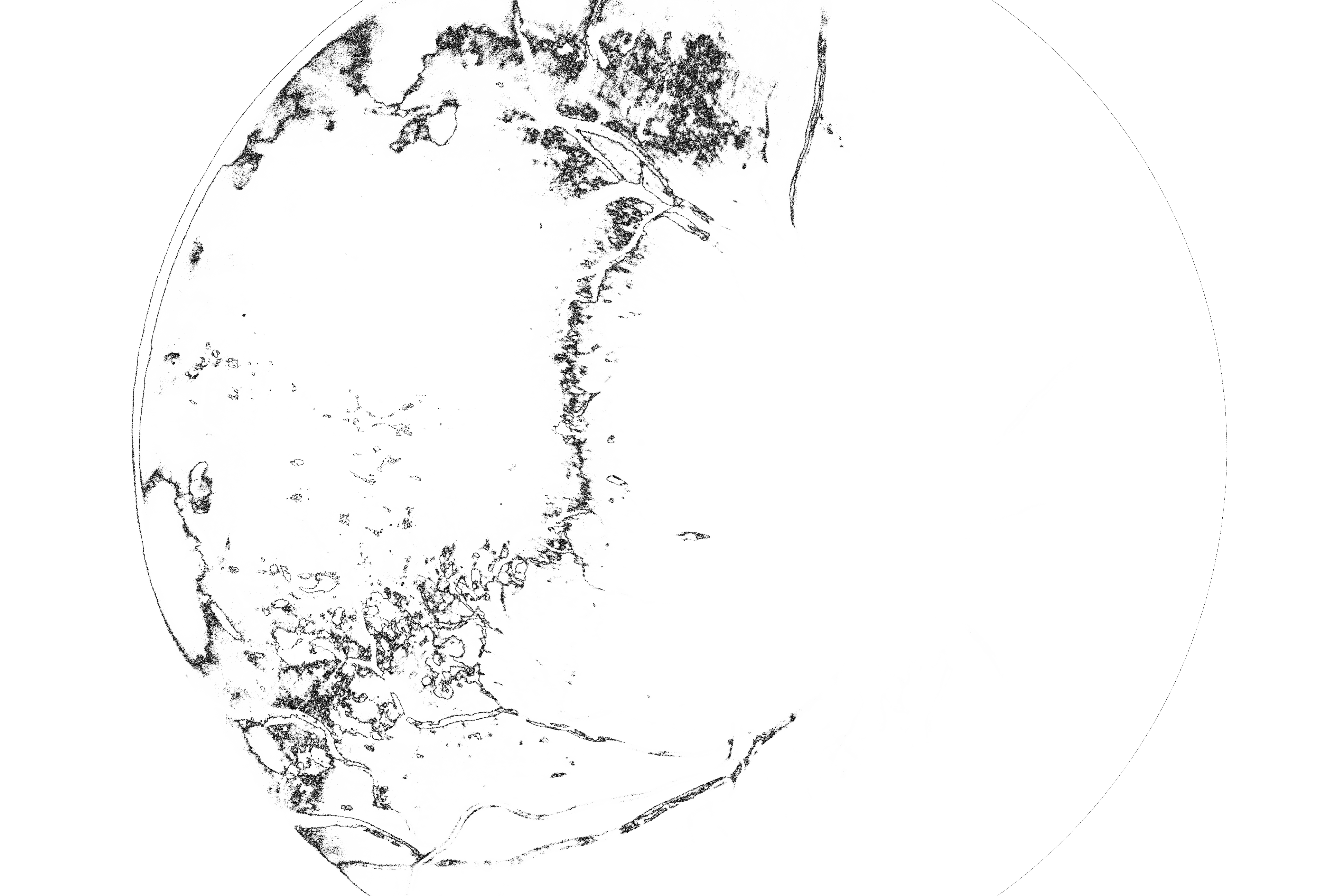}
\includegraphics[width=0.155\textwidth]{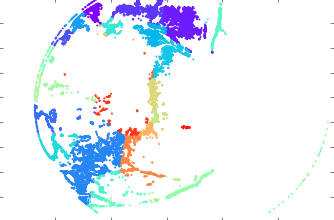}
\includegraphics[width=0.155\textwidth]{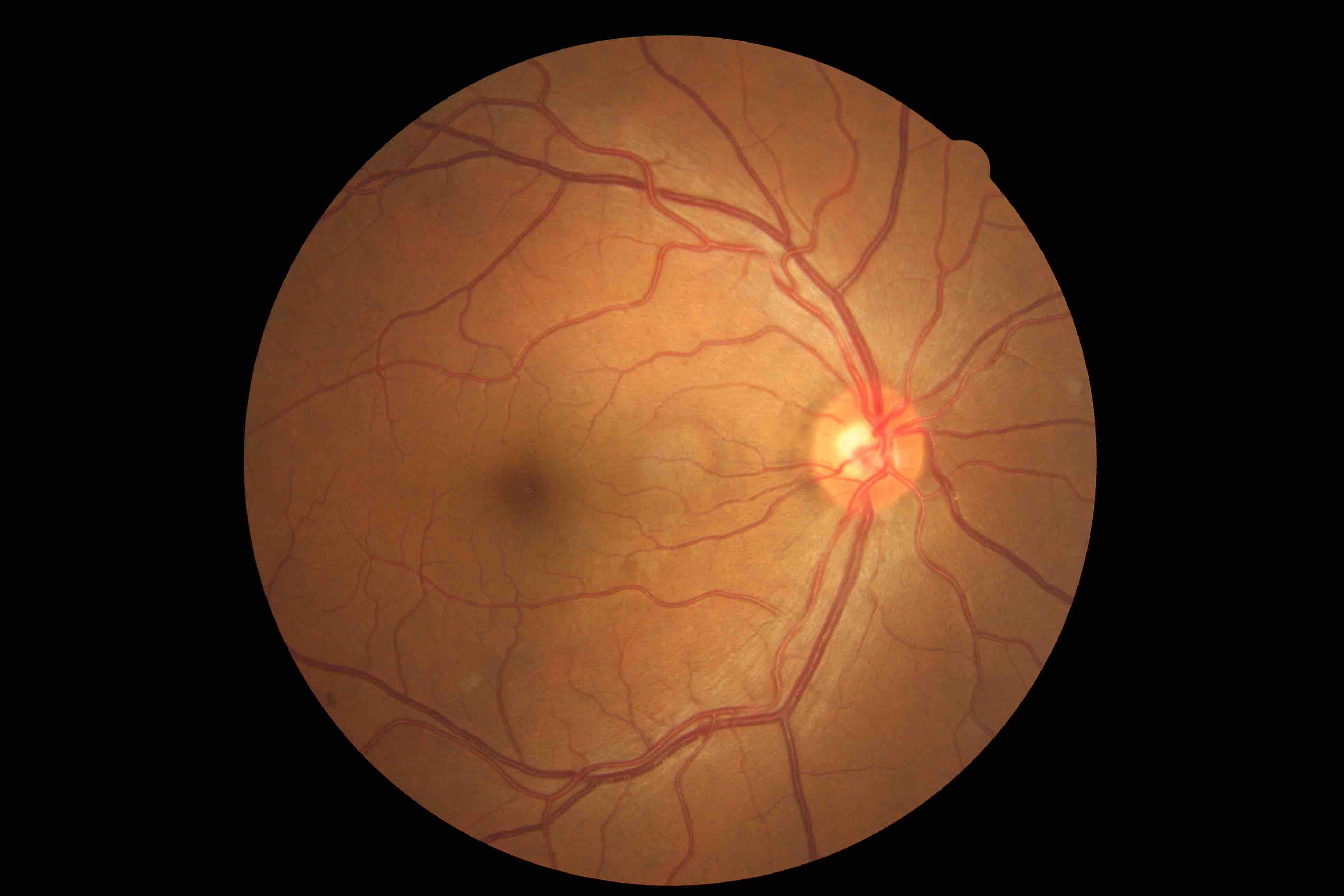}
\includegraphics[width=0.155\textwidth]{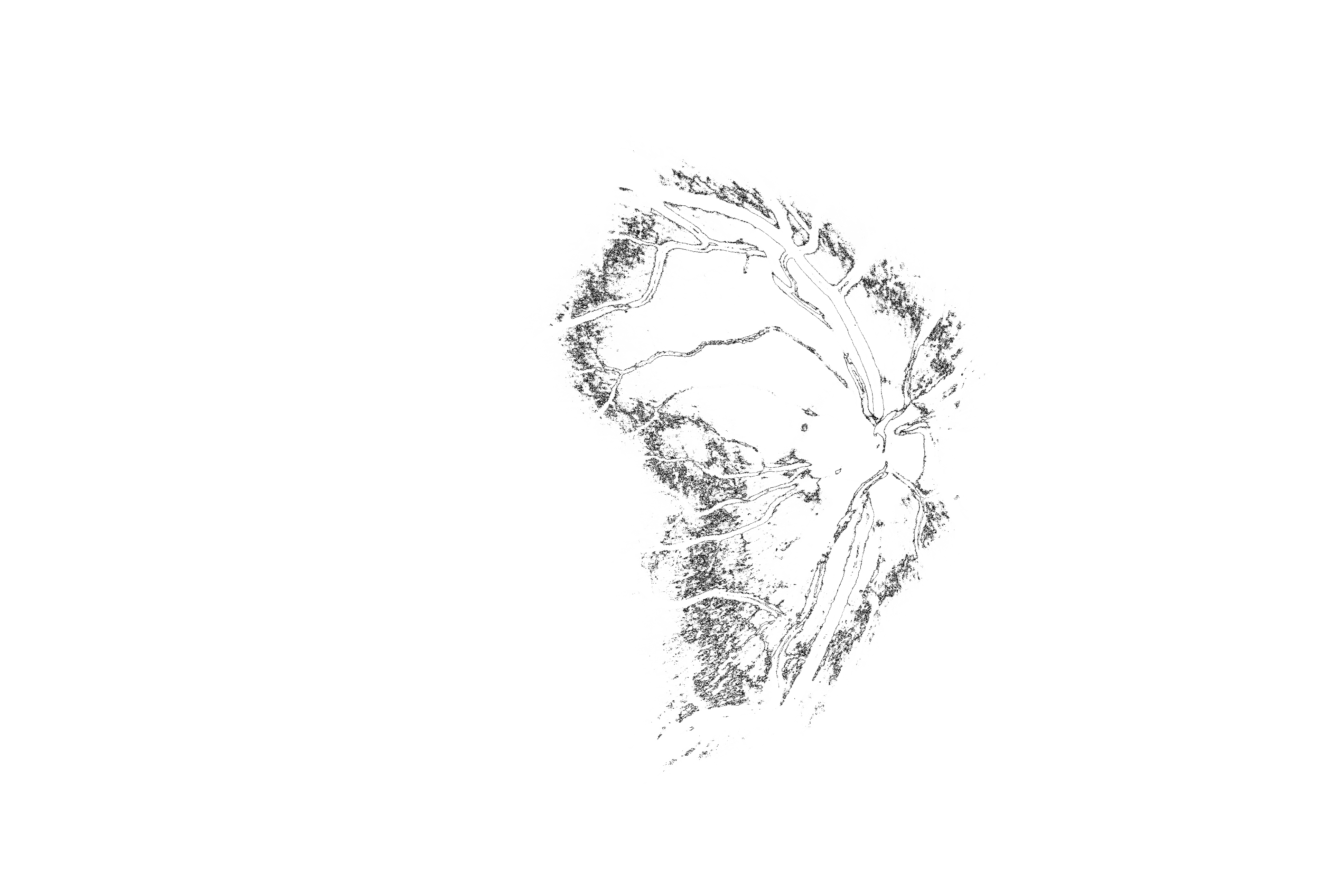}
\includegraphics[width=0.155\textwidth]{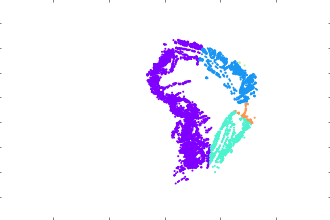}
\caption{Diabetic Retinopathy: (Left 3 figures) An unhealthy eye, (Right 3 figures) A healthy eye. 
In both cases, shown are (1) original image, (2) a filter applied to the image, (3) modal-sets (structures of capillaries)  estimated by M-cores on the corresponding filtered image. The unhealthy eye is characterized by a proliferation of 
damaged blood capillaries, while a healthy eye has visually fewer capillaries. The analysis task is to automatically discover the higher number of capillary-structures in the unhealthy eye. M-cores discovers $29$ structures for unhealthy eye vs $6$ for healthy eye. %This showcases the ability of the procedure to automatically estimate a reasonable number of \emph{clusters} of any shape. Here we use the default settings of M-cores described in the text.
}
\label{fig:eye}
\end{figure} 

We start with a qualitative experiment highlighting the flexibility of the procedure in fitting 
a large variety of high-density structures. For these experiments, we use $k = \frac{1}{2} \cdot \log^2 n$, which is within the
theoretical range for admissible values of $k$ (see Theorem \ref{theo:main} and Remark \ref{kadmissible}). 

We consider a medical imaging problem. Figure~\ref{fig:eye} displays the procedure applied to the Diabetic Retinopathy detection problem \cite{drd}. While this is by no means an end-to-end treatment of this detection problem, it gives a sense of M-cores' versatility in fitting 
real-world patterns. In particular, M-cores automatically estimates a reasonable number of clusters, independent 
of shape, while pruning away (most importantly in the case of the healthy eye) false clusters due to noisy data. As a result, it correctly picks up a much larger number of clusters in the case of the unhealthy eye.

\subsection{Clustering applications}

\begin{figure*}[ht]
\centering
\includegraphics[width=1\textwidth]{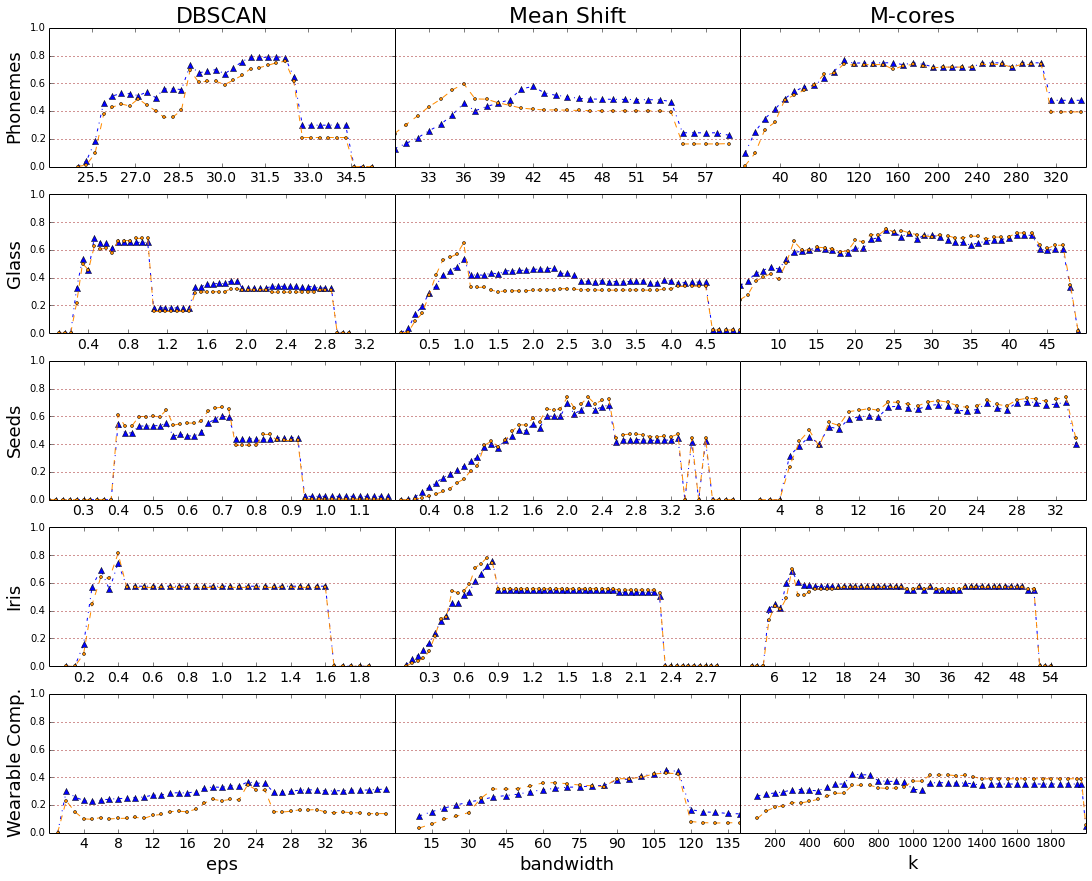}
\caption{Comparison on real datasets (along the rows) across different hyperparameter settings for each algorithm (along the columns). The hyperparameters being tuned are displayed at the bottom of the figure for each clustering algorithm. Scores: the blue line with triangular markers is Adjusted-Mutual-Information, and the dotted red line is Adjusted-Rand-Index. }
\label{fig:realworld}
\end{figure*} 
We now evaluate the performance of M-cores on clustering applications, where for
{\bf clustering:} we assign every point $x_i\in \Xspl$ to $\argmin_{\widehat{M} \in \widehat{\mathcal{M}}} d(x_i, \widehat M)$, i.e. to the closest estimated modal-set. 

We compare M-cores to two common density-based clustering procedures, DBSCAN and Mean-Shift, as implemented in the \textit{sci-kit-learn} package. Mean-Shift clusters data around point-modes, i.e. local-maxima of $f$, and is therefore most similar to M-cores in its objective. 

%DBSCAN clusters around level-sets with the added ambiguity of the choice of level-set (parameter \emph{eps}) potentially inducing a different number of clusters.

%Most importantly, we aim to understand whether the performance of the procedures is stable to varying the tuning parameters, since in practice the right settings are difficult to obtain without labels. To this end, we vary the 
%essential tuning parameters of each procedure and compare them under two established clustering scores 

{\bf Clustering scores.} We compute two established scores which evaluate a clustering against a labeled ground-truth. The \emph{rand-index}-score is the $0$-$1$ accuracy in grouping pairs of points, (see e.g. \cite{hubert}); the \emph{mutual information}-score is the (information theoretic) mutual-information between the distributions induced by the clustering and 
the ground-truth (each cluster is a mass-point of the distribution, see e.g. \cite{vinh2010information}). For both scores we report the \emph{adjusted} version, which adjusts the score so that a random clustering (with the same number of clusters as the ground-truth) scores near $0$ (see e.g. \cite{hubert}, \cite{vinh2010information}). 

{\bf Datasets.} Phonemes \cite{hastie2005elements}, and UCI datasets: Glass, Seeds, Iris, and Wearable Computing. They are described in the table below.% \cite{Lichman:2013}. 
\begin{center}
    \begin{tabular}{ | p{2cm}| p{0.7cm} | p{0.35cm} | p{0.7cm} | p{8cm} |}
    \hline
    {\small \emph{Dataset}} & $n$ & $d$ & {\small \emph{Labels}} & {\small \emph{Description}}  \\ \hline
    {\small Phonemes} & {\small 4509} & {\small 256} & {\small 5} & {\small Log-periodograms of spoken phonemes} \\ \hline
    {\small Glass} & {\small 214} & {\small 7} & {\small 6} & \small{Properties of different types of glass} \\ \hline
    {\small Seeds} & {\small 210} & {\small 7} & {\small 3} & \small{Geometric measurements of wheat kernels} \\ \hline
    {\small Iris} & {\small 150} & {\small 4} & {\small 3}  & \small{Various measurements over species of flowers} \\ \hline
    {\small Wearable} & {\small 10000} & {\small 12} & {\small 5}  & 
    \small{4 sensors on a human body, recording body posture and activity} \\ \hline
    \end{tabular}
\end{center}

{\bf Results.} Figure~\ref{fig:realworld} reports the performance of the procedures for each dataset. Rather than reporting the performance of the 
procedures under \emph{optimal-tuning}, we report their performance \emph{over a range} of hyperparameter settings, 
mindful of the fact that optimal-tuning is hardly found in practice (this is a general problem in clustering given the lack of ground-truth to guide tuning).

For M-cores we vary the parameter $k$. For DBSCAN and Mean-Shift, we vary the main parameters, respectively \emph{eps} (choice of level-set), and \emph{bandwidth} (used in density estimation). 
M-cores yields competitive performance across the board, with stable scores over a large range of values of $k$ (relative to sample size). Such stable performance to large changes in $k$ is quite desirable, considering that proper tuning of hyperparameters remains a largely open problem in clustering. 
\vspace{0.3cm}

{\bf Conclusion} \\

\vspace{-0.2cm} 

We presented a theoretically-motivated procedure which can consistently estimate modal-sets, i.e. nontrivial high-density structures in data, under benign distributional conditions. This procedure is easily implemented and yields competitive and stable scores in clustering applications.

{
%\subsubsection*{References}
%{
\bibliography{refs}
\bibliographystyle{unsrt}

%}
}

\newpage
\appendix

As discussed in the main text, the results are easily extended to handle more general modal-sets where the density can vary by $\epsilon_0$. We therefore will be showing such more general results which directly imply the results in the main text. 

We give a generalization of modal-sets where the density is allowed to vary by $\epsilon_0 \ge 0$, called $\epsilon_0$-modal sets, which will be defined shortly. In order to estimate the $\epsilon_0$-modal sets, we derive Algorithm~\ref{alg:epsilonmodalset}, which is a simple generalization of Algorithm~\ref{alg:modalset}. Algorithm~\ref{alg:modalset} is Algorithm~\ref{alg:epsilonmodalset} with the setting $\epsilon_0 = 0$ and $\tilde{\epsilon} = 0$. Changing $\tilde{\epsilon}$ to larger values will allow us prune false modal-sets away more aggressively, which will be discussed in Appendix~\ref{appendix:pruning}. 

Throughout the Appendix, we restate analogues of the results in the main text for the more general $\epsilon_0$-modal sets and Algorithm~\ref{alg:epsilonmodalset}. It will be understood that these results will imply the results in the main text with the setting $\epsilon_0 = 0$ and $\tilde{\epsilon} = 0$. 

In Appendix~\ref{appendix:pointcloud} we formalize common situations well-modeled by modal-sets.
In Appendix~\ref{appendix:implementation} we give implementation details. 

\section{$\epsilon_0$-modal sets}\label{appendix:eps-modal-set}

\begin{definition}\label{eps-modal-set} For $\epsilon_0 \ge 0$, connected set $M$ is an {\bf $\epsilon_0$-modal set} of $f$ if there exists $f_M > \epsilon_0$ such that $\sup_{x\in M} f(x) = f_M$ and $M$ is a CC of the level set ${\mathcal{X}^{f_M - \epsilon_0}} := \{ x : f(x) \ge f_M - \epsilon_0 \}$.
\end{definition}

We require the following Assumption~\ref{assumption2} on $\epsilon_0$-modal sets. Note that under Assumption~\ref{assumption-main} on modal-sets, Assumption~\ref{assumption2} on $\epsilon_0$-modal sets will hold for $\epsilon_0$ sufficiently small.

\begin{assumption}\label{assumption2}
The $\epsilon_0$-modal sets are on the interior of $\mathcal{X}$ and $f_M \ge 2\epsilon_0$ for all $\epsilon_0$-modal sets $M$. 
\end{assumption}

\begin{remark}
Since each $\epsilon_0$-modal set contains a modal-set, it follows that the number of $\epsilon_0$-modal sets is finite.
\end{remark}

The following extends Proposition~\ref{prop:main-assumptions} to show the additional properties of the regions around the $\epsilon_0$-modal sets necessary in our analysis. The proof is in Appendix~\ref{supportinglemmas}.

%\textbf{.}
\begin{proposition}[Extends Proposition~\ref{prop:main-assumptions}] \label{prop:main-assumptions-general}
For any $\epsilon_0$-modal set $M$, there exists $\lambda_M, A_M, r_M, l_M, u_M, r_s, S_M$ such that the following holds. $A_M$ is a CC of $\mathcal{X}^{\lambda_M} := \{x : f(x) \ge \lambda_M\}$ containing $M$ which satisifies the following.
\begin{itemize} 
\item \emph{$A_M$ isolates $M$ by a valley}: $A_M$ does not intersect any other $\epsilon_0$-modal sets and $A_M$ and $\mathcal{X}^{\lambda_M} \backslash A_M$ are $r_s$-separated by $S_M$ with $r_s>0$ where $r_s$ does not depend on $M$. 

\item \emph{$A_M$ is full-dimensional}: $A_M$ contains an envelope $B(M, r_M)$ of $M$, with $r_M>0$.  
\item \emph{$f$ is smooth around some maximum modal-set in $M$}: There exists modal-set $M_0 \subseteq M$ such that $f$ has density $f_M$ on $M_0$ and $f_M - f(x) \le u_M(d(x, M_0))$ for $x \in B(M_0, r_M)$
\item \emph{$f$ is both \emph{smooth} and has \emph{curvature} around $M$}: $u_M$ and $l_M$ are increasing continuous functions on $[0, r_M]$, $u_M(0) = l_M(0) = 0$ and $u_M(r), l_M(r) > 0$ for $r > 0$, and
\begin{align*}
l_M(d(x, M)) \le f_M -  \epsilon_0 - f(x) \le u_M(d(x, M))  \forall x \in B(M, r_M).
\end{align*}

\end{itemize}
\end{proposition}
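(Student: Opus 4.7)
The plan is to construct all the objects around $M$ by (i) taking $A_M$ to be the CC of $\{f \ge \lambda_M\}$ containing $M$ for a carefully chosen $\lambda_M < f_M - \epsilon_0$, and (ii) building the smoothness/curvature functions via monotone envelopes of $f_M - \epsilon_0 - f$ on concentric shells around $M$. The key structural facts I would lean on are: (a) there are only finitely many $\epsilon_0$-modal sets, each compact, pairwise at positive distance, and in the interior of $\mathcal{X}$ (by Assumption~\ref{assumption2} and the remark following it); (b) $M$ is a CC of the closed set $\{f \ge f_M - \epsilon_0\}$, so points arbitrarily close to but outside of $M$ have $f < f_M - \epsilon_0$; and (c) $f$ is uniformly continuous on compact $\mathcal{X}$.

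For isolation and full-dimensionality, I would fix $r_0 > 0$ small enough that $B(M, r_0)$ is in the interior of $\mathcal{X}$ and meets no other $\epsilon_0$-modal set, and then pick $\lambda_M$ close enough to $f_M - \epsilon_0$ that the CC $A_M$ of $\{f \ge \lambda_M\}$ containing $M$ sits inside $B(M, r_0)$. The key point is that, as $\lambda_M \uparrow f_M - \epsilon_0$, $A_M$ contracts toward $M$: any other CC $M'$ of $\{f \ge f_M - \epsilon_0\}$ near $M$ is separated from $M$ by a valley on which $\sup f$ is strictly below $f_M - \epsilon_0$ (by compactness and continuity), so once $\lambda_M$ exceeds that supremum the valley blocks $A_M$ from extending to $M'$. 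Continuity of $f$ on compact $M$ then gives some $r_M > 0$ with $B(M, r_M) \subseteq \{f > \lambda_M\} \subseteq A_M$. For the $r_s$-separation, I would let $S_M$ be the topological boundary between $A_M$ and $\mathcal{X}^{\lambda_M} \setminus A_M$ and lower $\lambda_M$ once more so that $f < \lambda_M$ on $B(S_M, r_s)$ for some fixed $r_s > 0$; the minimum of these $r_s$ over the finitely many modal-sets is uniform.

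For the maximum modal-set inside $M$, I would let $M_0$ be any CC of the non-empty closed set $\{x \in M : f(x) = f_M\}$ (non-emptiness following from compactness of $M$ and continuity of $f$); that $M_0$ satisfies Definition~\ref{modal-set} follows after possibly shrinking $r_M$. For the curvature/smoothness functions I would start from the raw monotone envelopes
\begin{equation*}
\tilde u_M(r) := \sup_{x \in B(M, r)}\bigl(f_M - \epsilon_0 - f(x)\bigr)_+, \qquad \tilde l_M(r) := \inf_{x \in B(M, r_M),\, d(x, M) \ge r}\bigl(f_M - \epsilon_0 - f(x)\bigr),
\end{equation*}
both of which vanish at $0$ by continuity. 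After shrinking $r_M$ further so that $B(M, r_M) \cap \{f \ge f_M - \epsilon_0\} = M$ (possible by (a) and (b)), $\tilde l_M(r) > 0$ for every $r \in (0, r_M]$, since its defining infimum runs over a compact set on which $f_M - \epsilon_0 - f$ is strictly positive; similarly $\tilde u_M(r) > 0$ for $r > 0$, since $B(M, r)$ always contains points strictly outside $M$. Since bounded monotone functions have at most countably many discontinuities, piecewise-linear interpolation yields continuous non-decreasing $u_M \ge \tilde u_M$ and $l_M \le \tilde l_M$ with $u_M(0) = l_M(0) = 0$ and $u_M(r), l_M(r) > 0$ for $r > 0$; I would enlarge $u_M$ further to also dominate $\sup_{d(x, M_0) \le r}(f_M - f(x))$, which is monotone in $r$ and continuous at $0$, covering item 3.

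The main obstacle is the coupled tuning of $\lambda_M$ and $r_M$: $\lambda_M$ must be low enough to carve out a valley of uniform width $r_s$, yet high enough (with small enough $r_M$) to keep $A_M$ in a thin tube around $M$ and to keep $B(M, r_M)$ from meeting $\{f \ge f_M - \epsilon_0\} \setminus M$. Each is a routine compactness/continuity squeeze in isolation, but simultaneously pinning them down uses the CC-structure of $M$, not merely the inclusion $M \subseteq \{f \ge f_M - \epsilon_0\}$, in an essential way.
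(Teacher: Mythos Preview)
Your proposal is correct and follows the same overall strategy as the paper: build $A_M$ as the CC of a level slightly below $f_M-\epsilon_0$, get $r_M$ from continuity, and produce $u_M,l_M$ as monotone envelopes. Two points deserve tightening, and it is instructive to see how the paper handles them.

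\textbf{The separator $S_M$.} Your choice of $S_M$ as the ``topological boundary between $A_M$ and $\mathcal{X}^{\lambda_M}\setminus A_M$'' does not work as stated: on $\partial A_M$ one has $f=\lambda_M$, so $\sup_{B(S_M,r_s)}f<\lambda_M$ cannot hold for any $r_s>0$, and ``lowering $\lambda_M$ once more'' only moves the target since $A_M$ (and hence your $S_M$) then changes. The paper avoids this circularity by first computing $r_{\min}$, the minimum pairwise distance between the finitely many CCs of $\{f\ge f_M-\epsilon_0\}$, then choosing $\lambda_M$ so that the one-sided Hausdorff distance $d_{H'}(\{f\ge\lambda_M\},\{f\ge f_M-\epsilon_0\})<\tfrac14 r_{\min}$; this forces the CCs at level $\lambda_M$ to remain at pairwise distance $\ge\tfrac12 r_{\min}$, and one can take $S_M=\{x:d(x,A_M)=\tfrac15 r_{\min}\}$ with $r_s$ a fixed fraction of $r_{\min}$. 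In short, the paper fixes $\lambda_M$ once via a Hausdorff-continuity argument and then places $S_M$ at a definite distance inside the gap; your idea works once you replace ``topological boundary'' by such a midway set.

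\textbf{The curvature function $l_M$.} Your shell-infimum $\tilde l_M(r)=\inf_{r\le d(x,M)\le r_M}(f_M-\epsilon_0-f(x))$ is equivalent in spirit to the paper's construction, which instead defines $g_M(t)=d(\{f\ge f_M-\epsilon_0-t\}\cap A_M,\,M)$ and takes $l_M=g_M^{-1}$. These are dual (``given drop, how far'' versus ``given distance, what drop''), and either yields a strictly increasing continuous $l_M$ with $l_M(0)=0$; the paper's version has the minor advantage of being strictly increasing by construction, whereas yours requires the additional step of perturbing a merely non-decreasing envelope. Your treatment of $u_M$ (enlarging it to dominate both the $M$- and $M_0$-centered gaps) matches the paper's, which simply takes the max of the two.
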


\begin{algorithm}[tb]
   \caption{M-cores (estimating $\epsilon_0$-modal-sets)}
   \label{alg:epsilonmodalset}
\begin{algorithmic}
   \STATE Initialize $\widehat{\mathcal{M}}:= \emptyset$. Define $\beta_k = 4\frac{C_{\delta, n}}{\sqrt{k}}$. 
   \STATE Sort the $X_i$'s in descending order of $f_k$ values. 
   \FOR{$i=1$ {\bfseries to} $n$}
   \STATE Define $\lambda := f_k(X_i)$.
   \STATE Let $A$ be the CC of $G(\lambda - 9\beta_k \lambda - \epsilon_0 - \tilde{\epsilon})$ that contains $X_i$.
   \IF{$A$ is disjoint from all cluster-cores in $\widehat{\mathcal{M}}$}
   \STATE Add $\widehat{M} := \{ x \in A : f_k(x) > \lambda - \beta_k \lambda  - \epsilon_0 \}$ to
    $\widehat{\mathcal{M}}$. 
    \ENDIF
   \ENDFOR
   \STATE \textbf{return} $\widehat{\mathcal{M}}$. 

\end{algorithmic}
\end{algorithm}

Next we give admissibility conditions for $\epsilon_0$-modal sets. The only changes (compared to admissibility conditions for modal-sets) are the constant factors. In particular, when $\epsilon_0=0$ and $\tilde\epsilon = 0$ it is the admissibility conditions for modal-sets. As discussed in the main text, a larger $\tilde\epsilon$ value will prune more aggressively at the cost of requiring a larger number of samples. Furthermore, it is implicit below that $\tilde\epsilon < l_M(\min\{r_M, r_s\}/2)$. This ensures that we don't prune too aggressively that the estimated $\epsilon_0$-modal sets merge together.

\begin{definition} $k$ is {\bf admissible} for an $\epsilon_0$-modal set $M$ if (letting $u_M^{-1}, l_M^{-1}$ be the inverses of $u_M, l_M$)
\begin{align*}
\max \left\{ \left(\frac{24C_{\delta, n} (\sup_{x \in \mathcal{X}} f(x) + \epsilon_0)}{l_M(\min\{r_M, r_s\}/2) - \tilde\epsilon}  \right)^2, 2^{7 + d} C_{\delta, n}^2 \right\} \\
\le k \le \frac{v_d \cdot (f_M - \epsilon_0)}{2^{2+2d}} \left(u_M^{-1} \left ( \frac{C_{\delta, n} (f_M - \epsilon_0)}{2\sqrt{k}}\right) \right)^d \cdot n.
\end{align*}
\end{definition}

\section{Supporting lemmas and propositions}\label{supportinglemmas}

\begin{proof}[Proof of Proposition \ref{prop:main-assumptions-general}]
Let $M$ be an $\epsilon_0$-modal set with maximum density $f_M$ and minimum density $f_M - \epsilon_0$ (i.e. $f_M - \epsilon_0 \le f(x) \le f_M$ for $x\in M$). 
Define ${\mathcal{X}^{\lambda}} := \{ x : f(x) \ge \lambda\}$.
Let $A_1,...,A_m$ be the CCs of ${\mathcal{X}^{f_M - \epsilon_0}}$ (there are a finite number of CCs since each CC contains at least one modal-set and the number of modal-sets is finite). 
Define $r_{\text{min}} :=  \min_{A_i \neq A_j} \inf_{x \in A_i, x' \in A_j} |x - x'|$, which is the minimum distance between pairs of points in different CCs.
Next, define the one-sided Hausdorff distance for closed sets $A, B$: $d_{H'}(A, B) := \max_{x \in A} \min_{x \in B} |x - y|$. Then consider
$g(t) := d_{H'}({\mathcal{X}^{f_M - \epsilon_0 - t}}, {\mathcal{X}^{f_M - \epsilon_0}})$.

Since $f$ is continuous and has a finite number of modal-sets, $g$ has a finite number of points of discontinuity (i.e. when $f_M - \epsilon_0 - t$ is the density of some modal-set) and we have $g(t) \rightarrow 0$ as $t \rightarrow 0$. 
Thus, there exists $0 < \lambda_M < f_M - \epsilon_0$ such that $g(f_M - \epsilon_0 - \lambda_M) < \frac{1}{4}r_{\text{min}}$ and there are no modal-sets or $\epsilon_0$-modal sets with minimum density in $[\lambda_M, f_M - \epsilon_0)$.  
For each $A_i$, there exists exactly one CC of $\mathcal{X}^{\lambda_M}$, $A_i'$, such that
$A_i \subset A_i'$. Since $g(f_M - \epsilon_0 - \lambda_M) < \frac{1}{4}r_{\text{min}}$, it follows that $A_i' \subseteq B(A_i, \frac{1}{4}r_{\text{min}})$. Thus, the $A_i'$'s are pairwise separated by distance at least $\frac{1}{2}r_{\text{min}}$. Moreover, there are no other CCs in ${\mathcal{X}^{f_M - \epsilon_0}}$ because there are no modal-sets with density in $[\lambda_M, f_M - \epsilon_0)$. 

Then, let $A_M$ be the CC of $\mathcal{X}^{\lambda_M}$ containing $M$. Then $A_M$ contains no other $\epsilon_0$-modal sets and it is $\frac{1}{5}r_{\text{min}}$-separated by $\mathcal{X}^{\lambda_M} \backslash M$ by some set $S_M$ (i.e. take $S_M := \{x : d(x, A_M) = \frac{1}{5} r_{\text{min}}\}$). Since there is a finite number of modal-sets, it suffices to take $r_s$ to be the minimum of the corresponding $\frac{1}{5}r_{\text{min}}$ for each $\epsilon_0$-modal set. This resolves the first part of the proposition.

Let $h(r) := \inf_{x \in B(M, r)} f(x)$. Since $f$ is continuous, $h$ is continuous and decreasing with $h(0) = f_M - \epsilon_0 > \lambda_M$. Take $r_M > 0$ sufficiently small so that $h(r_M) > \lambda_M$. This resolves the second part of the proposition.

Take $M_0$ to be some modal-set with density $f_M$ in $M$. One must exist since $M$ has local-maxima at level $f_M$.
For each $r$, let $u_M(r) := \max\{f_M - \epsilon_0 - \inf_{x \in B(M, r)} f(x), f_M - \inf_{x \in B(M_0, r)} f(x) \}$. Then, we have $f_M - f(x) \le u_M(d(x, M_0))$ and $f_M - \epsilon_0 - f(x) \le u_M(d(x, M))$. Clearly $u_M$ is increasing on $[0, r_M]$ with $u_M(0) = 0$ and continuous since $f$ is continuous.  If $u_M$ is not strictly increasing then we can replace it with a strictly increasing continuous function while still having $u_M(r) \rightarrow 0$ as $r \rightarrow 0$ (i.e. by adding an appropriate strictly increasing continuous function). This resolves the third part of the proposition and the upper bound in the fourth part of the proposition. 

Now, define $g_M(t) := d({\mathcal{X}^{f_M - \epsilon_0 - t}} \cap {A_M}, M)$ 
for $t \in  [0, \frac{1}{2} (f_M - \epsilon_0 - \lambda_M)]$. 
Then, $g_M$ is continuous, $g_M(0) = 0$ and is strictly increasing. 
Define $l_M$ to be the inverse of $g_M$. Clearly $l_M$ is continuous, strictly increasing, and $l_M(r) \rightarrow 0$ as $r \rightarrow 0$. From the definition of $g_M$, it follows that for $x \in B(M, r_M)$, $f_M - \epsilon_0- f(x) \ge l_M(d(x, M))$
as desired.
\end{proof}

We need the following result giving guarantees on the empirical balls.
\begin{lemma}[\citep{CD10}] \label{ball_bounds} 
Pick $0 < \delta < 1$. Assume that $k \ge d \log n$. Then with probability at least $1 - \delta$, for every ball $B \subset \mathbb{R}^d$ we have
\begin{align*}
\mathcal{F}(B) \ge C_{\delta, n} \frac{\sqrt{d \log n}}{n} &\Rightarrow \mathcal{F}_n(B) > 0\\
\mathcal{F}(B) \ge \frac{k}{n} + C_{\delta, n} \frac{\sqrt{k}}{n} &\Rightarrow \mathcal{F}_n(B) \ge \frac{k}{n} \\
\mathcal{F}(B) \le \frac{k}{n} - C_{\delta, n}\frac{\sqrt{k}}{n} &\Rightarrow \mathcal{F}_n(B) < \frac{k}{n}.
\end{align*}
\end{lemma}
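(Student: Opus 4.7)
The plan is to prove this via a uniform relative-deviation inequality over the class of closed Euclidean balls in $\mathbb{R}^d$, which is a VC class of dimension $d+1$. All three implications will then fall out of a single high-probability event by case analysis on the magnitude of $\mathcal{F}(B)$.

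The first step is to invoke (or re-derive via standard symmetrization and bracketing) a Bernstein-type relative VC concentration bound: with probability at least $1-\delta$, simultaneously for every ball $B \subset \mathbb{R}^d$,
\begin{align*}
|\mathcal{F}_n(B) - \mathcal{F}(B)| \le c_1\sqrt{\mathcal{F}(B) \cdot \frac{d\log n + \log(1/\delta)}{n}} + c_2 \cdot \frac{d\log n + \log(1/\delta)}{n},
\end{align*}
for absolute constants $c_1, c_2$. This uses only that $n\mathcal{F}_n(B)$ is binomial with mean $n\mathcal{F}(B)$ and that the ball class is VC of dimension $d+1$; the hypothesis $k \ge d\log n$ is used to ensure the Bernstein (variance-dependent) term dominates the additive remainder in the relevant regime.

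Next, I would specialize this single event to derive each implication. For the first, if $\mathcal{F}(B) \ge C_{\delta,n}\sqrt{d\log n}/n$, then with the choice $C_{\delta,n} = 16\log(2/\delta)\sqrt{d\log n}$ the deviation above is at most $\mathcal{F}(B)/2$, so $\mathcal{F}_n(B) \ge \mathcal{F}(B)/2 > 0$. For the second, write the hypothesis as $\mathcal{F}(B) - k/n \ge C_{\delta,n}\sqrt{k}/n$; since $\mathcal{F}(B) \ge k/n$, the Bernstein deviation is bounded by a constant multiple of $\sqrt{\mathcal{F}(B)\,d\log n/n}$, which is dominated by $C_{\delta,n}\sqrt{k}/n$ once one expands $\mathcal{F}(B) \le 2k/n$ in the relevant range and uses $C_{\delta,n} \propto \sqrt{d\log n}$. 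The third implication is symmetric, using the upper-tail version.

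The main obstacle is matching the single uniform event to all three regimes with the specific constant $C_{\delta,n} = 16\log(2/\delta)\sqrt{d\log n}$, rather than three separate applications (which would cost an extra union bound and worsen the constants). In particular, the bound must be \emph{variance-dependent} so that it is simultaneously tight both when $\mathcal{F}(B)$ is at the detection threshold $\sqrt{d\log n}/n$ and when $\mathcal{F}(B) \approx k/n$. Once the uniform Bernstein inequality is in hand, each of the three implications reduces to a direct algebraic check; the work is entirely in establishing the two-sided relative concentration with the correct logarithmic factors.
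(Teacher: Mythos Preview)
The paper does not prove this lemma at all: it is stated as a citation of \cite{CD10} and used as a black box throughout the analysis. So there is no ``paper's own proof'' to compare against; your sketch is essentially the standard derivation that underlies the cited result, and the overall strategy (relative/Bernstein-type VC concentration over the class of Euclidean balls, followed by case analysis) is exactly right.

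One small gap in your handling of the second implication: you write that the deviation is dominated by $C_{\delta,n}\sqrt{k}/n$ ``once one expands $\mathcal{F}(B)\le 2k/n$ in the relevant range.'' But the hypothesis $\mathcal{F}(B)\ge k/n + C_{\delta,n}\sqrt{k}/n$ gives no upper bound on $\mathcal{F}(B)$, so you cannot assume $\mathcal{F}(B)\le 2k/n$. The clean way is a monotonicity argument: the map $p\mapsto p - c\sqrt{p\,(d\log n + \log(1/\delta))/n}$ is increasing on $[k/n,\infty)$ once $k\ge d\log n$ (this is precisely where that hypothesis enters), so it suffices to check the boundary case $\mathcal{F}(B)=k/n + C_{\delta,n}\sqrt{k}/n$. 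At that point the variance term is of order $\sqrt{k\,d\log n}/n$, which is absorbed by $C_{\delta,n}\sqrt{k}/n$ since $C_{\delta,n}\propto\sqrt{d\log n}$. The third implication is handled the same way from the other side. With that fix, your argument goes through.
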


Lemma~\ref{fk_bounds} of \cite{dasgupta2014optimal} establish convergence rates for $f_k$. 

\begin{definition}\label{rhat} For $x \in \mathbb{R}^d$ and $\epsilon > 0$, define 
$\hat{r}(\epsilon, x):=\sup\left\{r : \sup_{x' \in B(x, r)} f(x') - f(x) \le \epsilon \right\}$  and
$\check{r}(\epsilon, x):=\sup\left\{r : \sup_{x' \in B(x, r)} f(x) - f(x') \le \epsilon \right\}$.
\end{definition}

\begin{lemma}[Bounds on $f_k$]\label{fk_bounds}  Suppose that $\frac{C_{\delta, n}}{\sqrt{k}} < \frac{1}{2}$. Then the follow two statements each hold with probability at least $1 - \delta$: 
\begin{align*}
f_k(x) < \left(1 + 2\frac{C_{\delta, n}}{\sqrt{k}} \right)(f(x) + \epsilon),
\end{align*}
for all $x\in \mathbb{R}^d$ and all $\epsilon > 0$ provided $k$ satisfies $v_d\cdot \hat{r}(\epsilon, x)^d \cdot (f(x) + \epsilon) \ge \frac{k}{n} - C_{\delta, n}\frac{\sqrt{k}}{n}$.
\begin{align*}
f_k(x) \ge \left(1 - \frac{C_{\delta, n}}{\sqrt{k}} \right)(f(x) - \epsilon),
\end{align*}
for all $x\in \mathbb{R}^d$ and all $\epsilon > 0$  provided $k$ satisfies $v_d\cdot \check{r}(\epsilon, x)^d \cdot (f(x) - \epsilon) \ge \frac{k}{n} + C_{\delta, n}\frac{\sqrt{k}}{n}$. 
\end{lemma}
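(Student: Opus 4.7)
The plan is to directly translate the uniform ball-mass concentration of Lemma~\ref{ball_bounds} into a two-sided sandwich on $r_k(x)$, and then convert this into the stated bounds on $f_k(x) = k/(n\,v_d\,r_k(x)^d)$ by elementary algebra. The hypothesis $C_{\delta,n}/\sqrt{k} < 1/2$ already gives $k > 4C_{\delta,n}^2 \ge d\log n$ (since $C_{\delta,n}^2 \gtrsim d\log n$), so Lemma~\ref{ball_bounds} applies on a single event $E_\delta$ of probability at least $1-\delta$; the resulting inequalities will then hold uniformly over all $x\in\real^d$ and all $\epsilon>0$ because Lemma~\ref{ball_bounds} is itself uniform over all balls $B\subset\real^d$.

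For the upper bound on $f_k(x)$ I would lower-bound $r_k(x)$ on $E_\delta$. Set $r^* := \bigl((k - C_{\delta,n}\sqrt{k})/(n\,v_d\,(f(x)+\epsilon))\bigr)^{1/d}$. The hypothesis $v_d\,\hat{r}(\epsilon,x)^d(f(x)+\epsilon)\ge k/n - C_{\delta,n}\sqrt{k}/n$ rearranges exactly to $r^*\le \hat{r}(\epsilon,x)$, so by Definition~\ref{rhat}, $\sup_{x'\in B(x,r^*)} f(x')\le f(x)+\epsilon$, and therefore $\mathcal{F}(B(x,r^*))\le v_d(r^*)^d(f(x)+\epsilon) = k/n - C_{\delta,n}\sqrt{k}/n$. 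The third implication of Lemma~\ref{ball_bounds} then forces $\mathcal{F}_n(B(x,r^*)) < k/n$, so $B(x,r^*)$ contains fewer than $k$ samples and $r_k(x) > r^*$. Substituting into the definition of $f_k$ yields $f_k(x) < (f(x)+\epsilon)/(1 - C_{\delta,n}/\sqrt{k})$, and the elementary inequality $1/(1-t)\le 1+2t$ for $t<1/2$ produces the stated form.

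For the lower bound I would symmetrically upper-bound $r_k(x)$. Set $r^{**} := \bigl((k + C_{\delta,n}\sqrt{k})/(n\,v_d\,(f(x)-\epsilon))\bigr)^{1/d}$; the hypothesis forces $r^{**}\le \check{r}(\epsilon,x)$, so by Definition~\ref{rhat} we have $f(x')\ge f(x)-\epsilon$ throughout $B(x,r^{**})$ and hence $\mathcal{F}(B(x,r^{**}))\ge v_d(r^{**})^d(f(x)-\epsilon) = k/n + C_{\delta,n}\sqrt{k}/n$. The second implication of Lemma~\ref{ball_bounds} then gives $\mathcal{F}_n(B(x,r^{**}))\ge k/n$, hence $r_k(x)\le r^{**}$ and $f_k(x)\ge (f(x)-\epsilon)/(1+C_{\delta,n}/\sqrt{k})$; the scalar inequality $1/(1+t)\ge 1-t$ closes the bound.

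There is no deep obstacle: the whole content is carried by Lemma~\ref{ball_bounds}, and everything after it is algebra plus the two elementary inequalities $1/(1-t)\le 1+2t$ (for $t<1/2$) and $1/(1+t)\ge 1-t$. The one subtlety worth flagging is that Lemma~\ref{ball_bounds} is uniform over \emph{all} balls on one high-probability event, so no additional union bound over $x$ or $\epsilon$ is required to obtain the ``for all $x\in\real^d$ and all $\epsilon>0$'' quantifier; it is also why the strict inequality in the upper bound follows cleanly from the strict inequality $\mathcal{F}_n(B) < k/n$ in Lemma~\ref{ball_bounds}.
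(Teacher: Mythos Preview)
Your argument is correct and is exactly the standard derivation. The paper does not give its own proof of this lemma; it simply attributes the statement to \cite{dasgupta2014optimal} (see the sentence immediately preceding Definition~\ref{rhat}), so there is nothing to compare against beyond noting that your route through Lemma~\ref{ball_bounds} is precisely how such $k$-NN density bounds are obtained in that line of work. One tiny point worth making explicit: when $r^*=\hat r(\epsilon,x)$ (equality in the hypothesis), the inequality $\sup_{x'\in B(x,r^*)}f(x')\le f(x)+\epsilon$ still holds because $f$ is continuous on compact support, so $r\mapsto \sup_{x'\in B(x,r)}f(x')$ is continuous and the defining set in Definition~\ref{rhat} is closed; otherwise your argument is complete as written.
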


\begin{lemma}[Extends Lemma~\ref{r_n_upper_bound}] \label{r_n_upper_bound_general}(Upper bound on $r_n$)  Let $M$ be an $\epsilon_0$-modal set with maximum density $f_M$ and suppose that $k$ is admissible. With probability at least $1 - \delta$,
\begin{align*}
r_n(M) \le  \left(\frac{2C_{\delta, n} \sqrt{d \log n}}{n\cdot v_d\cdot (f_M - \epsilon_0)}\right)^{1/d}.
\end{align*}
\end{lemma}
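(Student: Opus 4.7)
Let $r^\star := \left(\frac{2C_{\delta, n} \sqrt{d \log n}}{n\cdot v_d\cdot (f_M - \epsilon_0)}\right)^{1/d}$. It suffices to show that for every $x \in M$, the ball $B(x, r^\star)$ contains at least one sample point of $\Xspl$; then $r_n(x) \le r^\star$ for all $x \in M$, giving $r_n(M)\le r^\star$. By the first implication of Lemma~\ref{ball_bounds}, with probability at least $1 - \delta$ every Euclidean ball $B$ satisfying $\mathcal{F}(B) \ge C_{\delta, n}\sqrt{d \log n}/n$ contains a sample. The task therefore reduces to verifying the mass bound
\begin{align*}
\mathcal{F}(B(x, r^\star)) \,\ge\, C_{\delta,n}\frac{\sqrt{d\log n}}{n} \quad \text{for every } x \in M.
\end{align*}

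The key step is to show that $f \ge (f_M-\epsilon_0)/2$ throughout $B(x, r^\star)$; this immediately gives the mass bound, since by construction $v_d (r^\star)^d (f_M - \epsilon_0)/2 = C_{\delta,n}\sqrt{d\log n}/n$. To prove this pointwise density lower bound, I would invoke the smoothness function $u_M$ from Proposition~\ref{prop:main-assumptions-general}: for any $x \in M$ and $y \in B(x, r^\star)$, we have $d(y, M) \le \|y-x\| \le r^\star$, so provided $r^\star \le r_M$, the proposition yields $f(y) \ge f_M - \epsilon_0 - u_M(r^\star)$.

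The main (mild) obstacle is to confirm that $r^\star \le u_M^{-1}((f_M - \epsilon_0)/2)$, which will simultaneously ensure $r^\star \le r_M$ (so that $u_M$ applies) and that $u_M(r^\star) \le (f_M - \epsilon_0)/2$. This is where admissibility of $k$ enters. The upper bound in admissibility rearranges to $\left(u_M^{-1}\!\left(\tfrac{C_{\delta,n}(f_M - \epsilon_0)}{2\sqrt{k}}\right)\right)^d \ge \tfrac{2^{2+2d} k}{v_d (f_M - \epsilon_0) n}$, while the lower bound $k \ge 2^{7+d} C_{\delta,n}^2$ (together with $C_{\delta,n}\ge \sqrt{d\log n}$) forces the right-hand side to dominate $(r^\star)^d = \tfrac{2C_{\delta,n}\sqrt{d\log n}}{n v_d(f_M-\epsilon_0)}$ by a direct comparison. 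Monotonicity of $u_M$ then gives $u_M(r^\star) \le C_{\delta,n}(f_M - \epsilon_0)/(2\sqrt{k}) \le (f_M - \epsilon_0)/2$, the last inequality using $k \ge C_{\delta,n}^2$. Plugging $f \ge (f_M-\epsilon_0)/2$ on $B(x, r^\star)$ into the mass bound and applying Lemma~\ref{ball_bounds} then closes the argument with overall probability at least $1-\delta$.
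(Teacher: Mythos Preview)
Your proposal is correct and follows essentially the same route as the paper: lower-bound the mass $\mathcal{F}(B(x,r^\star))$ via the density lower bound $f \ge (f_M-\epsilon_0)/2$ on $B(x,r^\star)$, obtained from $u_M(r^\star)\le (f_M-\epsilon_0)/2$ via admissibility, and then apply the first implication of Lemma~\ref{ball_bounds}. The only cosmetic difference is that the paper inserts an auxiliary radius $r := (4k/(n v_d f_M))^{1/d}$ and argues $r^\star \le r$ and $u_M(r)\le (f_M-\epsilon_0)/2$, whereas you bound $u_M(r^\star)$ directly from the admissibility inequality; both amount to the same computation.
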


\begin{proof}[Proof of Lemma~\ref{r_n_upper_bound_general}]
Define $r_0 := \left(\frac{2C_{\delta, n} \sqrt{d \log n}}{nv_d\cdot (f_M - \epsilon_0)}\right)^{1/d}$ and $r := (4k/(nv_df_M))^{1/d}$. Since $k$ is admissible, we have that $u_M(r_0) \le u_M(r) \le (f_M  - \epsilon_0)/ 2$. We have
\begin{align*}
\mathcal{F}(B(x, r_0)) &\ge v_d{r_0}^d(f_M -\epsilon_0 - u_M(r_0)) \ge  v_d{r_0}^d (f_M - \epsilon_0)/2 
= \frac{C_{\delta, n} \sqrt{d\log n}}{n}.
\end{align*}
By Lemma~\ref{ball_bounds}, this implies that $\mathcal{F}_n (B(x, r_0)) > 0$ with probability at least $1 - \delta$ and therefore we have $r_n(x) \le r_0$.
\end{proof}

\section{Isolation Results} \label{appendix:isolation}

The following extends Lemma~\ref{isolation} to handle more general
$\epsilon_0$-modal sets and pruning parameter $\tilde\epsilon$. 

\begin{lemma}[Extends Lemma~\ref{isolation}] (Isolation) \label{isolation_general} 
%\textbf{Restatement of Lemma~\ref{isolation}} (Isolation)\textbf{.} 
Let $M$ be an $\epsilon_0$-modal set and $k$ be admissible for $M$.  Suppose $0 \le \tilde\epsilon < l_M(\min\{r_M, r_s\}/2)$ and let $\hat{x}_M := \argmax_{x \in \mathcal{X}_M \cap X_{[n]}} f_k(x)$. Then the following holds
with probability at least $1-5\delta$: when processing sample point $\hat{x}_M$ in Algorithm~\ref{alg:epsilonmodalset} we will add $\widehat{M}$ to $\widehat{\mathcal{M}}$ where  
$\widehat{M}$ does not contain points outside of $\mathcal{X}_M$.
\end{lemma}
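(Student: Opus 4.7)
\textbf{Proof plan for Lemma~\ref{isolation_general}.}
When the algorithm reaches $\hat{x}_M$, set $\lambda := f_k(\hat{x}_M)$ and $\tilde\lambda := \lambda - 9\beta_k\lambda - \epsilon_0 - \tilde\epsilon$, and let $A$ be the CC of $G(\tilde\lambda)$ containing $\hat{x}_M$. The lemma requires two things: (a) that $A$ be disjoint from every cluster-core already in $\widehat{\mathcal{M}}$, so that a new $\widehat{M}=\{x\in A : f_k(x)>\lambda-\beta_k\lambda-\epsilon_0\}$ is actually added, and (b) that $\widehat{M}\subseteq\mathcal{X}_M$. The plan is to prove the strictly stronger statement $A\subseteq A_M\subseteq\mathcal{X}_M$, which delivers both (a) and (b). The $5\delta$ failure budget will be spent on Lemma~\ref{r_n_upper_bound_general}, on uniform-over-$x$ applications of the upper and lower parts of Lemma~\ref{fk_bounds}, and on the finite union bound used to compare $A$ against cores belonging to other $\epsilon_0$-modal-sets.

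The first step is a lower bound on $\lambda$. By Lemma~\ref{r_n_upper_bound_general}, some sample $\tilde{x}\in X_{[n]}$ sits within $r_n(M)$ of the inner modal-set $M_0\subseteq M$ provided by Proposition~\ref{prop:main-assumptions-general}. The upper half of admissibility forces $r_n(M)$ to be small enough that $u_M(r_n(M)) = O(\beta_k(f_M-\epsilon_0))$, so the upper-envelope inequality of Proposition~\ref{prop:main-assumptions-general} gives $f(\tilde{x})\ge f_M - O(\beta_k(f_M-\epsilon_0))$. The lower bound in Lemma~\ref{fk_bounds} then yields $f_k(\tilde{x})\ge f_M(1-O(\beta_k))$, and since $\hat{x}_M$ is the argmax of $f_k$ on $\mathcal{X}_M\cap X_{[n]}$, $\lambda\ge f_k(\tilde{x})$.

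The core step is to localize the vertices of $G(\tilde\lambda)$. Let $r^*:=\min\{r_M,r_s\}/2$ and, analogously, $r^*_{M^*}:=\min\{r_{M^*},r_s\}/2$ for every other $\epsilon_0$-modal-set $M^*$. For $x\in B(M,r_M)$ with $d(x,M)\ge r^*$, Proposition~\ref{prop:main-assumptions-general} gives $f(x)\le f_M-\epsilon_0-l_M(r^*)$, while admissibility's lower bound on $k$ forces $l_M(r^*)\ge \tilde\epsilon + 6\beta_k(\sup_{x\in\mathcal{X}}f(x)+\epsilon_0)$. Combining these with the upper bound in Lemma~\ref{fk_bounds} and the estimate on $\lambda$ above, a short computation gives $f_k(x)<\tilde\lambda$, so $x\notin G(\tilde\lambda)$. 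Running the same reasoning at every $M^*$ (finite union bound), every vertex of $G(\tilde\lambda)$ lies in some $B(M^*,r^*_{M^*})$; these balls are pairwise disjoint because $d(M^*,S_{M^*})\ge r_s$ and $r^*_{M^*}\le r_s/2$. The same upper half of admissibility also forces $r_k(x) = (k/(n v_d f_k(x)))^{1/d} < r_s/2$ for every $x\in G(\tilde\lambda)$, so no graph edge of $G(\tilde\lambda)$ can bridge two distinct $B(M^*,r^*_{M^*})$'s. Hence the CC $A$ of $\hat{x}_M$ stays inside $B(M,r^*)\subseteq A_M\subseteq\mathcal{X}_M$, establishing (b).

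Claim (a) then follows quickly: any prior $\widehat{M'}\in\widehat{\mathcal{M}}$ was inserted when some $\hat{x}_{M'}$ with $f_k(\hat{x}_{M'})>\lambda$ was processed, and the maximality of $\hat{x}_M$ on $\mathcal{X}_M\cap X_{[n]}$ forces $M'\ne M$; the argument just carried out, applied to $M'$, shows $\widehat{M'}\subseteq A_{M'}$, which is disjoint from $A_M$. So $A$ is disjoint from every prior cluster-core and the algorithm indeed adds $\widehat{M}\subseteq A\subseteq\mathcal{X}_M$. I expect the hardest part to be the third paragraph: the admissibility constants $24$, $2^{7+d}$, and $2^{2+2d}$ have to be tuned simultaneously so that (i) $\tilde\lambda$ strictly exceeds the $f_k$ envelope outside each $B(M^*,r^*_{M^*})$, while (ii) the $k$-NN radii $r_k$ in the graph stay strictly below the valley half-width $r_s/2$. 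These two requirements pull $k$ in opposite directions, and the admissibility window is precisely where both can hold.
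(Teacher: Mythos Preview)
Your core step contains a genuine gap. The claim that ``every vertex of $G(\tilde\lambda)$ lies in some $B(M^*,r^*_{M^*})$'' is false in general. The threshold $\tilde\lambda$ is anchored to $f_M$ (it is roughly $f_M-\epsilon_0$ minus small terms), but another $\epsilon_0$-modal set $M^*$ may sit at a much higher level $f_{M^*}\gg f_M$. A point $x\in A_{M^*}$ with $f(x)$ well above $\tilde\lambda$ but well below $f_{M^*}-\epsilon_0$ will be a vertex of $G(\tilde\lambda)$ and yet lie outside $B(M^*,r^*_{M^*})$, since $l_{M^*}(r^*_{M^*})$ need not exceed $f_{M^*}-\epsilon_0-f(x)$. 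Your annulus bound controls $f_k$ only on $B(M^*,r_{M^*})\setminus B(M^*,r^*_{M^*})$; it says nothing about the rest of $A_{M^*}$, nor about points outside every $B(M^*,r_{M^*})$. Compounding this, running the argument at every $M^*$ would require $k$ to be admissible for every $M^*$, whereas the hypothesis gives admissibility only for $M$. Your part~(a) argument inherits the same defect: a prior core $\widehat{M'}$ is created when the algorithm processes \emph{some} sample $X_j$ with $f_k(X_j)>\lambda$, not necessarily a point of the form $\hat x_{M'}$ for any modal set $M'$, so you cannot invoke the isolation argument ``applied to $M'$''.

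The paper sidesteps all of this by arguing purely locally, through the separator $S_M$ of Proposition~\ref{prop:main-assumptions-general}. With $\bar r:=\min\{r_M,r_s\}/2$ it shows three things, each using only $l_M,u_M,r_M,r_s$ and admissibility for $M$: (i) $G(\tilde\lambda)$ has no vertex in $\mathcal{X}_M\setminus B(M,\bar r)$; (ii) $G(\tilde\lambda)$ has no vertex in $B(S_M,r_s/2)$; and (iii) every $x\in B(M,\bar r)$ has $r_k(x)<r_s$, so no edge of $G(\tilde\lambda)$ can jump from $B(M,\bar r)$ across the empty band $B(S_M,r_s/2)$ into $\mathcal{X}\setminus\mathcal{X}_M$. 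Hence the CC $A$ of $\hat x_M$ is trapped in $B(M,\bar r)\subseteq\mathcal{X}_M$, regardless of what $G(\tilde\lambda)$ looks like elsewhere. Part~(a) then falls out: any prior core was created at some $X_j$ with $f_k(X_j)>\lambda$, so $X_j\notin\mathcal{X}_M$ by maximality of $\hat x_M$; since $G$ at a higher threshold is a subgraph of $G(\tilde\lambda)$, that prior core sits in the CC of $X_j$ in $G(\tilde\lambda)$, which (i)--(iii) already show is disjoint from $A$.
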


\begin{proof}  Define  $\widehat{f}_M := f_k(\hat{x}_M)$, $\lambda = \widehat{f}_M$ and $\bar{r} := \min\{r_M, r_s\} / 2$.
It suffices to show that (\rm{i}) $\mathcal{X} \backslash \mathcal{X}_M$ and $B(M, \bar{r})$ are disconnected in $G(\lambda -  9\beta_k \lambda - \epsilon_0 - \tilde{\epsilon})$ and (\rm{ii}) $\hat{x}_M \in B(M, \bar{r})$. 

In order to show (\rm{i}), we first show that $G(\lambda - 9\beta_k \lambda - \epsilon_0 - \tilde{\epsilon})$ contains no points from $B(S_M, r_s/2)$ and no points from $\mathcal{X}_M \backslash B(M, \bar{r})$. Then, all that will be left is showing that there are no edges between $B(M, \bar{r})$ and $\mathcal{X} \backslash \mathcal{X}_M$.

We first prove bounds on $f_k$ that will help us show (\rm{i}) and (\rm{ii}). Let $\bar{F} := f_M - \epsilon_0 - l_M(\bar{r}/2)$. Then for all $x \in \mathcal{X}_M \backslash B(M, \bar{r})$, we have $\hat{r}(\bar{F} - f(x), x) \ge \bar{r}/2$. Thus the conditions for Lemma~\ref{fk_bounds} are satisfied by the admissibility of $k$ and hence $f_k(x) < \left(1 + 2\frac{C_{\delta, n}}{\sqrt{k}} \right) \bar{F}$. Now,
\begin{align*}
\sup_{x \in \mathcal{X}_M\backslash B(M, \bar{r})} f_k(x) &< (1 + 2\frac{C_{\delta, n}}{\sqrt{k}}) \bar{F} = (1 + 2\frac{C_{\delta, n}}{\sqrt{k}}) (f_M - \epsilon_0 - l_M(\bar{r}/2))\\
 &\le (1 + 2\frac{C_{\delta, n}}{\sqrt{k}})^3 \widehat{f}_M - (1 + 2\frac{C_{\delta, n}}{\sqrt{k}}) \cdot (\epsilon_0 + l_M(\bar{r}/2)) \le \lambda-  9\beta_k \lambda - \epsilon_0 - \tilde{\epsilon},
\end{align*}
where the second inequality holds by using Lemma~\ref{fk_bounds} as follows. Choose $x \in M_0$ and $\epsilon = \frac{C_{\delta, n}}{2\sqrt{k}} f_M$. Then $\check{r}(\epsilon, x) \ge u^{-1}(\epsilon)$. The conditions for Lemma~\ref{fk_bounds} hold by the admissibility of $k$ and thus $\widehat{f}_M \ge f_k(x) \ge (1 - C_{\delta, n}/\sqrt{k})^2 f_M$. Furthermore it follows from Lemma~\ref{fk_bounds} that $\widehat{f}_M <  (1 + 2C_{\delta, n}/\sqrt{k}) f_M$; 
combine this admissibility of $k$ to obtain the last inequality. Finally, from the above, we also have $\sup_{x \in \mathcal{X}_M\backslash B(M, \bar{r})} f_k(x) < \widehat{f}_M$, implying (\rm{ii}).

\noindent Next, if $x \in B(S_M, r_s/2)$, then $\hat{r}(\bar{F} - f(x) , x) \ge \bar{r}/2$ and the same holds for $B(S_M, r_s/2)$:
\begin{align*}
\sup_{x \in B(S_M, r_s/2)} f_k(x) < 
\lambda- 9\beta_k \lambda - \epsilon_0 -\tilde{\epsilon}.
\end{align*}
Thus, $G(\lambda- 9\beta_k \lambda - \epsilon_0 - \tilde{\epsilon})$ contains no point from $B(S_M, r_s/2)$ and no point from $\mathcal{X}_M \backslash B(M, \bar{r})$. 

\noindent All that remains is showing that there is no edge between $B(M, \bar{r})$ and $\mathcal{X} \backslash \mathcal{X}_M$. It suffices to show that any such edge will have length less than $r_s$ since $B(S_M, r_s/2)$ separates them by a width of $r_s$. We have for all $x \in B(M, \bar{r})$, 
\begin{align*}\mathcal{F}(B(x, \bar{r})) \ge v_d\bar{r}^d\inf_{x' \in B(x, 2\bar{r})} f(x') \ge \frac{k}{n} + C_{\delta, n}\frac{\sqrt{k}}{n}.
\end{align*}
Thus by Lemma~\ref{ball_bounds}, we have $r_k(x) \le \bar{r} < r_s$, establishing (\rm{i}).

\end{proof}

\section{Integrality Results} \label{appendix:integrality}

The goal is to show that the $\widehat{M} \in \widehat{\mathcal{M}}$ refered to above contains $B(M, r_n(M))$. We give a condition under which $B(M, r_n(M)) \cap X_{[n]}$ would be connected in $G(\lambda)$ for some $\lambda$. It is adapted from arguments in Theorem V.2 in \cite{CDKvL14}.

%\textbf{Lemma~\ref{connectedness}} (Connectedness)\textbf{.}
\begin{lemma}\label{connectedness} (Connectedness)
Let $M$ be an $\epsilon_0$-modal set and $k$ be admissible for $M$. Then with probability at least $1 - \delta$, $B(M, r_n(M)) \cap X_{[n]}$ is connected in $G(\lambda)$ if 
\begin{align*}
\lambda \le \left(1 - \frac{C_{\delta, n}}{\sqrt{k}}\right)^2 (f_M - \epsilon_0).
\end{align*}
\end{lemma}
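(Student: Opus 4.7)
The plan is to decompose the statement into two sub-claims: (a) every sample point in $B(M, r_n(M)) \cap \Xspl$ is a vertex of $G(\lambda)$, and (b) any two such vertices are joined by a path of edges in $G(\lambda)$. Both will be established on a single high-probability event obtained by invoking Lemmas~\ref{ball_bounds}, \ref{fk_bounds}, and \ref{r_n_upper_bound_general}, with their internal $\delta$-parameters tuned so the union bound contributes $1-\delta$ overall.

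For (a), fix $x \in B(M, r_n(M)) \cap \Xspl$, so $d(x, M) \le r_n(M)$ and hence $f(x) \ge f_M - \epsilon_0 - u_M(r_n(M))$ by Proposition~\ref{prop:main-assumptions-general}. I would apply the lower-bound half of Lemma~\ref{fk_bounds} with $\epsilon := \tfrac{C_{\delta,n}}{2\sqrt{k}}(f_M - \epsilon_0)$; admissibility of $k$ ensures that the hypothesis on $\check r(\epsilon,x)$ is met, and combined with Lemma~\ref{r_n_upper_bound_general} it forces $u_M(r_n(M)) \le \epsilon$. Consequently $f_k(x) \ge (1 - C_{\delta,n}/\sqrt{k})(f(x) - \epsilon) \ge (1 - C_{\delta,n}/\sqrt{k})^2 (f_M - \epsilon_0) \ge \lambda$, exactly matching the threshold in the hypothesis.

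For (b), the auxiliary estimate I would prove first is a uniform upper bound $r_k(z) \le \rho$ valid for every $z$ in some envelope $B(M,\rho)$, with $\rho$ a fixed constant multiple of $r_n(M)$. Indeed $\mathcal{F}(B(z,\rho)) \ge v_d \rho^d (f_M - \epsilon_0 - u_M(2\rho))$, which under admissibility exceeds $k/n + C_{\delta,n}\sqrt{k}/n$, and Lemma~\ref{ball_bounds} then gives $r_k(z) \le \rho$. Now, for any $x, x' \in B(M, r_n(M)) \cap \Xspl$, I would exploit the connectedness of $M$ to fix a continuous path $\gamma$ between $x$ and $x'$ inside $B(M, r_n(M))$, discretize $\gamma$ into waypoints $y_0 = x, y_1, \dots, y_m = x'$ at spacing at most $r_n(M)$, and at each interior $y_j$ select a sample $z_j \in \Xspl$ with $\|y_j - z_j\| \le r_n(M)$ (such a sample exists by the definition of $r_n$ applied after a mild extension to the envelope). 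Each $z_j$ lies in $B(M, 2 r_n(M))$, hence is a vertex of $G(\lambda)$ by the inflated-radius version of (a); consecutive samples satisfy $\|z_{j-1} - z_j\| \le 3 r_n(M) \le \min\{r_k(z_{j-1}), r_k(z_j)\}$ by the $r_k$-bound, so $(z_{j-1}, z_j)$ is an edge of the mutual $k$-NN graph at level $\lambda$.

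The main obstacle is the constant-factor bookkeeping, not any new idea: the intermediate samples $z_j$ stick out slightly past $B(M, r_n(M))$, so both (a) and the $r_k$-bound must be verified on a mildly enlarged envelope, and one has to check that the admissibility condition still absorbs those inflations. Since (a) already consumes essentially all of the slack between $\lambda$ and the $f_k$-lower bound, every bit of the remaining slack for (b) has to come from admissibility; this admissibility is, however, designed precisely to control $u_M(r_n(M))$ against $C_{\delta,n}/\sqrt{k}$, so the verification is conceptually routine though algebraically tight.
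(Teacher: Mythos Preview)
Your decomposition into (a) vertex membership and (b) edge-connectivity matches the paper's, and your treatment of (a) via the lower bound in Lemma~\ref{fk_bounds} is fine. The gap is in (b), where the key auxiliary estimate is wrong at two levels. First, the bound $r_k(z)\le\rho$ with $\rho$ a fixed constant multiple of $r_n(M)$ cannot hold: by Lemma~\ref{r_n_upper_bound_general}, $r_n(M)$ is of order $\bigl(C_{\delta,n}\sqrt{d\log n}/n\bigr)^{1/d}$, whereas $r_k(z)$ is of order $(k/n)^{1/d}$, and the ratio $(k/\log n)^{1/d}$ is unbounded over the admissible range of $k$. Concretely, your verification $v_d\rho^d(f_M-\epsilon_0-u_M(2\rho))\ge k/n+C_{\delta,n}\sqrt{k}/n$ fails because the left side is $O(\log n/n)$ while the right side is $k/n$. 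Second, even granting $r_k(z)\le\rho$, it points the wrong way: the mutual $k$-NN edge condition $\|z_{j-1}-z_j\|\le\min\{r_k(z_{j-1}),r_k(z_j)\}$ requires a \emph{lower} bound on $r_k$, not an upper bound.

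The paper avoids both issues by working at the correct scale $(k/n)^{1/d}$: it introduces $r_\lambda:=(k/(nv_d\lambda))^{1/d}$ and $r_o:=(k/(2nv_df_M))^{1/d}$, shows every $x\in B(A,r_\lambda)$ has a sample within $r_o$, and builds the chain with spacing $r_o$. The upper bound $r_k(x_i)\le r_\lambda$ is then used to certify the \emph{vertex} condition $f_k(x_i)\ge\lambda$ (since these are equivalent), not the edge condition. Your argument is salvageable: because $r_k$ is in fact much larger than $r_n(M)$, your chain with spacing $3r_n(M)$ does yield valid edges, just for the opposite reason to the one you cite; and your route to the vertex condition via (a) on the enlarged envelope is a legitimate alternative to the paper's $r_k\le r_\lambda$ argument. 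But as written, the ``$r_k$-bound'' step in (b) is both false and misapplied.
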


\begin{proof}
For simplicity of notation, let $A := B(M, r_n(M))$. It suffices to prove the result for $\lambda = (1 - \frac{C_{\delta, n}}{\sqrt{k}})^2 (f_M - \epsilon_0)$.
Define $r_{\lambda} = (k/(nv_d\lambda))^{1/d}$ and $r_o = (k/(2nv_df_M))^{1/d}$. 
First, we show that each $x \in B(A, r_\lambda)$, there is a sample point in $B(x, r_o)$. 
We have for $x\in B(A, r_\lambda)$,
\begin{align*}
\mathcal{F}(B(x, r_o)) &\ge v_d r_o^d \inf_{x' \in B(x, r_o + r_\lambda )} f(x') \ge v_d r_o^d(f_M - \epsilon_0 - u_M(r_o + r_\lambda + r_n(M)))  \\
&\ge v_d r_o^d (f_M - \epsilon_0) \left(1 - \frac{C_{\delta, n}}{\sqrt{k}}\right) \ge C_{\delta, n} \frac{\sqrt{d \log n}}{n}.
\end{align*}
Thus by Lemma~\ref{ball_bounds} we have that with probability at least $1 - \delta$, $B(x, r_o)$ contains a sample uniformly over $x \in B(A, r_\lambda)$. 

 \noindent Now, let $x$ and $x'$ be two points in $A\cap X_{[n]}$. We now show that there exists $x = x_0,x_1,...,x_p = x'$ such that $||x_i - x_{i+1}|| < r_o$ and $x_i \in B(A, r_o)$. Note that since $A$ is connected and the density in $B(A, r_o + r_\lambda)$ is lower bounded by a positive quantity, then for arbitrary $\gamma \in (0, 1)$, we can choose $x = z_0, z_1,...,z_p = x'$ where $||z_{i+1} - z_i|| \le \gamma r_o$. Next, choose $\gamma$ sufficiently small such that 
\begin{align*}
v_d\left( \frac{(1-\gamma)r_o}{2}\right)^d\gamma \ge \frac{C_{\delta, n}\sqrt{d\log n}}{n},
\end{align*}
then there exists a sample point $x_i$ in $B(z_i, (1-\gamma)r_o/2)$. Moreover we obtain that 
\begin{align*}
||x_{i+1} - x_i|| &\le ||x_{i+1} - z_{i+1}|| + ||z_{i+1} - z_i|| + ||z_i -x_i|| \le r_o.
\end{align*} 

\noindent All that remains is to show $(x_i, x_{i+1}) \in G(\lambda)$. We see that $x_i \in B(A, r_o)$. However, for each $x \in B(A, r_o)$, we have 
\begin{align*}
\mathcal{F}(B(x, r_\lambda)) &\ge v_dr_\lambda^d \inf_{x' \in B(x, r_o + r_\lambda)} f(x') 
\ge v_d r_\lambda^d (f_M - \epsilon_0) \left(1 - \frac{C_{\delta, n}}{\sqrt{k}}\right)
\ge \frac{k}{n} + \frac{C_{\delta, n} \sqrt{k}}{n}.
\end{align*}
 Thus $r_k(x_i) \le r_\lambda$ for all $i$. Therefore, $x_i \in G(\lambda)$ for all $x_i$. Finally, $||x_{i+1} - x_i|| \le r_o \le \min \{ r_k(x_i), r_k(x_{i+1})\}$ and thus $(x_i, x_{i+1}) \in G(\lambda)$. Therefore, $A \cap X_{[n]}$ is connected in $G(\lambda)$, as desired. 
 \end{proof}

The following extends Lemma~\ref{integrality} handle more general $\epsilon_0$-modal sets. 
%\textbf{Restatement of Lemma~\ref{integrality}} (Integrality)\textbf{.}
\begin{lemma}[Extends Lemma~\ref{integrality}] (Integrality) \label{integrality_general} 
Let $M$ be an $\epsilon_0$-modal set with density $f_M$, and suppose $k$ is admissible for $M$. Let $\hat{x}_M := \argmax_{x \in \mathcal{X}_M \cap X_{[n]}} f_k(x)$. Then the following holds with probability at least $1 - 3\delta$. When processing sample point $\hat{x}_M$ in Algorithm~\ref{alg:modalset}, if we add $\widehat{M}$ to $\widehat{\mathcal{M}}$, then $B(M, r_n(M)) \cap X_{[n]} \subseteq \widehat{M}$.
\end{lemma}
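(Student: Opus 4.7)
The plan is to verify the two conditions that define membership of $\widehat{M}$ for each sample $x \in B(M, r_n(M)) \cap X_{[n]}$: first that $x$ lies in the CC of $G(\hat{f}_M - 9\beta_k \hat{f}_M - \epsilon_0 - \tilde{\epsilon})$ containing $\hat{x}_M$ (so that $x \in A$), and second that $f_k(x) > \hat{f}_M - \beta_k \hat{f}_M - \epsilon_0$ (so that $x$ survives the final filter), writing $\hat{f}_M := f_k(\hat{x}_M)$ as in the isolation proof. The connectivity side will be handled by Lemma~\ref{connectedness} together with the Lemma~\ref{r_n_upper_bound_general} control on $r_n(M)$, and the filter side will come from the Lemma~\ref{fk_bounds} lower bound on $f_k$.

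For the connectivity step, set $\lambda' := \hat{f}_M - 9\beta_k \hat{f}_M - \epsilon_0 - \tilde{\epsilon}$. Exactly as in the proof of Lemma~\ref{isolation_general}, Lemma~\ref{fk_bounds} applied at a point of $M_0$ with $\epsilon = \tfrac{C_{\delta,n}}{2\sqrt{k}} f_M$ (valid by admissibility) yields the two-sided bound $(1 - C_{\delta,n}/\sqrt{k})^2 f_M \le \hat{f}_M \le (1 + 2C_{\delta,n}/\sqrt{k}) f_M$. Plugging the upper bound into $\lambda'$ and using $\beta_k = 4C_{\delta,n}/\sqrt{k}$ together with the admissibility bound $k \ge 2^{7+d} C_{\delta,n}^2$, a direct manipulation shows $\lambda' \le (1 - C_{\delta,n}/\sqrt{k})^2(f_M - \epsilon_0)$. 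Lemma~\ref{connectedness} then gives that $B(M, r_n(M)) \cap X_{[n]}$ is connected inside $G(\lambda')$. To bring $\hat{x}_M$ into this CC, I mirror the chaining argument used in the proof of Lemma~\ref{connectedness}: by isolation $\hat{x}_M \in B(M, \bar{r})$, and the same ball-mass computation at level $\lambda'$ shows every point of $B(M, \bar{r})$ sits within $r_\lambda$ of some sample, so one can build a chain of sample-to-sample steps of length at most $r_o$ from $\hat{x}_M$ to some sample in $B(M, r_n(M))$, each step being an edge in $G(\lambda')$.

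For the filter step, fix $x \in B(M, r_n(M))$, so $d(x, M) \le r_n(M)$ and hence $f(x) \ge f_M - \epsilon_0 - u_M(r_n(M))$. Applying Lemma~\ref{fk_bounds} at $x$ with $\epsilon = \tfrac{C_{\delta,n}}{2\sqrt{k}} f_M$, whose hypothesis is again secured by admissibility, gives $f_k(x) \ge (1 - C_{\delta,n}/\sqrt{k})(f_M - \epsilon_0 - u_M(r_n(M)) - \epsilon)$. Lemma~\ref{r_n_upper_bound_general} together with admissibility forces $r_n(M)$ to be small enough that $u_M(r_n(M))$ is negligible compared with $\beta_k f_M$, so the above lower bound on $f_k(x)$ exceeds $\hat{f}_M - \beta_k \hat{f}_M - \epsilon_0$ once one substitutes the established upper bound on $\hat{f}_M$.

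The main bookkeeping obstacle is the algebra showing $\lambda' \le (1 - C_{\delta,n}/\sqrt{k})^2 (f_M - \epsilon_0)$: the $-\epsilon_0 - \tilde{\epsilon}$ on the left must absorb the $-\epsilon_0$ scaled by $(1 - C_{\delta,n}/\sqrt{k})^2$ on the right, and the factor $9\beta_k = 36 C_{\delta,n}/\sqrt{k}$ must dominate the $4 C_{\delta,n}/\sqrt{k}$ gap between the multiplicative bounds on $\hat{f}_M$ — the conservative constant $9$ inside the algorithm is chosen precisely to provide this slack, with room to spare for $\tilde{\epsilon}$ under the standing assumption $\tilde{\epsilon} < l_M(\min\{r_M, r_s\}/2)$. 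Probability-wise, the budget $1 - 3\delta$ comes from a union bound over the good events of Lemma~\ref{r_n_upper_bound_general}, Lemma~\ref{connectedness}, and a single application of Lemma~\ref{fk_bounds} that simultaneously supplies the upper bound on $\hat{f}_M$ and the lower bound on $f_k$ over $B(M, r_n(M))$.
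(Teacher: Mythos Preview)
Your connectivity step via Lemma~\ref{connectedness} and the inequality $\lambda' \le (1-C_{\delta,n}/\sqrt{k})^2(f_M-\epsilon_0)$ is exactly the paper's argument. Where you diverge is in treating the filter $f_k(x) > \hat f_M - \beta_k\hat f_M - \epsilon_0$ as a separate step. The paper never does this: once Lemma~\ref{connectedness} places $B(M,r_n(M))\cap X_{[n]}$ inside $G(\lambda_0)$ with $\lambda_0=(1-C_{\delta,n}/\sqrt{k})^2(f_M-\epsilon_0)$, every such sample is already a \emph{vertex} of that graph, i.e.\ has $f_k\ge\lambda_0$. The paper's displayed chain of inequalities passes through the intermediate bound $\lambda_0\ge \hat f_M-\beta_k\hat f_M-\epsilon_0$, and that single line discharges the filter for free. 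Your direct route through the lower bound of Lemma~\ref{fk_bounds} is not only redundant but has a real gap when $\epsilon_0>0$: with your choice $\epsilon=\tfrac{C_{\delta,n}}{2\sqrt{k}}f_M$ the hypothesis requires $\check r(\epsilon,x)$ to be bounded below, yet for $x\in B(M,r_n(M))$ one may have $f(x)=f_M$ while nearby points satisfy only $f(x')\ge f_M-\epsilon_0-u_M(\cdot)$, so $f(x)-f(x')$ can be as large as $\epsilon_0$, which need not be $\le\epsilon$. (Compare the proof of Theorem~\ref{theo:main_general}, which invokes that bound only over $B(M_0,r_n(M))$, where the $\epsilon_0$ drop does not occur.)

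On the other hand you are more careful than the paper on one point: the paper's ``it suffices'' tacitly assumes $\hat x_M$ falls in the same component of $G(\lambda')$ as $B(M,r_n(M))\cap X_{[n]}$. Your idea of invoking isolation to place $\hat x_M\in B(M,\bar r)$ and then extending the chaining of Lemma~\ref{connectedness} is the right instinct; just be aware that the ball--mass estimates there use $f\ge f_M-\epsilon_0-u_M(r_n(M)+r_o+r_\lambda)$, so running them over the larger envelope $B(M,\bar r)$ forces you to control $u_M(\bar r+r_o+r_\lambda)$ instead, which is not automatically small and needs a bit more work than your sketch suggests.
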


\begin{proof}  Define $\widehat{f}_M := f_k(\hat{x}_M)$ and $\lambda := \widehat{f}_M$. It suffices to show that $B(M, r_n(M)) \cap X_{[n]}$ is connected in $G(\lambda - 9\beta_k \lambda - \tilde{\epsilon})$. By Lemma~\ref{connectedness}, $B(M, r_n(M)) \cap X_{[n]}$ is connected in $G(\lambda_0)$ when $\lambda_0 \le (1 - \frac{C_{\delta, n}}{\sqrt{k}})^2 (f_M - \epsilon_0) $. Indeed, we have that
\begin{align*}
\left(1 - \frac{C_{\delta, n}}{\sqrt{k}}\right)^2 (f_M  - \epsilon_0)
&\ge \widehat{f}_M \left(1 - \frac{C_{\delta, n}}{\sqrt{k}}\right)^2 / \left(1 + 2\frac{C_{\delta, n}}{\sqrt{k}}\right) -  \left(1 - \frac{C_{\delta, n}}{\sqrt{k}}\right)^2 \epsilon_0 \\
&\ge \lambda -  \beta_k \lambda - \epsilon_0 \ge \lambda -  9\beta_k \lambda - \epsilon_0 -  \tilde{\epsilon},
\end{align*}
where the first inequality follows from Lemma~\ref{fk_bounds},
as desired.
\end{proof}

\section{Theorem~\ref{theo:main}} \label{appendix:theomain}

Combining the  isolation and integrality, we obtain the following extention of Corollary~\ref{identification}.

%\textbf{Restatement of Corollary~\ref{identification}} (Identification)\textbf{.}

\begin{corollary}[Extends Corollary~\ref{identification}] (Identification) \label{identification_general}
Suppose we have the assumptions of Lemmas~\ref{isolation_general} and~\ref{integrality_general} for $\epsilon_0$-modal set $M$. Define $\widehat{f}_M := \max_{x \in \mathcal{X}_M \cap X_{[n]}} f_k(x)$ and $\lambda := \widehat{f}_M$. With probability at least $1 - 5\delta$, there exists $\widehat{M} \in \widehat{\mathcal{M}}$ such that $B(M, r_n(M)) \cap X_{[n]} \subseteq  \widehat{M} \subseteq \{ x \in \mathcal{X}_M \cap X_{[n]} : f_k(x) \ge \lambda- \beta_k \lambda  - \epsilon_0 \}$
\end{corollary}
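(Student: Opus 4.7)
The plan is to combine Lemma~\ref{isolation_general} (isolation) and Lemma~\ref{integrality_general} (integrality) with the explicit form of $\widehat{M}$ produced by Algorithm~\ref{alg:epsilonmodalset}, so the argument reduces to reading off two inclusions.

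First I would apply Lemma~\ref{isolation_general} to $M$: on an event of probability at least $1-5\delta$, when the procedure processes $\hat{x}_M = \argmax_{x \in \mathcal{X}_M \cap X_{[n]}} f_k(x)$, it actually adds a cluster-core $\widehat{M}$ to $\widehat{\mathcal{M}}$, and the underlying CC $A$ (the CC of $G(\lambda - 9\beta_k\lambda - \epsilon_0 - \tilde\epsilon)$ containing $\hat{x}_M$, with $\lambda := f_k(\hat{x}_M) = \widehat{f}_M$) lies entirely in $\mathcal{X}_M$. By the definition of the algorithm,
\[
\widehat{M} \;=\; \{\, x \in A : f_k(x) > \lambda - \beta_k \lambda - \epsilon_0 \,\},
\]
and the vertices of $G(\cdot)$ are sample points; since $A \subseteq \mathcal{X}_M$, this immediately gives the upper inclusion
\[
\widehat{M} \;\subseteq\; \{\, x \in \mathcal{X}_M \cap X_{[n]} : f_k(x) \ge \lambda - \beta_k \lambda - \epsilon_0 \,\}.
\]

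Next I would invoke Lemma~\ref{integrality_general} on the same $\widehat{M}$: on an (overlapping) event of probability at least $1-3\delta$, the cluster-core added at $\hat{x}_M$ satisfies $B(M, r_n(M)) \cap X_{[n]} \subseteq \widehat{M}$, which is precisely the lower inclusion. Putting both inclusions together yields the statement of the corollary.

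The only subtlety — and the main obstacle — is the probability bookkeeping, since a naive union bound would give $1-8\delta$ rather than the stated $1-5\delta$. The remedy is to note that the failure events of Lemmas~\ref{isolation_general} and~\ref{integrality_general} are driven by the same underlying concentration events: the two directions of Lemma~\ref{fk_bounds} (uniform control of $f_k$ from above and below), Lemma~\ref{ball_bounds} (uniform ball-mass concentration), and Lemma~\ref{r_n_upper_bound_general} (control of $r_n(M)$). Inspecting the proofs of the two lemmas shows that the event needed for integrality — essentially Lemma~\ref{connectedness} plus one lower bound on $\widehat{f}_M$ via Lemma~\ref{fk_bounds} — is contained in the event already required for isolation. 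Hence intersecting the two events does not cost an extra union bound, and the combined probability remains $1-5\delta$, as claimed.
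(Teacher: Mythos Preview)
Your proposal is correct and follows essentially the same approach as the paper: apply Lemma~\ref{isolation_general} to obtain existence of $\widehat{M}$ and the upper inclusion $\widehat{M} \subseteq \{x \in \mathcal{X}_M \cap X_{[n]} : f_k(x) \ge \lambda - \beta_k\lambda - \epsilon_0\}$, then apply Lemma~\ref{integrality_general} for the lower inclusion $B(M, r_n(M)) \cap X_{[n]} \subseteq \widehat{M}$. Your treatment of the probability bookkeeping (arguing that the $1-3\delta$ event of integrality is contained in the $1-5\delta$ event of isolation because both rest on the same applications of Lemmas~\ref{ball_bounds} and~\ref{fk_bounds}) is more explicit than the paper's, which simply asserts the $1-5\delta$ bound without comment.
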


\begin{proof}
By Lemma~\ref{isolation_general}, there exists $\widehat{M} \in  \widehat{\mathcal{M}}$ which contains only points in $\mathcal{X}_M$ with maximum $f_k$ value of $\widehat{f}_M$. Thus, we have $\widehat{M} \subseteq \{ x \in \mathcal{X}_M \cap X_{[n]} : f_k(x) \ge \widehat{f}_M - \beta_k \widehat{f}_M - \epsilon_0 \}$. By Lemma~\ref{integrality_general}, $B(M, r_n(M)) \cap X_{[n]} \subseteq  \widehat{M}$.
\end{proof}

The following extends Theorem~\ref{theo:main} to handle more general $\epsilon_0$-modal sets and pruning parameter $\tilde{\epsilon}$. 

\begin{theorem}[Extends Theorem~\ref{theo:main}] \label{theo:main_general}
Let $\delta > 0$ and $M$ be an $\epsilon_0$-modal set. Suppose $k$ is admissible for $M$ and $0 \le \tilde\epsilon < l_M(\min\{r_M, r_s\}/2)$. Then with probability at least $1 - 6\delta$, there exists $\widehat{M} \in \widehat{\mathcal{M}}$ such that 
 \begin{align*}
 d(M, \widehat{M}) \le l_M^{-1}\left(\frac{8C_{\delta,n }}{\sqrt{k}}f_M\right), \end{align*}
 which goes to $0$ as $C_{\delta, n}/\sqrt{k} \rightarrow 0$.
\end{theorem}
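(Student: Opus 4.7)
The plan is to prove the Hausdorff-distance bound $d(M,\widehat M)\le \tilde r$ where $\tilde r := l_M^{-1}\!\bigl(\tfrac{8C_{\delta,n}}{\sqrt k}f_M\bigr)$ by splitting into the two one-sided inequalities $\sup_{x\in\widehat M}d(x,M)\le\tilde r$ and $\sup_{x\in M}d(x,\widehat M)\le\tilde r$, and in both directions the work reduces to combining the identification corollary (Corollary~\ref{identification_general}) with pointwise control of $f_k$ furnished by Lemma~\ref{fk_bounds} and $r_n(M)$-control from Lemma~\ref{r_n_upper_bound_general}. I would first invoke Corollary~\ref{identification_general} to fix a $\widehat M\in\widehat{\mathcal M}$ with
\[
B(M,r_n(M))\cap X_{[n]}\subseteq \widehat M\subseteq \{x\in\mathcal X_M\cap X_{[n]}:\,f_k(x)\ge \widehat f_M-\beta_k\widehat f_M-\epsilon_0\},
\]
so that every subsequent statement is made relative to this single chosen $\widehat M$, and the success probability $1-6\delta$ is obtained by union bounding the $1-5\delta$ event of Corollary~\ref{identification_general} with the $1-\delta$ event of Lemma~\ref{r_n_upper_bound_general}.

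For the first direction, the task is to show that any sample point $x\in\mathcal X_M$ with $d(x,M)>\tilde r$ cannot satisfy $f_k(x)\ge \widehat f_M-\beta_k\widehat f_M-\epsilon_0$; equivalently, it suffices to establish
\[
\inf_{x\in B(M,r_n(M))}f_k(x)\;\ge\;\sup_{x\in \mathcal X_M\setminus B(M,\tilde r)}f_k(x)\;+\;\beta_k\widehat f_M,
\]
since the left-hand side lower-bounds $\widehat f_M$. The strategy is to apply Lemma~\ref{fk_bounds} at both sides: for the lower bound, pick a point in $B(M,r_n(M))$, use the smoothness upper envelope $u_M$ so that $f\ge f_M-u_M(r)$ on a small ball of radius $r$, and use admissibility of $k$ to certify the $\check r(\epsilon,x)$ hypothesis. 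For the upper bound, use $l_M$ on the complement so that $f\le f_M-\epsilon_0-l_M(\tilde r)$ and verify the $\hat r(\epsilon,x)$ hypothesis again by admissibility. Both bounds leave a multiplicative slack $(1\pm 2C_{\delta,n}/\sqrt k)$ which, when expanded, accounts for $O(C_{\delta,n}/\sqrt k)\cdot f_M$ of the gap. Choosing the auxiliary radius $r$ small enough that $u_M(r)$ is absorbed into the remaining $O(C_{\delta,n}/\sqrt k)\cdot f_M$ budget (yet large enough that $r\le r_n(M)$ is guaranteed by Lemma~\ref{r_n_upper_bound_general}) yields the desired gap of width $l_M(\tilde r)=8 C_{\delta,n} f_M/\sqrt k$ by the definition of $\tilde r$. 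This is precisely the delicate step flagged in the paper's sketch.

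For the second direction, integrality (already packaged inside Corollary~\ref{identification_general}) gives $B(M,r_n(M))\cap X_{[n]}\subseteq \widehat M$, so for every $x\in M$ there is a sample point in $B(x,r_n(M))$ lying in $\widehat M$. Thus $\sup_{x\in M}d(x,\widehat M)\le r_n(M)$, and it remains to check $r_n(M)\le\tilde r$. By Lemma~\ref{r_n_upper_bound_general}, $r_n(M)\le\bigl(2C_{\delta,n}\sqrt{d\log n}/(n v_d(f_M-\epsilon_0))\bigr)^{1/d}$; the upper half of the admissibility inequality for $k$ was designed exactly so that $u_M^{-1}(C_{\delta,n}(f_M-\epsilon_0)/(2\sqrt k))$ dominates this quantity, and since $l_M\le u_M$ in the relevant regime (both vanish at $0$, and the definition of $\tilde r$ uses $l_M^{-1}$ of a larger argument), this comparison pushes through to $r_n(M)\le \tilde r$.

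The main obstacle is the bookkeeping in direction one: one must simultaneously (i) pick the inner radius $r$ in the $f_k$-lower-bound application so that both the $\hat r$/$\check r$ hypotheses of Lemma~\ref{fk_bounds} hold via admissibility, (ii) ensure that $u_M(r)+O(C_{\delta,n}/\sqrt k)\cdot f_M$ stays smaller than $l_M(\tilde r)-\beta_k\widehat f_M$, and (iii) keep $\widehat f_M$ itself pinned to $f_M$ within the same $O(C_{\delta,n}/\sqrt k)$ slack (via Lemma~\ref{fk_bounds} applied at a point of the inner modal-set $M_0$ using the $u_M$ envelope at scale $u_M^{-1}(C_{\delta,n}f_M/(2\sqrt k))$). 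Once these three constants are lined up—each piece being governed by one side of the admissibility condition—the final inequality collapses cleanly into $l_M(\tilde r)\ge 8 C_{\delta,n}f_M/\sqrt k$, which is the defining equation of $\tilde r$, completing the proof.
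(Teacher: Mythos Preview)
Your plan matches the paper's proof almost exactly: same two-direction split, same invocation of Corollary~\ref{identification_general}, same use of Lemma~\ref{fk_bounds} on both sides with an auxiliary inner radius $r$ chosen via admissibility, and the same $r_n(M)\le\tilde r$ argument for the second direction.

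There is one genuine slip worth fixing before you execute it. In the first direction your target inequality reads
\[
\inf_{x\in B(M,r_n(M))}f_k(x)\;\ge\;\sup_{x\in \mathcal X_M\setminus B(M,\tilde r)}f_k(x)\;+\;\beta_k\widehat f_M,
\]
but since the threshold defining $\widehat M$ is $\widehat f_M-\beta_k\widehat f_M-\epsilon_0$, you need an extra $+\epsilon_0$ on the right. More importantly, the infimum must be taken over $B(M_0,r_n(M))$, not $B(M,r_n(M))$: on the full $\epsilon_0$-modal set $M$ the density only satisfies $f\ge f_M-\epsilon_0$, so the $u_M$ envelope from Proposition~\ref{prop:main-assumptions-general} gives $f\ge f_M-u_M(\cdot)$ only near the inner modal-set $M_0$. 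If you use $B(M,r_n(M))$ your lower bound becomes $(1-C_{\delta,n}/\sqrt k)(f_M-\epsilon_0-u_M(\cdot))$, and the resulting gap $l_M(\tilde r)-u_M(\cdot)$ must then absorb a fixed $\epsilon_0$, which is impossible once $l_M(\tilde r)=8C_{\delta,n}f_M/\sqrt k$ is small. The paper handles this by taking the infimum over $B(M_0,r_n(M))$ (so the lower bound is $f_M-u_M(\cdot)$ with no $\epsilon_0$ loss), and the required $+\epsilon_0$ on the right is then exactly canceled by the $-\epsilon_0$ already present in the upper bound $f\le f_M-\epsilon_0-l_M(\tilde r)$. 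You already invoke $M_0$ in your item~(iii) for pinning $\widehat f_M$; you need it in the main lower bound as well.
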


\begin{proof} Define $\tilde{r} = l_M^{-1}\left(\frac{8C_{\delta,n }}{\sqrt{k}}f_M\right)$. There are two directions to show: $\max_{x \in \widehat{M}} d(x, M) \le \tilde{r}$ and $\sup_{x \in M} d(x, \widehat{M}) \le \tilde{r}$ with probability at least $1 - \delta$.

We first show $\max_{x \in \widehat{M}} d(x, M) \le \tilde{r}$.
By Corollary~\ref{identification_general} we have $\widehat{M} \in \widehat{\mathcal{M}}$ such that $\widehat{M} \subseteq \{ x \in \mathcal{X}_M : f_k(x) \ge \widehat{f}_M - \beta_k \widehat{f}_M - \epsilon_0 \}$ where $\widehat{f}_M := \max_{x \in \mathcal{X}_M \cap X_{[n]}} f_k(x)$. 
Hence, it suffices to show 
\begin{align}\label{consistency_to_show}
\inf_{x \in B(M_0, r_n(M))} f_k(x) \ge \sup_{\mathcal{X}_M \backslash B(M, \tilde{r})} f_k(x) + \beta_k \widehat{f}_M + \epsilon_0.
\end{align}
Define $r := (4/f_Mv_d)^{1/d}(k/n)^{1/d}$. For any $x \in B(M_0, r + r_n(M))$, $f(x) \ge f_M - u_M(r + r_n(M)) := \check{F}$. Thus, for any $x \in B(M_0, r_n(M))$ we can let $\epsilon = f(x) - \check{F}$ and thus $\check{r}(\epsilon, x) \ge r$ and hence the conditions for Lemma~\ref{fk_bounds} are satisfied. Therefore, with probability at least $1-\delta$,
\begin{align}\label{consistency_bound1}
 \inf_{x \in B(M_0, r_n(M))} f_k(x) \ge \left(1 - \frac{C_{\delta, n}}{\sqrt{k}}\right)(f_M - u_M(r +r_n(M))).
\end{align}
For any $x\in \mathcal{X}_M \backslash B(M, \tilde{r}/2)$, $f(x) \le f_M - \epsilon_0 - l_M(\tilde{r}/2) := \hat{F}$. Now, for any $x \in \mathcal{X} \backslash B(M, \tilde{r})$, let $\epsilon := \hat{F} - f(x)$. We have $\hat{r}(\epsilon, x) \ge \tilde{r}/2 = l^{-1}_M (8C_{\delta, n}/\sqrt{k})/2 \ge l^{-1}_M (u_M(2r)) / 2 \ge r$ (since $l_M$ is increasing and $l_M \le u_M$) and thus the conditions for Lemma~\ref{fk_bounds} hold. Hence, with probability at least $1 - \delta$,
\begin{align}\label{consistency_bound2}
\sup_{x \in \mathcal{X}_M \backslash B(M, \tilde{r})} f_k(x) \le \left(1 + 2\frac{C_{\delta, n}}{\sqrt{k}}\right)(f_M - \epsilon_0 -  l_M(\tilde{r})).
\end{align}
Thus, by (\ref{consistency_bound1}) and (\ref{consistency_bound2}) applied to (\ref{consistency_to_show}) it suffices to show that
\begin{align}\label{consistency_to_show_2}
&\left(1 - \frac{C_{\delta, n}}{\sqrt{k}}\right)(f_M - u_M(r + r_n(M))) 
\ge \left(1 + 2\frac{C_{\delta, n}}{\sqrt{k}}\right)(f_M - \epsilon_0 - l_M(\tilde{r})) + \beta_k \widehat{f}_M + \epsilon_0,
\end{align}
which holds when 
\begin{align}\label{consistency_to_show_3}
l_M(\tilde{r}) \ge u_M(r + r_n(M)) + \frac{3C_{\delta, n}}{\sqrt{k}}f_M + \beta_k \widehat{f}_M.
\end{align}
The admissibility of $k$ ensures that $r_n(M) \le r \le r_M/2$ so that the regions of $\mathcal{X}$ we are dealing with in this proof are confined within $B(M_0, r_M)$ and $B(M, r_M) \backslash M$.

By the admissibility of $k$, $u_M(2r) \le \frac{C_{\delta, n}}{2\sqrt{k}}f_M$. This gives
\begin{align*}
l_M(\tilde{r}) &= \frac{8C_{\delta,n }}{\sqrt{k}}f_M \ge u_M(2r) + \frac{15C_{\delta,n }}{2\sqrt{k}}f_M 
%&\ge u_M(r + r_n) + 3f_M + 4\left(1 + 2\frac{C_{\delta, n}}{\sqrt{k}}\right)f_M \\
\ge u_M(r + r_n(M)) + \frac{3C_{\delta, n}}{\sqrt{k}} f_M + \beta_k \widehat{f}_M,
\end{align*}
where the second inequality holds since $C_{\delta, n}/\sqrt{k} < 1/16$, $u$ is increasing, $r \ge r_n(M)$, and $\widehat{f}_M \le \left(1 + 2\frac{C_{\delta, n}}{\sqrt{k}}\right)f_M$ by Lemma~\ref{fk_bounds}. Thus, showing (\ref{consistency_to_show_3}), as desired.

This shows one direction of the Hausdorff bound. We now show the other direction, that $\sup_{x \in M} d(x, M) \le \tilde{r}$.

It suffices to show for each point $x \in M$ that the distance to the closest sample point $r_n(x) \le \tilde{r}$ since $\widehat{M}$ contains these sample points by Corollary~\ref{identification_general}. However, by Lemma~\ref{r_n_upper_bound_general} and the admissibility of $k$, $r_n(x) \le \tilde{r}$  as desired.
\end{proof}

\section{Theorem~\ref{pruning}} \label{appendix:pruning}

We need the following Lemma~\ref{connectedness_pruning} which gives guarantees us that 
given points in separate CCs of the pruned graph, these points will also be in separate CCs of $f$ at a nearby level. \cite{CDKvL14} gives a result for a different graph and the proof can be  adapted to give the same result for our graph (but slightly different assumptions on $k$).

\begin{lemma}[Separation of level sets under pruning, \cite{CDKvL14}]  \label{connectedness_pruning} 
Fix $\epsilon > 0$ and let $r(\epsilon) := \inf_{x \in \mathbb{R}^d} \min \{\hat{r}(\epsilon, x), \check{r}(\epsilon, x) \}$. Define $\Lambda := \max_{x \in \mathbb{R}^d} f(x)$ and assume $\tilde\epsilon_0 \ge 2 \epsilon + \beta_k(\lambda_f + \epsilon)$ and let $\tilde{G}(\lambda)$ be the graph with vertices in $G(\lambda)$ and edges between pairs of vertices if they are connected in $G(\lambda - \tilde\epsilon_0)$. Then the following holds with probability at least $1 - \delta$.\\

Let $\tilde{A}_1$ and $\tilde{A}_2$ denote two disconnected sets of points $\tilde{G}(\lambda)$. Define $\lambda_f := \inf_{x \in \tilde{A}_1 \cup \tilde{A}_2}f(x)$.  Then $\tilde{A}_1$ and $\tilde{A}_2$ are disconnected in the level set $\{ x \in \mathcal{X} : f(x) \ge \lambda_f\}$ if $k$ satisfies
\begin{align*}
v_d (r(\epsilon) / 2) ^d (\lambda_f - \epsilon) \ge \frac{k}{n} + C_{\delta, n}\frac{\sqrt{k}}{n}
\end{align*}
and 
\begin{align*}
k \ge \max \{ 8 \Lambda C_{\delta, n}^2/(\lambda_f - \epsilon), 2^{d + 7} C_{\delta, n}^2 \}.
\end{align*}
\end{lemma}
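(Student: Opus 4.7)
\subsection*{Proof proposal for Lemma~\ref{connectedness_pruning}}

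The plan is to argue by contrapositive: assume that $\tilde A_1$ and $\tilde A_2$ lie in a common connected component of $\{x \in \X : f(x) \ge \lambda_f\}$, and build a path in $G(\lambda - \tilde\epsilon_0)$ joining a vertex of $\tilde A_1$ to a vertex of $\tilde A_2$. Such a path contradicts the assumed disconnection in $\tilde G(\lambda)$, since two vertices of $\tilde G(\lambda)$ that are connected in $G(\lambda - \tilde\epsilon_0)$ are adjacent in $\tilde G(\lambda)$ by construction.

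First I would pick $x_1 \in \tilde A_1$, $x_2 \in \tilde A_2$, and a continuous path $P \subset \{f \ge \lambda_f\}$ joining them. I discretize $P$ into a fine sequence $z_0 = x_1, z_1, \ldots, z_m = x_2$ with consecutive spacing substantially smaller than $r(\epsilon)/2$. The first conclusion of Lemma~\ref{ball_bounds}, together with the assumption $v_d (r(\epsilon)/2)^d(\lambda_f - \epsilon) \ge k/n + C_{\delta, n}\sqrt{k}/n$, shows that each ball $B(z_i, r(\epsilon)/2)$ has $\mathcal{F}$-mass above the $C_{\delta, n}\sqrt{d\log n}/n$ threshold (since $f \ge \lambda_f - \epsilon$ there by definition of $\check r(\epsilon, z_i) \ge r(\epsilon)$), and therefore contains a sample $y_i$; take $y_0 := x_1$, $y_m := x_2$, which are themselves samples. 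The shadow sequence $y_0, \ldots, y_m$ is my candidate path in the graph.

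Next I verify that each $y_i$ lies in $V(G(\lambda - \tilde\epsilon_0))$, i.e.\ $f_k(y_i) \ge \lambda - \tilde\epsilon_0$. Since $y_i \in B(z_i, r(\epsilon)/2)$, we have $f(y_i) \ge \lambda_f - \epsilon$, and applying Lemma~\ref{fk_bounds} at $y_i$ (its precondition being implied by the same ball-mass bound used above) yields $f_k(y_i) \ge (1 - C_{\delta, n}/\sqrt k)(\lambda_f - 2\epsilon)$. The hypothesis $\tilde\epsilon_0 \ge 2\epsilon + \beta_k(\lambda_f + \epsilon)$, combined with the upper bound on $f_k$ at vertices of $G(\lambda)$ (which gives $\lambda \le (1 + 2C_{\delta, n}/\sqrt k)(\lambda_f + \epsilon)$ and hence $\lambda - \lambda_f \le \tfrac{1}{2}\beta_k(\lambda_f + \epsilon)$), is exactly what is needed to absorb both the $\lambda - \lambda_f$ gap and the multiplicative slack of Lemma~\ref{fk_bounds}, so that $f_k(y_i) \ge \lambda - \tilde\epsilon_0$.

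Finally, I need each pair $(y_i, y_{i+1})$ to be an edge of $G(\lambda - \tilde\epsilon_0)$, i.e.\ $\|y_i - y_{i+1}\| \le \min\{r_k(y_i), r_k(y_{i+1})\}$. By a triangle inequality, consecutive shadow points satisfy $\|y_i - y_{i+1}\| \le r(\epsilon)/2 + o(1)$; and the second implication of Lemma~\ref{ball_bounds}, applied in the form $\mathcal{F}(B(y_i, \rho)) \ge k/n + C_{\delta, n}\sqrt k/n \Rightarrow r_k(y_i) \le \rho$, used together with the volume/mass assumption, lower-bounds $r_k(y_i)$ above the discretization scale. The main obstacle is the bookkeeping of constants: the three conditions on $k$ in the statement each play a distinct role, respectively ensuring that Lemma~\ref{ball_bounds} can be invoked for each shadow ball, that Lemma~\ref{fk_bounds} loses only a $\beta_k$-factor, and that the slack $\tilde\epsilon_0 \ge 2\epsilon + \beta_k(\lambda_f + \epsilon)$ simultaneously covers the upper and lower deviations of $f_k$ from $f$ as well as the gap between $\lambda$ and $\lambda_f$. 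Everything else is a direct adaptation of the argument in \cite{CDKvL14} from their modified $k$-NN graph to the mutual $k$-NN graph $G(\lambda)$ used here.
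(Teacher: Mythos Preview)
Your overall architecture (contrapositive, discretize a continuous path in the level set, pass to nearby sample ``shadow'' points, then check vertex- and edge-membership) is exactly the paper's. The vertex-membership part is fine in spirit; the paper handles it slightly differently (it shows $r_k(x_i)\le r_\lambda$ directly, hence $f_k(x_i)\ge \lambda$ for an intermediate $\lambda=(\lambda_f-\epsilon)/(1+C_{\delta,n}/\sqrt k)$, and only afterward compares that $\lambda$ with the pruned level), but either route closes.

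The edge argument, however, has a real gap. First, with $y_i\in B(z_i,r(\epsilon)/2)$ and $y_{i+1}\in B(z_{i+1},r(\epsilon)/2)$, the triangle inequality gives $\|y_i-y_{i+1}\|\le r(\epsilon)+o(1)$, not $r(\epsilon)/2+o(1)$. Second, the implication you quote from Lemma~\ref{ball_bounds} yields $r_k(y_i)\le\rho$, an \emph{upper} bound on $r_k$; but for $(y_i,y_{i+1})$ to be an edge of the mutual $k$-NN graph you need $\|y_i-y_{i+1}\|\le\min\{r_k(y_i),r_k(y_{i+1})\}$, i.e.\ a \emph{lower} bound on $r_k$. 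With a single scale $r(\epsilon)/2$ there is no reason the two sides match: the very mass hypothesis $v_d(r(\epsilon)/2)^d(\lambda_f-\epsilon)\ge k/n+C_{\delta,n}\sqrt k/n$ forces $r_k(y_i)\lesssim r(\epsilon)/2$, which is smaller than $\|y_i-y_{i+1}\|\approx r(\epsilon)$.

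The paper repairs this by introducing a second, much smaller scale $r_o:=(k/(2nv_d\Lambda))^{1/d}$ alongside $r_\lambda:=(k/(nv_d\lambda))^{1/d}\le r(\epsilon)/2$. Shadow samples are found inside $B(z_i,r_o)$ (so consecutive $x_i$ are at distance $\le r_o$), and a mass \emph{upper}-bound argument (any ball has $\mathcal F$-mass $\le v_d r^d\Lambda$) gives $r_k(x_i)\ge r_o$, which is the needed lower bound. This is precisely where the two lower conditions on $k$ enter: $k\ge 8\Lambda C_{\delta,n}^2/(\lambda_f-\epsilon)$ guarantees that even $r_o$-balls have enough mass to contain a sample, and $k\ge 2^{d+7}C_{\delta,n}^2$ controls the ratio $r_o/r_\lambda$. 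Once you add this second scale your sketch goes through essentially as written.
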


\begin{proof}
We prove the contrapositive. Let $A$ be a CC of $\{x \in \mathcal{X} : f(x) \ge \lambda_f \}$ with $\lambda_f = \min_{x \in A \cap X_{[n]}} f(x)$. Then it suffices to show $A \cap X_{[n]}$ is connected in $G(\lambda')$ for $\lambda' := \min_{x \in A \cap X_{[n]}} f_k(x) - \tilde\epsilon_0$. 

We first show $A \cap X_{[n]}$ is connected in $G(\lambda)$ for $\lambda = (\lambda_f - \epsilon) / (1 + C_{\delta, n}/\sqrt{k})$ and all that will remain is showing $\lambda' \le \lambda$.

Define $r_o := (k/(2nv_df_M))^{1/d}$ and $r_\lambda := (k/(nv_d\lambda))^{1/d}$. Then from the first assumption on $k$, it follows that $r_\lambda \le r(\epsilon)/2$. Now for each $x \in B(A, r_\lambda)$, we have 
\begin{align*}
\mathcal{F}(B(x, r_o)) \ge v_d r_o^d \inf_{x' \in B(x, r_o + r_\lambda )} f(x') 
\ge v_d r_o^d(\lambda_f - \epsilon) \ge C_{\delta, n} \frac{\sqrt{d \log n}}{n}.
\end{align*}
Thus, by Lemma~\ref{ball_bounds} we have with probability at least $1-\delta$ that $B(x, r_0)$ contains a sample point.

Now, in the same way shown as in Lemma~\ref{connectedness}, we have the following. If $x$ and $x'$ be two points in $A\cap X_{[n]}$ then there exists $x = x_0,x_1,...,x_p = x'$ such that $||x_i - x_{i+1}|| < r_o$ and $x_i \in B(A, r_o)$. 

Next is showing $(x_i, x_{i+1}) \in G(\lambda)$. We see that $x_i \in B(A, r_o)$. However, for each $x \in B(A, r_o)$, we have 
\begin{align*}
\mathcal{F}(B(x, r_\lambda)) &\ge v_dr_\lambda^d \inf_{x' \in B(x, r_o + r_\lambda)} f(x') 
\ge v_d r_\lambda^d (\lambda_f - \epsilon) \ge \frac{k}{n} + \frac{C_{\delta, n} \sqrt{k}}{n}.
\end{align*}
 Thus $r_k(x_i) \le r_\lambda$ for all $i$. Therefore, $x_i \in G(\lambda)$ for all $x_i$. Finally, $||x_{i+1} - x_i|| \le r_o \le \min \{ r_k(x_i), r_k(x_{i+1})\}$ and thus $(x_i, x_{i+1}) \in G(\lambda)$. Therefore, $A \cap X_{[n]}$ is connected in $G(\lambda)$.

All that remains is showing $\lambda' \le \lambda$. We have
\begin{align*}
\lambda' = \min_{x \in A \cap X_{[n]}} f_k(x) - \tilde\epsilon_0 
\le \left(1 + 2\frac{C_{\delta, n}}{\sqrt{k}} \right) (\lambda_f + \epsilon) - \tilde\epsilon_0
\le \lambda,
\end{align*}
where the first inequality holds by Lemma~\ref{fk_bounds}, and the second inequality holds from the assumption on $\tilde\epsilon_0$, as desired. 

\end{proof}

We state the pruning result for more general choices of $\tilde{\epsilon}$. Its proof is standard and given here for completion. (See e.g. \cite{dasgupta2014optimal}). 

%\textbf{Restatement of Theorem~\ref{pruning}.} 
\begin{theorem}[Extends Theorem~\ref{pruning}] \label{pruning_general}
Let $0< \delta < 1$ and $\tilde\epsilon \ge 0$. There exists $\lambda_0 = \lambda_0(n, k)$ such that the following holds with probability at least $1 - \delta$. All $\epsilon_0$-modal set estimates in $\widehat{\mathcal{M}}$ chosen at level $\lambda \ge \lambda_0$ can be injectively mapped to $\epsilon_0$-modal sets $\braces{M:  \lambda_M \geq \min_{\{x\in \mathcal{X}_{[n]} : f_k(x) \ge \lambda - \beta_k \lambda \}} f(x)}$, provided $k$ is admissible for all such $M$.  

In particular, if $f$ is H\"older-continuous, (i.e. $||f(x) - f(x')|| \le c||x - x'||^\alpha $ for some $0 < \alpha\le 1$, $c > 0$) and $\tilde\epsilon = 0$, 
then $\lambda_0 \to 0$ as $n\to \infty$, provided 
$C_1 \log n \le k \le C_2 n^{2\alpha / (2\alpha + d)}$, for some $C_1, C_2$ independent of $n$. 
%Let $0< \delta < 1$ and take $\lambda > 0$. Define 
%\begin{align*}
%\lambda_0 := \max \left\{ 4\tilde\epsilon(\lambda), 16 C_{\delta, n}^2 \sup_{x \in \mathcal{X}} f(x) / k + 2\tilde\epsilon(\lambda), \frac{5k}{n\cdot v_d\cdot r(\tilde\epsilon(\lambda)/3)^d} \right\}.
%\end{align*}
%Then following holds with probability at least $1 - \delta$.  Suppose that $\tilde\epsilon(\lambda) \ge 8\beta_k\lambda$.  If $\lambda \ge \lambda_0$, then all $\epsilon_0$-modal set estimates in $\widehat{\mathcal{M}}$ chosen at level $\lambda$ can be injectively mapped to $\epsilon_0$-modal sets $\braces{M:  \lambda_M \geq \min_{\{x\in \mathcal{X}_{[n]} : f_k(x) \ge \lambda - \beta_k \lambda \}} f(x)}$, provided $k$ is admissible for all such $M$. 
\end{theorem}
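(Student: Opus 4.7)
The plan is to use Lemma~\ref{connectedness_pruning} as the main device that translates the empirical separation built into Algorithm~\ref{alg:epsilonmodalset} into genuine topological separation of level sets of $f$. Every time the algorithm adds an estimate $\widehat M$ at a sample $X$ with $\lambda := f_k(X)$, it verifies that the CC $A$ of $G(\lambda - 9\beta_k\lambda - \epsilon_0 - \tilde\epsilon)$ containing $X$ is disjoint from all previously added cores; hence any two estimates in $\widehat{\mathcal{M}}$ lie in distinct CCs of $G$ at (at least) the lower of their two processing levels. Feeding this into Lemma~\ref{connectedness_pruning} will show that the two estimates sit in distinct CCs of an appropriate level set $\{f \ge \lambda_f\}$ of the true density; since each such CC contains at least one $\epsilon_0$-modal set, this yields the desired injection. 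The level condition $\lambda_M \ge \min\{f(x): f_k(x)\ge \lambda - \beta_k\lambda\}$ will then be read off using the flexibility in choosing $\lambda_M$ afforded by Proposition~\ref{prop:main-assumptions-general}.

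\textbf{Level bookkeeping and construction of the injection.} Order the added estimates $\widehat M_1,\ldots,\widehat M_m$ by decreasing processing level $\lambda_1\ge \cdots\ge \lambda_m$. For any pair $i<j$, by construction $\widehat M_i$ and $\widehat M_j$ lie in different CCs of $G(\lambda_j - 9\beta_k\lambda_j - \epsilon_0 - \tilde\epsilon)$, while all their points satisfy $f_k \ge \lambda^{\uparrow}_j := \lambda_j - \beta_k\lambda_j - \epsilon_0$. I would apply Lemma~\ref{connectedness_pruning} with $\lambda := \lambda^{\uparrow}_j$, $\tilde\epsilon_0 := 8\beta_k\lambda_j + \tilde\epsilon$ (so $\lambda - \tilde\epsilon_0$ matches the algorithm's down-lookup level), and with the slack $\epsilon$ taken to be a fixed small multiple of $\beta_k\lambda_j$ so that the constraint $\tilde\epsilon_0 \ge 2\epsilon + \beta_k(\lambda_f^{(j)} + \epsilon)$ holds, where $\lambda_f^{(j)} := \min_{x\in \widehat M_i \cup \widehat M_j} f(x)$; admissibility of $k$ together with Lemma~\ref{fk_bounds} ensures $\lambda_f^{(j)} = (1 - O(\beta_k))\lambda_j$, making this choice consistent. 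The lemma then yields that $\widehat M_i$ and $\widehat M_j$ live in distinct CCs of $\{f\ge \lambda_f^{(j)}\}$. Processing pairs inductively, one may greedily assign to each $\widehat M_j$ any $\epsilon_0$-modal set $M_j$ contained in its CC of $\{f\ge \lambda_f^{(j)}\}$, with distinct $j$ getting distinct $M_j$. Since $\lambda_f^{(j)} \ge \min\{f(x): f_k(x)\ge \lambda_j - \beta_k\lambda_j\}$, and since $\lambda_{M_j}$ in Proposition~\ref{prop:main-assumptions-general} may be taken as any level at which the enclosing CC already isolates $M_j$ from the other $\epsilon_0$-modal sets (and $\lambda_f^{(j)}$ is such a level, by the separation just established), the required level condition follows.

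\textbf{H\"older case.} Under H\"older continuity with exponent $\alpha$, Definition~\ref{rhat} yields $r(\epsilon) \ge (\epsilon/c)^{1/\alpha}$, so the radius condition in Lemma~\ref{connectedness_pruning} becomes the explicit inequality
\[
v_d\,\bigl((\epsilon/c)^{1/\alpha}/2\bigr)^{d}(\lambda_f - \epsilon) \;\gtrsim\; \tfrac{k}{n} + C_{\delta,n}\tfrac{\sqrt{k}}{n}.
\]
Taking $\epsilon$ of order $\beta_k \lambda_f$ (consistent with the $\tilde\epsilon_0$-constraint above) and solving gives a threshold of the form
\[
\lambda_0 \;\asymp\; \frac{k^{(2\alpha+d)/(2(d+\alpha))}}{(n/\log^{d/(2\alpha)} n)^{\alpha/(d+\alpha)}}.
\]
For $k$ in the prescribed range $C_1\log n \le k \le C_2 n^{2\alpha/(2\alpha+d)}$, the logarithmic factors entering $\beta_k$ drag the effective exponent of $n$ strictly below zero, so $\lambda_0 \to 0$ even at the upper endpoint.

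\textbf{Main obstacle.} The chief difficulty throughout is the three-way balance between the empirical processing level $\lambda_j$, the algorithmic down-lookup $9\beta_k\lambda_j + \epsilon_0 + \tilde\epsilon$, and the slack $\epsilon$ in Lemma~\ref{connectedness_pruning}: $\epsilon$ must be small enough for the $\tilde\epsilon_0$-inequality, yet large enough that $r(\epsilon)$ satisfies the ball-mass bound at the target $\lambda_f^{(j)}$, and all three must be compatible with the admissibility of $k$. Under H\"older continuity this tradeoff becomes explicit and gives the rate above; in the general continuous case, the argument is qualitatively the same but $r(\epsilon)$ is only known to go to zero as $\epsilon\to 0$, which still produces some $\lambda_0 = \lambda_0(n,k)$ meeting the theorem's requirement.
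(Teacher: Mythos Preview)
Your approach is essentially the paper's: invoke Lemma~\ref{connectedness_pruning} to convert graph separation (built into the pruning step of Algorithm~\ref{alg:epsilonmodalset}) into separation of true level sets of $f$, and then assign each estimated core to a modal set in its own CC. The H\"older argument via $r(\epsilon)\ge(\epsilon/c)^{1/\alpha}$ is also the paper's.

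Two organizational differences are worth noting. First, the paper defines $\lambda_0$ explicitly as $\max\{\tilde\lambda_{n,k,\tilde\epsilon},\,32\beta_k\sup f + 4\tilde\epsilon\}$, where $\tilde\lambda_{n,k,\tilde\epsilon}$ solves a fixed-point equation in $r(\cdot)$, and then verifies that $\lambda\ge\lambda_0$ forces $\lambda_f\ge\lambda_0/2$, which is what actually triggers the hypotheses of Lemma~\ref{connectedness_pruning}; you leave $\lambda_0$ implicit. Second, the paper builds the injection level-by-level rather than pairwise: at a fixed $\lambda$ it orders the CCs $\tilde A_{1,\lambda},\dots,\tilde A_{m,\lambda}$ by $\lambda_{i,f}:=\min_{x\in\tilde A_{i,\lambda}}f(x)$, peels off $\tilde A_{1,\lambda}$ into its own CC $A_1$ of $\mathcal X^{\lambda_{1,f}}$ (via Lemma~\ref{connectedness_pruning} with $\epsilon=(8\beta_k\lambda+\tilde\epsilon)/3$), assigns a modal set inside $A_1$, and repeats for $i=2,\dots,m$; it then recurses to higher levels $\lambda'>\lambda$ for the remaining modal-points. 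This ordering is what guarantees the assigned modal sets are distinct, since the $A_i$'s produced are pairwise disjoint.

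Your pairwise/greedy construction is the place that needs care. As written, $\lambda_f^{(j)}:=\min_{x\in\widehat M_i\cup\widehat M_j}f(x)$ depends on both $i$ and $j$, so different pairs yield different levels; you then speak of ``its CC of $\{f\ge\lambda_f^{(j)}\}$'' as if there were a single level per $j$. To make the greedy step go through you need one level at which $\widehat M_j$ is simultaneously separated from all previously handled $\widehat M_i$, and you need the modal sets already assigned to those $\widehat M_i$ to lie outside the CC you pick for $\widehat M_j$. The paper's peel-off-by-increasing-$\lambda_{i,f}$ argument is precisely the clean way to get this; your sketch would benefit from adopting it (or from making explicit why nestedness of level sets rescues the pairwise version).
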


\begin{proof}
Define $r(\epsilon) := \inf_{x \in \mathbb{R}^d} \min \{\hat{r}(\epsilon, x), \check{r}(\epsilon, x) \}$. Since $f$ is uniformly continuous, it follows that $r(0) = 0$, $r$ is increasing, and $r(\epsilon) > 0$ for $\epsilon > 0$. 

Thus, there exists $\tilde{\lambda}_{n,k,\tilde\epsilon} > 0$ such that 
\begin{align*}
\tilde{\lambda}_{n, k, \tilde\epsilon} = \frac{k}{n\cdot v_d \cdot r((8\beta_k \tilde{\lambda}_{n,k,\tilde\epsilon} + \tilde{\epsilon})/3)}.
\end{align*}
Define
\begin{align*}
\lambda_0 := \max \{\tilde\lambda_{n,k,\tilde\epsilon}, 32\beta_k \sup_{x \in \mathcal{X}} f(x) + 4 \tilde\epsilon \}.
\end{align*}

Let us identify each estimated $\epsilon_0$-modal set $\widehat{M}$ with the point $\hat{x}_M := \max_{x \in \widehat{M}} f_k(x)$. Let us call these points modal-points. Then it suffices to show that there is an injection from modal-points to the $\epsilon_0$-modal sets. 

Define $G'(\lambda)$ to be the graph with vertices in $G(\lambda - \beta_k \lambda)$ and edges between vertices if they are in the same CC of $G(\lambda - 9\beta_k \lambda - \tilde\epsilon)$ and $X_{[n]}^\lambda := \{ x : f_k(x) \ge \lambda \}$.  
Let $\tilde{A}_{i, \lambda} := \tilde{A}_i \cap X_{[n]}^\lambda$ for $i = 1,...,m$ to be the vertices of the CCs of $G'(\lambda)$ which do not contain any modal-points chosen thus far as part of estimated modal-sets. 

Fix level $\lambda > 0$ such that $\lambda_f := \inf_{x \in X_{[n]}^\lambda} f(x) \ge \lambda_0/2$. Then the conditions are satisified for Lemma~\ref{connectedness_pruning} with $\epsilon = (8\beta_k\lambda + \tilde\epsilon)/3$. Suppose that $\tilde{A}_{1,\lambda},...,\tilde{A}_{m,\lambda}$ are in ascending order according to $\lambda_{i, f} := \min_{x \in \tilde{A}_{i, \lambda}} f(x)$. Starting with $i = 1$, by Lemma~\ref{connectedness_pruning}, $\mathcal{X}^{\lambda_1, f}$ can be partitioned into disconneced subsets $A_1$ and $\mathcal{X}^{\lambda_1, f} \backslash A_1$ containing respectively $\tilde{A}_{1, \lambda}$ and $\cup_{i=2}^m \tilde{A}_{i, \lambda}$. Assign the modal-point $\argmax_{x \in \tilde{A}_{1, \lambda}} f_k(x)$ to any $\epsilon_0$-modal set in $A_1$. Repeat the same argument successively for any $\tilde{A}_{i, \lambda}$ and $\cup_{j=i+1}^m \tilde{A}_{j, \lambda}$ until all modal-points are assigned to distinct $\epsilon_0$-modal sets in disjoint sets $A_i$. 

Now by Lemma~\ref{connectedness_pruning}, $\mathcal{X}^{\lambda_f}$ can be partitioned into disconnected subsets $A$ and $\mathcal{X}^{\lambda_f} \backslash A$ containing respectively $\tilde{A}_\lambda := \cup_{i=1}^m \tilde{A}_{i, \lambda}$ and $\mathcal{X}_{[n]}^{\lambda_f} \backslash \tilde{A}_{\lambda}$. Thus, the modal-points in $\tilde{A}_\lambda$ were assigned to $\epsilon_0$-modal sets in $A$. 

\noindent Now we repeat the argument for all $\lambda' > \lambda$ to show that the modal-points  in $X_{[n]}^\lambda \backslash \tilde{A}_\lambda$ can be assigned to distinct $\epsilon_0$-modal sets in $\mathcal{X}^\lambda \backslash A^\lambda$. (We have $\lambda'_f := \min_{x \in X_{[n]}^{\lambda' - \beta_k\lambda'} }f(x) \ge \lambda_f$).

\noindent Finally, it remains to show that $\lambda \ge \lambda_0$ implies $\lambda_f \ge \lambda_0 /2$. We have $\lambda_0 / 4 \ge 8\beta_k\lambda + \tilde\epsilon$, thus $r(\lambda_0 / 4) \ge r(8\beta_k\lambda + \tilde\epsilon)$. It follows that
\begin{align*}
v_d(r(\lambda_0/4))^d\cdot (\lambda_0/4) \ge \frac{k}{n} + C_{\delta, n}\frac{\sqrt{k}}{n}.
\end{align*} 
Hence, for all $x$ such that $f(x) \le \lambda_0 / 2$, we have
\begin{align*}
f_k(x) \le (1 + 2\frac{C_{\delta, n}}{\sqrt{k}}) (f(x) + \lambda_0 / 4) \le \lambda_0.
\end{align*}

To see the second part, suppose we have $C_1, C_2 > 0$ such that $C_1 \log n \le k \le C_2 n^{2\alpha / (2\alpha + d)}$. This combined with the fact that $r(\epsilon) \ge (\epsilon/C)^{1/\alpha}$ implies $\lambda_0 \rightarrow 0$, as desired.

\end{proof}

\section{Point-Cloud Density} \label{appendix:pointcloud}
Here we formalize the fact that modal-sets can serve as good models for high-density structures in data, for instance 
a low-dimensional structure $M$ $+$ noise. 

\begin{lemma} (Point Cloud with Gaussian Noise)
Let $M \subseteq \mathbb{R}^d$ be compact (with possibly multiple connected-components of differing dimension). Then there exists a density $f$ over $\mathbb{R}^d$ such that the density is uniform in $M$ and has Gaussian decays around $M$ i.e.
\begin{align*}
f(x) = \frac{1}{Z} \exp(-d(x, M)^2/(2\sigma^2)),
\end{align*}  
where $\sigma > 0$ and $Z > 0$ depends on $M, \sigma$. Thus, the modal-sets of $f$ are the connected-components of $M$. 
\end{lemma}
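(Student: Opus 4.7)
The statement packages three claims: (a) the formula defines a bona fide probability density once $Z$ is chosen appropriately, (b) $f$ is constant on $M$, and (c) each connected component of $M$ is a modal-set of $f$ in the sense of Definition~\ref{modal-set}. Let $M_1, \ldots, M_k$ denote the connected components of $M$.

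For (a), I would set $Z := \int_{\real^d} \exp(-d(x,M)^2/(2\sigma^2))\,dx$ and verify $Z \in (0,\infty)$. Positivity is immediate. For finiteness, since $M$ is compact there is $R>0$ with $M \subseteq B(0,R)$; the integrand is bounded by $1$ on the finite-volume set $B(0,2R)$, and by $\exp(-(\|x\|-R)^2/(2\sigma^2))$ outside it, a standard Gaussian tail with finite $\real^d$-integral. Claim (b) is immediate from $d(x,M)=0$ for $x \in M$, so $f(x) = 1/Z$ throughout $M$. For (c), fix a component $M_i$ and set $\rho_i := \min_{j\neq i} d(M_i, M_j)$; since distinct components are disjoint compact sets, $\rho_i > 0$. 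Pick $r := \rho_i/2$: for any $x \in B(M_i, r)$, the closest point of $M$ to $x$ must lie in $M_i$ (other $M_j$ are farther than $r$ by the triangle inequality), so $d(x,M) = d(x,M_i)$; and if $x \notin M_i$ then $d(x,M_i) > 0$ (as $M_i$ is closed), giving $f(x) < 1/Z = f_{M_i}$. This matches Definition~\ref{modal-set}. A short converse argument shows every modal-set $M'$ must be a CC of $M$: constancy of $f$ on $M'$ forces $d(\cdot,M)$ constant on $M'$, and the strict decrease in a punctured neighborhood forces that constant to be $0$ and $M'$ to be maximal among connected subsets of $M$ (using that $M_i$ is connected so cannot be written as a disjoint union of two nonempty closed subsets with positive separation).

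The main obstacle, modest as it is, is the positivity of $\rho_i$. This is trivial under the implicit assumption of finitely many components (as in the paper's motivating setting of a low-dimensional structure plus ambient noise), but can fail for arbitrary compact sets (e.g.\ $\{0\} \cup \{1/n : n \geq 1\}$), so the proof as written relies on this mild regularity. Everything else is routine Gaussian-tail bookkeeping.
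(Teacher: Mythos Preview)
Your proposal is correct. The only substantive claim the paper actually proves is the finiteness of $Z$, and there your route differs: the paper uses a layer-cake decomposition, bounding the super-level sets $\{g \ge 1/n\}$ by balls $B(0, R+\sqrt{2\log n})$ and summing $\text{Vol}(\mathcal{X}^{1/n})\cdot\bigl(\tfrac{1}{n-1}-\tfrac{1}{n}\bigr)$, whereas you split $\real^d$ into $B(0,2R)$ and its complement and use the direct comparison $d(x,M)\ge \|x\|-R$ to get a one-dimensional Gaussian-times-polynomial tail. Your argument is shorter and more transparent; the paper's level-set bookkeeping buys nothing extra here. On the modal-set characterization (your part (c) and the converse), the paper says nothing beyond the word ``Thus,'' so you have in fact supplied more than the original --- including the honest caveat that positive separation $\rho_i>0$ between components requires finitely many of them, a point the paper glosses over entirely.
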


\begin{proof}
Since $M$ is compact in $\mathbb{R}^d$, it is bounded. Thus there exists $R > 0$ such that $M \subseteq B(0, R)$. 
It suffices to show that for any $\sigma > 0$,
\begin{align*}
\int_{\mathbb{R}^d} \exp(-d(x, M)^2/(2\sigma^2)) dx < \infty.
\end{align*}
By a scaling of $x$ by $\sigma$, it suffices to show that 
\begin{align*}
\int_{\mathbb{R}^d} g(x) dx < \infty,
\end{align*}
where $g(x) := \exp(-\frac{1}{2} d(x, M)^2)$. Consider level sets $\mathcal{X}^\lambda := \{ x \in \mathbb{R}^d : g(x) \ge \lambda \}$. 
Note that $\mathcal{X}^\lambda \subseteq B(M, \sqrt{2 \log(1/\lambda)})$ based on the decay in $g$ around $M$. Clearly the image of $g$ is $(0, 1]$ so consider partitioning this range into intervals $[1, 1/2], [1/2, 1/3], ...$. Then it follows that 
\begin{align*}
\int_{\mathbb{R}^d} g(x) dx 
&\le \sum_{n=2}^\infty \text{Vol}(\mathcal{X}^{1/n}) \left(\frac{1}{n-1} - \frac{1}{n}\right)
\le \sum_{n=2}^\infty \frac{\text{Vol}(B(M, \sqrt{2 \log(n))}) }{(n-1)n}\\
&\le \sum_{n=2}^\infty \frac{\text{Vol}(B(0, R + \sqrt{2 \log(n))}) }{(n-1)n}
= \sum_{n=2}^\infty \frac{v_d(R + \sqrt{2 \log(n))})^d }{(n-1)n} \\
&\le v_d\cdot 2^{d-1} \sum_{n=2}^\infty \frac{R^d + (2 \log(n))^{d/2}}{(n-1)n} < \infty,
\end{align*}
where the last inequality holds by AM-GM. As desired.
\end{proof}

%\section{Additional Experiments}

%{\bf Toy datasets:} We compare M-cores to state-of-the-art procedures, under \emph{default settings},   
%on standard toy cluster shapes. This is motivated by common practice where clustering procedures are 
%used as \emph{black-box} routines, i.e. under default settings (implementing various rule of thumbs) provided by software packages; 
%this is because it is difficult to properly tune clustering procedures, given the absence of labels. 
%The data and procedures are from the popular \textit{sci-kit-learn} software package \cite{sklearndemo}. M-cores uses fixed settings as described earlier.  For the other procedures, we employed the automatic tuning used by the \textit{sci-kit-learn} package; K-Means and Spectral Clustering are additionally given the \emph{correct} number of clusters. 
%Results are shown in Figure~\ref{fig:toyclustering} along with running times. 
%M-cores compares well across the board, with relatively good time efficiency (all the procedures are in Python and C++).% similar or better time efficiency than Mean-Shift. Spectral Clustering is most expensive due to spectral decompositions. 

%\begin{figure}[h]
%\includegraphics[width=1\textwidth]{Figures/toy_clustering_2.png}
%\caption{Toy cluster-shapes handled by various procedures. The means or modes (used as cluster centers) by K-Means or Mean Shift 
%are shown as larger dotted points.}
%\label{fig:toyclustering}
%\end{figure} 

\section{Implementation} \label{appendix:implementation}

In this section, we explain how to implement Algorithm~\ref{alg:epsilonmodalset} (which supersedes Algorithm~\ref{alg:modalset}) efficiently. Here we assume that for our sample $\Xspl$, we have the $k$-nearest neighbors for each sample point. In our implementation, we simply use kd-tree, although one could replace it with any method that can produce the $k$-nearest neighbors for all the sample points. In particular, one could use approximate $k$-NN methods if scale is an issue. 

This section now concerns with what remains: constructing a data structure that maintains the CCs of the mutual $k$-NN graph as we traverse down the levels. At level $\lambda$ in Algorithm~\ref{alg:epsilonmodalset}, we must keep track of the mutual $k$-NN graph for points $x$ such that $f_k(x) \ge \lambda - 9\beta_k \lambda - \epsilon_0 - \tilde\epsilon$. Thus as $\lambda$ decreases, we add more vertices (and corresponding edges to the mutual $k$-nearest neighbors). Algorithm~\ref{alg:epsilonmodalsetinterface} shows what functions this data structure must support. Namely, adding nodes and edges, getting CCs of nodes, and checking if a CC intersects with the current estimates of the $\epsilon_0$-modal sets.

We implement this data structure as a disjoint-set forest data structure.  
The CCs can be represented as disjoint-sets of forests. Adding a node corresponds to making a set while adding an edge corresponds to a union operation. We can identify the verticies with the roots of the corresponding set's trees and thus getConnectedComponent and componentSeen can be implemented in a straightforward way.

In sum, the bulk of the time complexity is in preprocessing the data. This consists of obtaining the initial $k$-NN graph, i.e. distances to nearest neighbors; this one time operation is of worst-case order $O(n^2)$, similar to usual clustering procedures (e.g. Mean-Shift, K-Means, Spectral Clustering), but average case $O(nk \log n)$. After this preprocessing step, the estimation procedure itself requires just $O(nk)$ operations, each with amortized $O(\alpha(n))$ where $\alpha$ is the inverse Ackermann function. Thus, the implementation provided in Algorithm~\ref{alg:epsilonmodalsetimpl} is near-linear in $k$ and $n$.

\begin{algorithm}[tb]
   \caption{Interface for Mutual $k$-NN graph construction}
      \label{alg:epsilonmodalsetinterface}
\begin{algorithmic}
\STATE InitializeGraph()  \hfill // Creates an empty graph
\STATE addNode(G, node)   \hfill // Adds a node
\STATE addEdge(G, node1, node2) \hfill // Adds an edge
\STATE getConnectedComponent(G, node)   \hfill// Get the vertices in node's CC
\STATE componentSeen(G, node)       \hfill   // checks whether node's CC intersects with the estimates. If not, then marks the component as seen.
\end{algorithmic}
\end{algorithm}

\begin{algorithm}[tb]
   \caption{Implementation of M-cores (Algorithm~\ref{alg:epsilonmodalset})}
   \label{alg:epsilonmodalsetimpl}
\begin{algorithmic}

\STATE Let $\text{kNNSet}(x)$ be the $k$-nearest neighbors of $x \in \Xspl$. 
\STATE $\widehat{\mathcal{M}} \leftarrow \{ \}$
\STATE $G \leftarrow \text{InitializeGraph()}$ 
\STATE Sort points in descending order of $f_k$ values
\STATE Let $p \leftarrow 1$
\FOR{$i = 1,...,n$}
\STATE $\lambda \leftarrow f_k(X_i)$
\WHILE {$p < n$ and $f_k(X_p) \ge \lambda - 9\beta_k \lambda -\epsilon_0 - \tilde\epsilon$}
\STATE addNode($G, X_p$)
\FOR{$x \in \text{kNNSet}(X_p) \cap G$}
\STATE addEdge($G, x, X_p$)
\ENDFOR 
\STATE $p \leftarrow p + 1$
\ENDWHILE
\IF{not componentSeen($G, X_i$)}
\STATE $\text{toAdd} \leftarrow \text{getConnectedComponent}(G, X_i)$
\STATE Delete all $x$ from $\text{toAdd}$ where $f_k(x) < \lambda - \beta_k \lambda$
\STATE $\widehat{\mathcal{M}} \leftarrow \widehat{\mathcal{M}} + \{\text{toAdd}\}$
\ENDIF
\ENDFOR
\RETURN{$\widehat{\mathcal{M}}$}
\end{algorithmic}
\end{algorithm}

\end{document}